\def\eqref#1{equation~\ref{#1}}
\def\1{\bm{1}}
\DeclareMathAlphabet{\mathsfit}{\encodingdefault}{\sfdefault}{m}{sl}
\SetMathAlphabet{\mathsfit}{bold}{\encodingdefault}{\sfdefault}{bx}{n}
\newcommand{\R}{\mathbb{R}}
\theoremstyle{plain}
\newtheorem{theorem}{Theorem}[section]
\newtheorem{proposition}[theorem]{Proposition}
\newtheorem{lemma}[theorem]{Lemma}
\newtheorem{corollary}[theorem]{Corollary}
\theoremstyle{definition}
\newtheorem{definition}[theorem]{Definition}
\theoremstyle{remark}
\title{f-INE: A Hypothesis Testing Framework for Estimating Influence under Training Randomness}
\author{%
  Subhodip Panda\thanks{Corresponding Author: subhodipp@iisc.ac.in} \\
  Department of ECE\\
  Indian Institute of Science\\
  Bangalore, India\\
  \And
  Dhruv Tarsadiya \\
  Department of Computer Science\\
  University of Southern California \\
  Los Angeles, USA \\
  \And
  Shashwat Sourav \\
  Department of Physics \\
  Washington University \\
  St. Louis, USA \\
  \And
  \hspace{1.5cm}Prathosh A.P. \\
  \hspace{1.5cm}Indian Institute of Science\\
  \hspace{1.5cm}LatentForce.ai \\
  \hspace{1.5cm}Bangalore, India\\
  \And
  Sai Praneeth Karimireddy \\
  Department of Computer Science\\
  University of Southern California \\
  Los Angeles, USA\\
}
\newif\ifacknowledgements
\newcommand{\llama}{Llama-3.1-8B\xspace}
\begin{document}

\maketitle

\begin{abstract}
Influence estimation methods promise to explain and debug machine learning by estimating the impact of individual samples on the final model. Yet, existing methods collapse under training randomness: the same example may appear critical in one run and irrelevant in the next. Such instability undermines their use in data curation or cleanup since it is unclear if we indeed deleted/kept the correct datapoints. To overcome this, we introduce \emph{f-influence} -- a new influence estimation framework grounded in hypothesis testing that explicitly accounts for training randomness, and establish desirable properties that make it suitable for reliable influence estimation.
We also design a highly efficient algorithm \underline{f}-\underline{IN}fluence \underline{E}stimation (\textbf{f-INE}) that computes f-influence \textbf{in a single training run}. Finally, we scale up f-INE to estimate influence of instruction tuning data on \llama and show it can reliably detect poisoned samples that steer model opinions,
demonstrating its utility for data cleanup and attributing model behavior.

% Understanding the influence of individual or groups of training examples on a model's predictions lies at the core of data valuation, attribution, debugging, and privacy.
% Yet, due to the inherent training randomness of modern ML training pipelines, reliably measuring this influence remains elusive. In fact, previous works have shown that existing influence estimation methods fail to account for this training randomness, leading to highly inconsistent results - small changes in random seed, data ordering, etc. can result in drastically different influence estimates. Thus, we introduce \emph{f-influence} -- a new definition of influence grounded in hypothesis testing that explicitly captures this training-time randomness.
% We define the influence of a data subset as the statistical ease of distinguishing between the outputs of models trained with and without that subset. We prove that \textit{f-influence} satisfies desirable properties such as compositionality and asymptotic normality, analogous to central limit theorems. Leveraging these properties, we design a new algorithm \underline{f}-\underline{IN}fluence \underline{E}stimation (\textbf{f-INE}) algorithm that efficiently estimates the influence of training data \textbf{in a single training run}. 
% % Our approach is a theoretically sound, scalable, and empirically robust alternative to prior methods, enabling consistent influence estimation.
% \textcolor{red}{We scale up f-INE to estimate influence of instruction tuning data on Llama 8B a...(replace with quantitative LLM result)} 
\end{abstract}

\vspace{-1em}
\section{Introduction}
\begin{figure}[!htbp]
    \centering
    \includegraphics[width=0.75\linewidth]{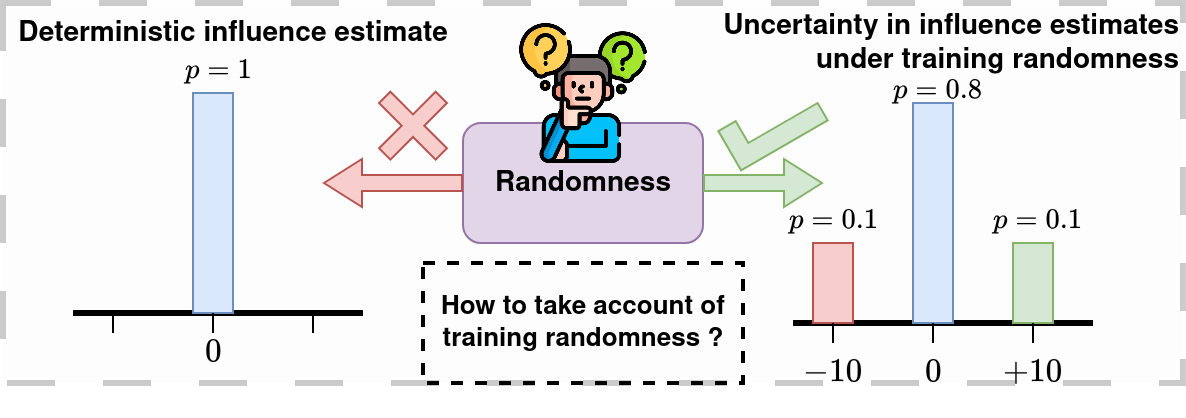}\vspace{-1em}
    \caption{Test losses on specific data points vary significantly across training runs due to intrinsic non-determinism in ML pipelines. Consequently, influence scores derived from such losses also inherit randomness. Decisions based on a single run -- such as deleting seemingly low-influence data may prove suboptimal in subsequent runs, potentially causing unexpected performance drops. Thus, a key challenge is how to properly account for training randomness in influence estimation.}   \label{fig-1}%\vspace{-1em}
\end{figure}
Training data is the fuel that drives the superior performance of various machine learning and deep learning models. Each sample in the training dataset affects the prediction of the model~\citep{explain-auditing,influence-data,infl-function}. Thus, estimating the data influence serves as an important tool for enhancing the explainability~\citep{explain-visual,explain-redesingning} and debugging~\citep{explain-auditing,explain-trust} of complex classification models and as well as large-scale generative models such as Large Language Models (LLMs). Hence, estimating the influence of training samples on model predictions emerges as a fundamental problem. Data Attribution~\citep{data-attribution-survey} is an important research domain that specifically tries to solve this problem. One widely used approach of measuring data influence is through Leave-One-Out-Data (LOOD) retraining, which quantifies the effect of removing a single datum from the whole training dataset. Being prohibitively expensive, current methods~\citep{infl-function,tracein,LESS,trak} for influence estimation essentially propose several computationally efficient methods to estimate LOOD retraining. However, as noted in prior work~\citep{jordan2023variance, revisiting-influence, wang2023data_banzhaf}, current methods are extremely sensitive to training randomness stemming from factors such as random seeding, weight initialization, batch size, data shuffling/sampling, etc. But robustness to training randomness is essential because influence estimation is generally employed to identify beneficial or harmful datapoints. Inconsistent scores mean that we have no guarantee that removing influential examples will change our training model in predictable ways. 
This unreliability fundamentally arises because these methods don't account for training randomness as shown in Figure~\ref{fig-1}. This motivates our central question:%\vspace{-0.5em}
\begin{center}
    \emph{How to define influence scores that are useful for decision-making even under randomness?}
\end{center}%\vspace{-0.5em}

\begin{figure}[t!]
    \centering
    \begin{tabular}{cc}
        % \includegraphics[width=0.45\linewidth]{figures/seeds_influence.png}  &
        % \includegraphics[width=0.45\linewidth]{figures/seeds_consistency.png}  \\
        % (a) \small{Influence scores for different random seeds} & (b) \small{Consistency scores for different random seeds} \\
        \includegraphics[width=0.4\linewidth]{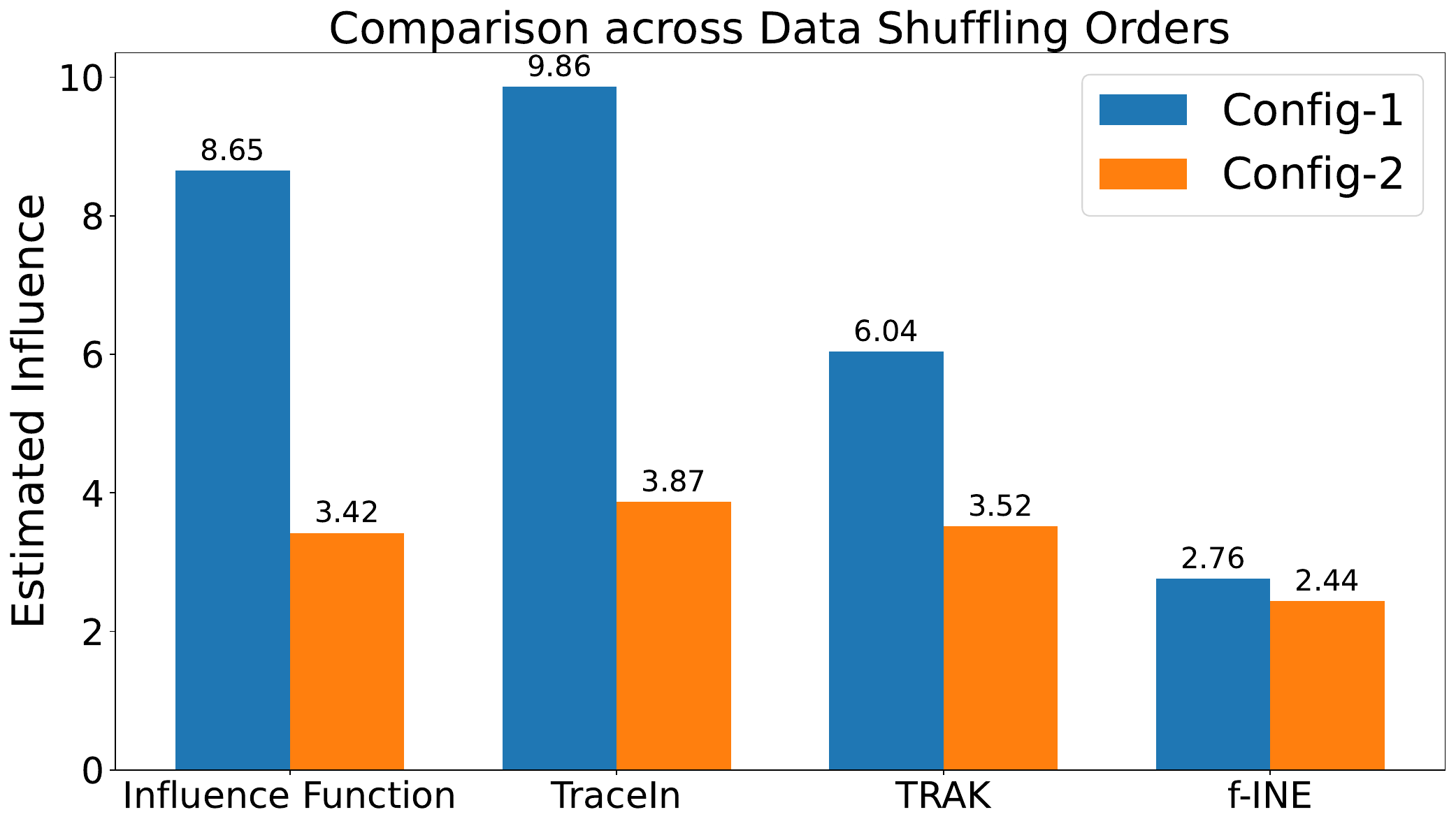}  &
        \includegraphics[width=0.4\linewidth]{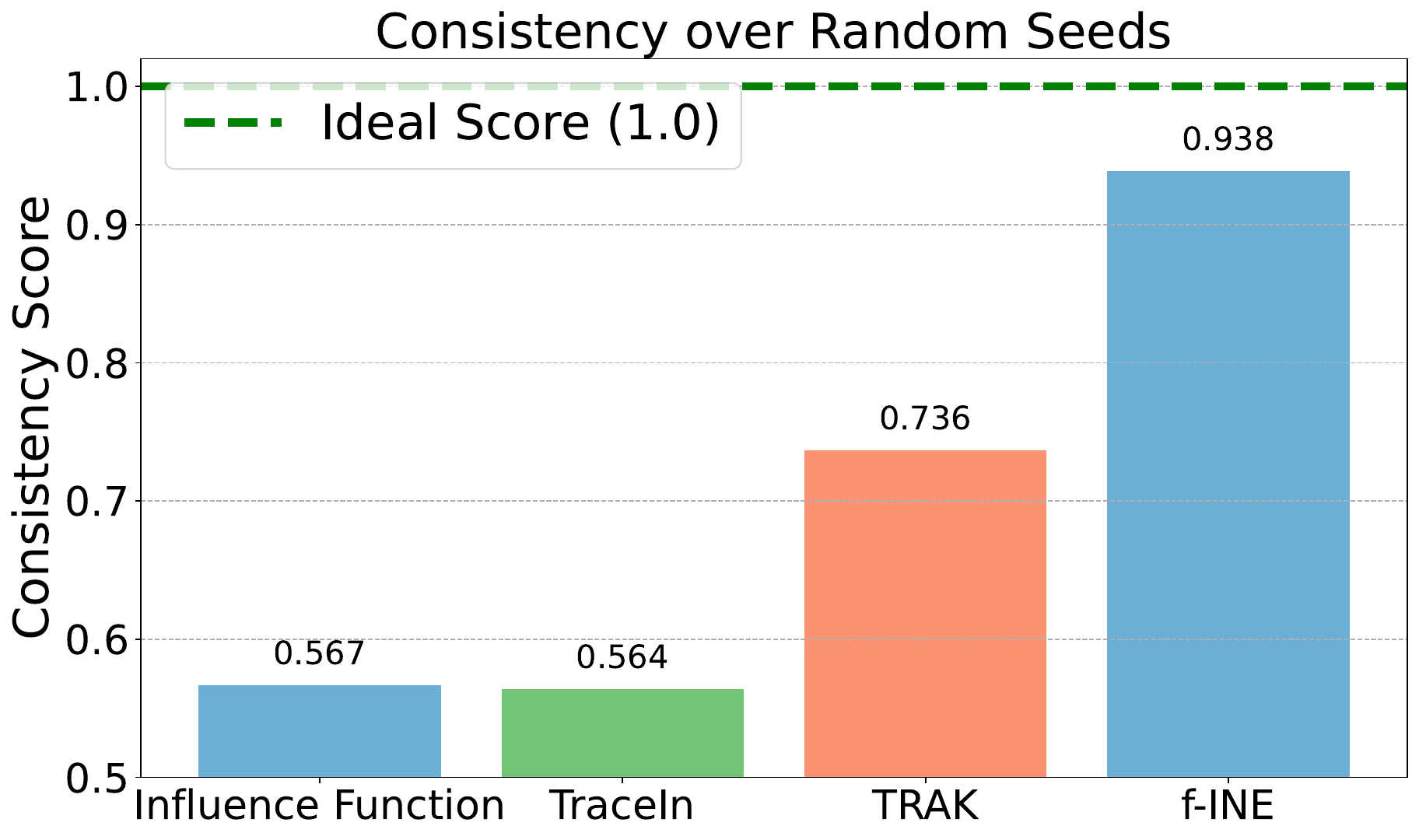} \\
        (a) \small{Influence scores for data shuffling} & (b) \small{Consistency scores for data shuffling} \\
    \end{tabular}
    \caption{(In)consistency of influence scores across multiple random seeds. Existing approaches such as Influence Functions, TRAK, and TraceIn exhibit significant variability due to sensitivity to data shuffling. This leads to low consistency scores. In contrast, our proposed method, f-INE, achieves a much higher consistency score, demonstrating robustness to training randomness.}
    \label{fig1}\vspace{-1.5em}
\end{figure}
\textbf{Inconsistency in influence scores.} Figure~\ref{fig1} shows that Influence-Functions~\citep{infl-function}, TraceIn~\citep{tracein}, and  TRAK~\citep{trak} are inconsistent under the randomness induced by data shuffling. We measure consistency using the average Jaccard similarity of the selected sets across multiple training runs of an algorithm. For a set of runs $R$, we compute our consistence score as $(1 - \binom{R}{2}^{-1} \sum_{i,j \in R} J(I(\mathcal{A}^i), I(\mathcal{A}^j)) )$.
% 
% $\mathcal{A}$ on a set of random seeds $R$ as follows: $PC(\mathcal{A}, R) = 1 - \frac{1}{\binom{R}{2}} \sum_{i,j \in R} J(I(\mathcal{A}^i), I(\mathcal{A}^j))$ where $J(a,b)=\frac{|a-b|}{|a+b|}$. Here $J$ is the Jaccard Index, $I(\mathcal{A}^i)$ and $I(\mathcal{A}^j)$ are estimated influence using algorithm $\mathcal{A}$ initialized with random seeds $i \in R$ and $j \in R$ respectively. 
The consistency score lies in $[0,1]$, with 1 indicating perfect consistency. We train an MLP model on a subset of MNIST under two data loader configurations (Config-1 and Config-2) that differ only in the order of the first two class-1 samples, while the order of the other samples remains unchanged. We observe large discrepancies in the influence scores of the first class-1 sample across these two configurations. In Config-1, the first class-1 sample seen early during training is assigned a high influence, whereas in Config-2, seen later, it receives a much lower score. Figure~\ref{fig1}.(b) runs multiple seeds and shows a similar trend in influence scores. The exception is our proposed \textbf{f-INE} algorithm that is mostly consistent. 
% We contend that the non-robustness of existing methods arises from their definition of data influence, as they rely on a single training trajectory and overlook variability introduced by training randomness. Thus, our study addresses this challenge of incorporating training randomness in influence estimation algorithms.

\textbf{Our approach.} To take training randomness into account, we propose a new definition of influence termed as \textit{f-influence}. Our proposed \underline{f}-\underline{IN}fluence \underline{E}stimation (\textbf{f-INE}) algorithm computes the influence of a particular data point as the hardness of testing between two hypotheses or distributions. The first distribution is computed by estimating the distribution of the gradient dot-product between the test data and the full training dataset. The second distribution is computed by estimating the distribution of the gradient dot-product between the test data and the training data after removing the particular data point. Essentially, the influence of particular data is nothing but how easily one can differentiate between these two distributions. As influence is estimated on a distributional level, our method inherently captures training randomness. Our contribution can be summarized as follows:
\begin{itemize}
    \item To incorporate the training randomness into current influence estimation methods, we introduce a new definition of influence termed as \textit{f-influence}. This new definition of influence is motivated by privacy auditing and is grounded in hypothesis testing and explicitly captures training-time randomness. Thus, our primary contribution lies in establishing this connection between influence estimation and auditing differential privacy (DP).
    \item Using this connection to DP, we prove \textit{f-influence} demonstrates useful properties such as composition and asymptotic normality. We then leverage these to design a highly scalable and efficient algorithm to estimate \textit{f-influence} in a \textbf{single training run}.
    % in a \emph{single training run}. 
    % Thus, we establish a non-trivial bridge between privacy auditing techniques and influence estimation methods.
    \item We scale our proposed \underline{f}-\underline{IN}fluence \underline{E}stimation (\textbf{f-INE}) algorithm to perform data selection for \llama. We test its ability on data poisoning for opinion steering, and show that it can reliably identify training samples that are influential in steering the LLM's opinion.
\end{itemize}

\textbf{Problem setup.}
Let $\mathcal{D}=\{z_i\}^n_{i=1}$ denote the training dataset of $n$ samples, where each training datum $z_i$ is sampled i.i.d. from some unknown distribution. A model parameterized by $\theta$ is optimized using a randomized algorithm (e.g., SGD) $\mathcal{A}: Z^n \rightarrow \Theta$ to achieve the trained model $\theta^*$. Consider $\Theta$ to be the parameter space, and $l(\theta,z_i)$ denotes the loss of the model $\theta$ on the training datum $z_i$. Our objective is to estimate the influence of a training data subset $\mathcal{S} \subseteq \mathcal{D}$ on the prediction of a test datum $z_{test}$.  Let's consider the influence estimation function $\Psi_\mathcal{A}: \mathcal{Z} \times \mathcal{Z}^m \rightarrow \mathds{R}$ takes a test datum $z_{test}$, and a subset of training data $\mathcal{S}$ to produce a score that denotes the influence of $\mathcal{S}$ on the model's prediction on $z_{test}$. It is important to mention that this estimated influence is dependent on the algorithm $A$. However, for notational simplicity, we simply denote it as $\Phi(z_{test},\mathcal{S})$.

\section{Hypothesis Testing Framework for Influence Estimation}
% \subsection{Hypothesis Testing for Decision Making under Randomness}
Given that training randomness and non-determinism are unavoidable and inherent to ML training pipelines~\citep{jordan2023variance}, how can we make decisions about which data points might be harmful and should be deleted or helpful and kept? Our key insight here is that this question can be re-framed as: if I delete a suspected harmful datapoint and re-run my training, will the decrease in loss be \emph{statistically significant} compared to what I would expect from just the training randomness? If so, I'd better delete the datapoint, and we can deem it (negatively) influential. This naturally lends itself to a hypothesis-testing-based definition of influence.

\begin{definition}[Informal: hypothesis testing based influence]\label{def:informal-hyp}
    Given a dataset $\mathcal{D}$ and a subset $\mathcal{S} \subseteq \mathcal{D}$, delete $\mathcal{S}$ from $\mathcal{D}$ with probability 0.5, run multiple training runs, and measure the distribution of test statistic $\ell$. We say $\mathcal{S}$ is influential on $\ell$ if we can reject the null in the hypothesis test:
\begin{center}\vspace{-0.5em}
    $H_0$ : we trained on $\mathcal{D}$ \quad vs. \quad $H_1$ : we trained on $\mathcal{D} \setminus \mathcal{S}$.\vspace{-0.5em}
\end{center}
\end{definition}

\begin{wrapfigure}{r}{0.4\textwidth}\vspace{-1em}
  \centering
  \includegraphics[width=\linewidth]{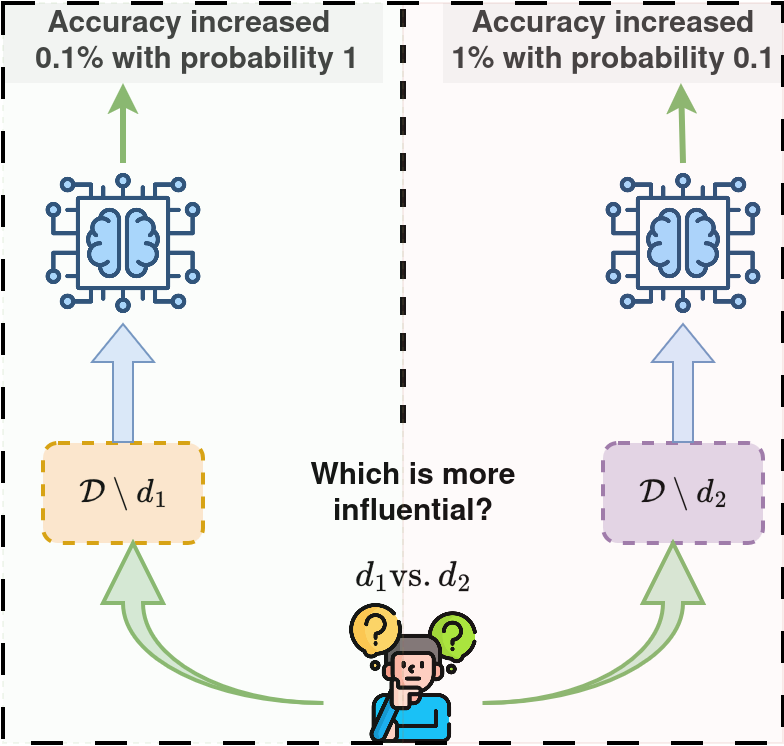}%lets keep the original increase
  \caption{Lack of total ordering in influence under training randomness: removing $d_1$ always decreases accuracy by $0.1\%$, while removing $d_2$ increases accuracy by $1\%$ but only with probability $0.1$. Both have the same mean influence, yet it is unclear which one is more influential. This problem arises as there is a lack of total order in defining data influence under training randomness.}
  \vspace{-4.5em}
  \label{fig:motivation}
\end{wrapfigure}%\vspace{-1em}
The ease with which we can reject the null measures how influential the particular data point was. This is because not being able to reject the null implies that even if we delete $\mathcal{S}$, it will likely have no statistically significant effect on $\ell$ and so we wouldn't miss it. On the other hand, if we are able to very easily reject the null, this means that deleting $\mathcal{S}$ has a significantly higher than random effect on $\ell$ and we better pay attention to it.  This definition also clearly ties influence estimation with membership inference attacks from privacy auditing~\citep{shokri2017membership} and f-Differential Privacy~\citep{G-DP}. To flesh out the definition above, we still have to assign a sign (positive vs. negative influence) and precisely quantify `ease of rejecting null'.

\subsection{Lack of Total Ordering of Influence}
Training randomness poses fundamental challenges to defining influence. Consider the case outlined in Fig~\ref{fig:motivation} where we are given two suspected harmful datapoints $d_1$ and $d_2$. Removing $d_1$ results in an accuracy increase of $0.1\%$ with probability $1$, while removing $d_2$ yields an accuracy increase of $1\%$ with probability $0.1$. Which data-point should we deem more (negatively) influential and delete? 

If we examine the expected change, we would say both are equally influential and delete either. However, this is not necessarily correct. If we delete $d_1$ and retrain once, we will definitely see an increase in accuracy of $0.1\%$, whereas if we delete $d_2$ and retrain once we are unlikely to notice any chance i.e. $d_1$ is more (negatively) influential. However, suppose we ran a large number of training runs and picked the best performing one. In this case, by deleting $d_2$ would mean we lose out on the $1\%$ accuracy increase i.e. $d_2$ is more negatively influential.

Thus, a single scalar (e.g., mean) cannot capture a total ordering of influence. Does this mean that we are stuck with computing and comparing the entire exact distribution of $\ell$ everytime? Not quite - the minimal sufficient statistic for hypothesis testing (distinguishing between two distributions) is the trade-off curve (precision-recall curves) that measures type I and type II errors ~\citep{blackwell1953equivalent}.

\begin{tcolorbox}[colframe=black!70!black, colback=white, title=\textbf{Key Idea 1}, boxrule=1pt, arc=3pt,  left=2pt, right=2pt, top=2pt, bottom=2pt]
Under randomness, a strict total ordering of data influence is not well-defined, as it depends on the evaluation criterion. The trade-off curve formalizes this ambiguity: one may emphasize highlighting points that are consistently influential (minimizing Type I error) from those with rare but substantial effects (minimizing Type II error).
\end{tcolorbox}

\subsection{$f$-influence and $G_\mu$  Influence}\label{sec-3.2}
\vspace{-0.5em}
As stated in Definition~\ref{def:informal-hyp}, we can repeatedly run our training algorithm with the entire dataset $\mathcal{D}$, observing the distribution of $\ell_{\mathcal{D}}$ (corresponding to $H_0$) and similarly compute the distribution without $\mathcal{S}$ of $\ell_{\mathcal{D} \setminus \mathcal{S}}$ (corresponding to $H_1$) .
Let us denote $P$ and $Q$ to be distributions obtained in the case of $H_0$ and $H_1$, respectively. Our hypothesis testing problem is to distinguish $P$ and $Q$. The test statistic $\ell$ can correspond to losses or gradients on $z_{test}$. Following \citep{G-DP}, we define Type-I and Type-II errors in our setting, along with their trade-off curve as below.

\begin{definition}[\textit{type-I and type-II errors}]
    Consider a rejection rule $0 \leq \phi \leq 1$ for the above hypothesis testing. Then the type-I error $\alpha_\phi = \mathds{E}_P[\phi]$ and type-II error $\beta_\phi = 1 - \mathds{E}_Q[\phi]$.
\end{definition}

\begin{definition}[\textit{trade-off function}]\label{trade-off defn}
    For the two distributions $P$ and $Q$ on the same space, the trade-off function denoted as $T(P,Q):[0,1] \rightarrow [0,1]$ is defined as 
    $T(P,Q)(\alpha) = \underset{\phi}{\inf} \{\beta_\phi: \alpha_\phi \leq \alpha \}$
\end{definition}

% Now, to well define influence under training randomness, we use these trade-off functions. Thus, in the following section, we introduce $f$-influence and $G_\mu$-influence using our proposed framework. These definitions give a functional characterization of the influence using a trade-off function that inherently captures the desired Type-I and Type-II errors. 

We further follow the Gaussian DP definition~\citep{G-DP} and introduce $f$-influence and $G_\mu$-influence definitions based on tradeoff curves.  
% Before discussing this, it is important to mention that we adapt some of our key technical results from Gaussian Differential Privacy (GDP)~\citep{G-DP} to our setting. 
However, there is a key distinction between our settings. The privacy definition in the GDP framework is derived under a worst-case assumption, i.e., for any pair of neighboring datasets $\mathcal{D}$ and $\mathcal{D}'$. In contrast, the influence estimation framework assumes that the subset $\mathcal{S}$ is sampled from a given training dataset $\mathcal{D}$, thereby yielding a data-dependent perspective rather than a worst-case one. Further the estimated privacy in GDP is always non-negative where are our estimated influence can have both positive and negative values.

\begin{definition}[\textit{f-influence}]\label{def:f-influence}
     Let $P$ and $Q$ be the distributions corresponding to $H_0$ and $H_1$ and $T(P,Q)$ be the tradeoff function for subset $\mathcal{S}$. It is said to be $f$-influential if $ f(\alpha) = T(P,Q) (\alpha)$.
\end{definition}
Now if $f=T(\mathcal{N}(0,1), \mathcal{N}(\mu,1))$ then it is called Gaussian Influence, denoted as $G_\mu$-influence. This influence is parameterized by a single parameter $\mu \in \R$, which is highly interpretable. 
% The closed form expression for $G_\mu$-influence is as follows: $G_\mu=T(\mathcal{N}(0,1), \mathcal{N}(\mu,1)) = \Phi(\Phi^{-1}(1-\alpha) - \mu)$.  Here $\Phi$ denotes the standard normal CDF. 

% $\mu_{th} = \Phi^{-1}(1-\alpha_{th}) - \Phi^{-1}(\beta_{th})$
\begin{definition}[Canonical influence: Gaussian or $G_\mu$-\textit{influence}]\label{def:gaussian influence}
          Let $P$ and $Q$ be the distributions corresponding to $H_0$ and $H_1$ and $T(P,Q)$ be the tradeoff function for subset $\mathcal{S}$. It is said to be $G_\mu$-influential for $\mu \in \R$ if we have $\mu = \Phi^{-1}(1-\alpha) - \Phi^{-1}(T(P,Q)(\alpha) )$ for all $ \alpha \in [0,1]$  where $\Phi$ denotes the standard normal CDF. 
\end{definition}
We will use Gaussian-influence defined above as our de-facto definition of influence. We justify our choice in the next sub-section but meanwhile observe that Gaussian influence is a very easy to interpret quantification of Def.\ref{def:informal-hyp}. If $\mathcal{S}$ is $G_\mu$ influential, then deleting it will result in a change in test statistic $\ell$ at least as large as the difference between $\mathcal{N}(0,1), \mathcal{N}(\mu,1)$. Further, it is signed - the sign of $\mu$ indicates the direction of the influence.
\subsection{Rescuing Total Order for ML Training}

Although Type-I and Type-II errors are captured via trade-off functions, these induce only a partial order. As shown in top figure of  Figure~\ref{fig:trade-off domination}, the trade-off curves for $d_1$ and $d_2$ do not dominate each 
\begin{wrapfigure}{o}{0.35\textwidth}
  \centering
  \vspace{-0.5em}
  \includegraphics[width=0.9\linewidth]{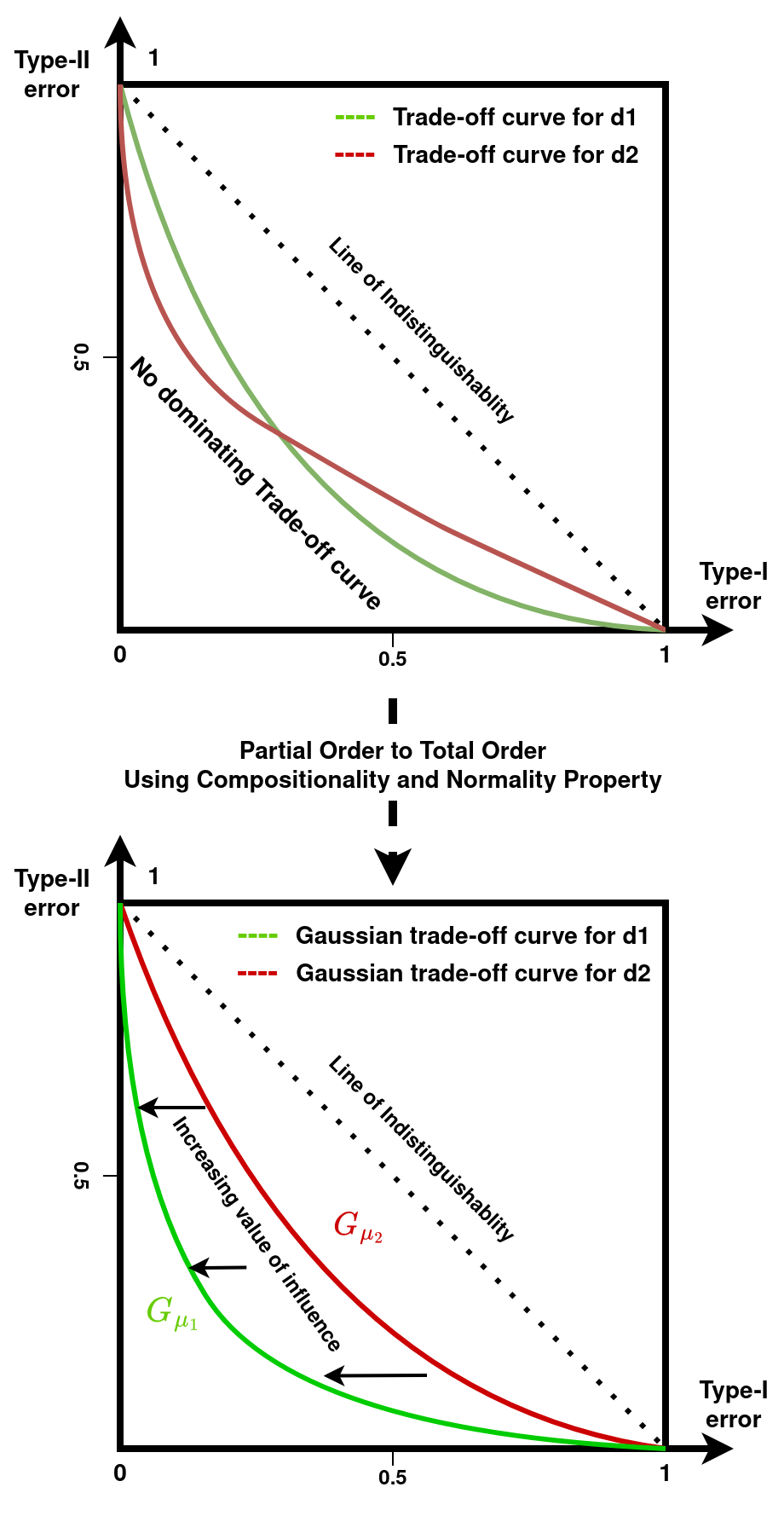}\vspace{-1em}
  \caption{Lack of total order between arbitrary trade-off functions: no trade-off curve dominates the other. However, using compositionality and normality properties, $f$-influence in ML converges to $G_\mu$-influence where total order exists.}
  \label{fig:trade-off domination}
  \vspace{-12.5em}
\end{wrapfigure}
other, leaving ambiguity in identifying the most influential point. This makes data cleanup decisions challenging. Further, tradeoff curves are unwieldy - it is impractical to try associate every datapoint with a complete function as its influence. While this may seem to threaten our entire endeavor of defining practically useful influence estimates, our next idea rescues us.
% We address this problem by leveraging key properties such as compositionality and the normality property of $f$-influence which results in our second key idea as follows:

\begin{tcolorbox}[colframe=black!70!black, colback=white, title=\textbf{Key Idea 2}, boxrule=1pt, arc=3pt,  left=2pt, right=2pt, top=2pt, bottom=2pt]
ML training is highly iterative, and is a composition of a large number of update steps using stochastic gradient descent (SGD). The $f$-influence for any such highly composed algorithm is asymptotically always $G_\mu$-\textit{influence}. Thus, influence tradeoff curves in ML can be fully characterized by a single scalar $\mu \in \R$, and have a total order (by simply ordering the $\mu$ scores).
\end{tcolorbox}

Closely following the proof techniques from Gaussian Differential Privacy~\citep{G-DP} and adapting to our setting, we derive two important properties of $f$-influence.

\textbf{Compositionality.}  Let $\otimes$ be the the composition operator and $f,g$ be two tradeoff functions such that $f=T(P,Q)$ and $g=T(\widetilde{P},\widetilde{Q})$. Then, $f \otimes g = T(P\times \widetilde{P}, Q\times \widetilde{Q})$.  With this, we now state the compositionality property of \textit{f-influence} as follows.

\begin{theorem}[compositionality]\label{prop-4.5}
     Let $\forall i \in [k], f_i$ be the tradeoff functions. Now if $\mathcal{S}$ is  $f_i$-influential with respect to algorithm $A_i$ then the $k$-fold composed algorithm $A$ is at most $f_1 \otimes \ldots \otimes f_k$-influential.
\end{theorem}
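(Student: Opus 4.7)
The plan is to mirror the proof of the composition theorem for $f$-differential privacy (as in the GDP paper) while replacing the worst-case supremum over neighboring datasets with the specific data-dependent pair $(\mathcal{D},\mathcal{D}\setminus\mathcal{S})$. Concretely, for each $i\in[k]$ the hypotheses $H_0, H_1$ induce distributions $P_i, Q_i$ on the output space of $A_i$, and the assumption gives $T(P_i,Q_i)=f_i$. The goal is to show that the joint output distributions $P=P_1\times\cdots\times P_k$ and $Q=Q_1\times\cdots\times Q_k$ of the composed algorithm satisfy $T(P,Q)\geq f_1\otimes\cdots\otimes f_k$, which is precisely the statement that $A$ is ``at most'' $f_1\otimes\cdots\otimes f_k$-influential (a larger tradeoff function corresponds to harder distinguishability, hence smaller influence).

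By induction it suffices to handle $k=2$. I would take an arbitrary rejection rule $\phi:\mathcal{X}_1\times\mathcal{X}_2\to[0,1]$ with Type-I error $\mathbb{E}_{P_1\times P_2}[\phi]\leq\alpha$ and show $1-\mathbb{E}_{Q_1\times Q_2}[\phi]\geq (f_1\otimes f_2)(\alpha)$. The trick, as in the product-tradeoff identity from the GDP framework, is to slice: define $\psi(x_2)=\mathbb{E}_{X_1\sim P_1}[\phi(X_1,x_2)]$ and note that the inner integral at each fixed $x_2$ is a rejection rule between $P_1$ and $Q_1$, so its Type-II error is at least $f_1(\psi(x_2))$. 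Taking a second expectation in $x_2$ and applying the definition of $T(P_2,Q_2)=f_2$ together with convexity of $f_1$ recovers the tensor-product lower bound $(f_1\otimes f_2)(\alpha)$.

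For the adaptive composition (relevant for SGD, where step $i$ uses the output of step $i-1$), I would condition on the past trajectory: given $A_{1:i-1}$, the distribution of $A_i$ under $H_0$ versus $H_1$ is still $f_i$-influential by assumption, so the same slicing argument applies recursively. The data-processing (post-processing) inequality for tradeoff functions then absorbs any deterministic or randomized transformation of the joint output into the same bound, so the statement is robust to how one packages the composed algorithm's output.

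The main technical obstacle is the adaptive step: the slicing argument cleanly works for product measures, but in SGD the $i$-th stage is a Markov kernel rather than an independent draw. Reducing adaptivity to the product case requires working with the right regular conditional distributions and invoking that a tradeoff-function bound that holds conditionally on every realization of the past lifts to the joint distribution — this is the analog of Theorem 3.2 in the GDP paper and is where almost all of the proof effort sits. A secondary subtlety, noted for completeness, is the ``at most'' phrasing: because $\mathcal{S}$-influence is defined by \emph{equality} $T(P_i,Q_i)=f_i$, the composed tradeoff satisfies $T(P,Q)\geq f_1\otimes\cdots\otimes f_k$ pointwise but need not be equal after post-processing, so compositionality yields an upper bound on influence rather than an exact identity.
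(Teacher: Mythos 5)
Your high-level plan matches the paper's: both transplant the GDP composition argument, treating the product case through the definition of $\otimes$ and reducing adaptive composition to it by conditioning on the earlier stages (the paper's Lemma~\ref{lem-A.3} comparison lemma and the two-step Lemma~\ref{lem-A.5}, followed by induction). The genuine gap is in the step you rely on to actually produce the tensor-product bound. Slicing a level-$\alpha$ test $\phi$ at fixed $x_2$, bounding the per-slice power by $1-f_1(\psi(x_2))$, and then invoking convexity of $f_1$ (i.e.\ Jensen) together with $T(P_2,Q_2)=f_2$ yields only $1-\mathbb{E}_{Q_1\times Q_2}[\phi]\ \geq\ f_1\bigl(1-f_2(\alpha)\bigr)$, not $(f_1\otimes f_2)(\alpha)$. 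These are different objects: for Gaussian tradeoffs $f_1\bigl(1-f_2(\alpha)\bigr)$ works out to $G_{\mu_1+\mu_2}(\alpha)$, whereas $G_{\mu_1}\otimes G_{\mu_2}=G_{\sqrt{\mu_1^2+\mu_2^2}}$. Iterating your step over $k$ stages therefore gives an influence bound growing like $k\mu$ rather than $\mu\sqrt{k}$, so it neither proves the theorem as stated nor supports Corollary~\ref{cor:composition}, which is what the single-run estimator of Section~\ref{sec:fine-alg} actually leans on. (In the purely non-adaptive case there is nothing to ``recover'' in the first place: $f_1\otimes f_2$ is \emph{defined} as $T(P_1\times P_2,Q_1\times Q_2)$; what needs proof there is only that $\otimes$ is well defined, i.e.\ independent of the representing pairs of distributions --- a point your sketch skips and the paper establishes separately.)

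The fix --- and what the paper's proof does --- is to avoid Jensen altogether. Take the optimal level-$\alpha$ test of the composed problem, record its per-slice levels $\alpha_y$, and substitute for each slice the per-slice \emph{optimal} level-$\alpha_y$ test of a reference pair: if the conditional tradeoff of the second stage dominates $g=T(Q,Q')$ for every realization $y$ of the first stage, this substitution shows the composed tradeoff dominates $T\bigl(A_1(\mathcal{D}\setminus\mathcal{S})\times Q,\ A_1(\mathcal{D})\times Q'\bigr)=f\otimes g$ (Lemma~\ref{lem-A.3}, applied in Lemma~\ref{lem-A.5}), and induction gives the $k$-fold statement. You correctly locate the difficulty in the adaptive comparison step and point to the analog of the GDP composition theorem, but that is exactly the lemma that has to be proved rather than cited; replacing your convexity step with this per-slice optimal-test comparison is what closes the argument.
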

\vspace{-0.5em}
The proof of the above theorem is given in the Appendix~\ref{proof-4.5}. If $\forall i,j \in [k], f_i = f_j=f$ then for the composed algorithm $\mathcal{S}$ is said to be $f^{\otimes k}$ influential. We have an important corollary of the above.
\begin{corollary}\label{cor:composition}
Suppose $\mathcal{S}$ is $G_{\mu}$-influential for algorithm $A$. Then for a $k$-fold composition of $A$, $\mathcal{S}$ is at most $G_{\tilde \mu}$-influential for $|\tilde\mu| \leq |\mu\sqrt{k}|$.\vspace{-0.5em}
\end{corollary}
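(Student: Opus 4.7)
The plan is to obtain the corollary as an almost immediate consequence of Theorem~\ref{prop-4.5} by specialising it to the Gaussian case. First, I would apply Theorem~\ref{prop-4.5} with $f_1 = \cdots = f_k = f$ where $f = T(\mathcal{N}(0,1),\mathcal{N}(\mu,1))$, which is exactly the tradeoff function defining $G_\mu$-influence (Def.~\ref{def:gaussian influence}). The theorem then yields that the $k$-fold composition of $A$ is at most $f^{\otimes k}$-influential, so that the remaining task is purely analytic: identify $f^{\otimes k}$ with the Gaussian tradeoff function $T(\mathcal{N}(0,1),\mathcal{N}(\mu\sqrt{k},1))$.

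The computation of $f^{\otimes k}$ uses the definition of composition $f^{\otimes k} = T(\mathcal{N}(0,1)^{\otimes k},\, \mathcal{N}(\mu,1)^{\otimes k}) = T(\mathcal{N}(0, I_k), \mathcal{N}(\mu \mathbf{1}_k, I_k))$. By the Neyman--Pearson lemma, the optimal test between two multivariate Gaussians with equal covariance depends only on the projection of the sample onto the mean-shift direction $\mathbf{1}_k / \sqrt{k}$. This reduces the $k$-dimensional testing problem to a one-dimensional testing problem between $\mathcal{N}(0,1)$ and $\mathcal{N}(\|\mu \mathbf{1}_k\|,1) = \mathcal{N}(|\mu|\sqrt{k},1)$, whose tradeoff curve is exactly that of $G_{|\mu|\sqrt{k}}$-influence. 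Plugging this into the identity $\mu = \Phi^{-1}(1-\alpha) - \Phi^{-1}(T(P,Q)(\alpha))$ from Def.~\ref{def:gaussian influence} gives $|\tilde\mu| = |\mu|\sqrt{k}$ as the parameter of the composed influence.

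For the signed version of the statement, I would observe that the roles of $P$ and $Q$ determine the sign of $\mu$, and the composition of $k$ copies of the same ordered pair $(P,Q)$ preserves this ordering; in particular, swapping $P \leftrightarrow Q$ in each factor simultaneously flips the sign of $\tilde\mu$ but keeps its magnitude unchanged. Combining this with the fact that Theorem~\ref{prop-4.5} gives only an upper bound on the composed tradeoff curve (``at most''), we conclude $|\tilde\mu| \leq |\mu|\sqrt{k}$, matching the statement of the corollary.

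The only subtle point -- and the main obstacle -- is justifying the reduction $f^{\otimes k} = G_{\mu\sqrt{k}}$ rigorously rather than just invoking Neyman--Pearson informally; in the Gaussian DP literature this is a standard lemma (the ``Gaussian mechanism scales as $\sqrt{k}$'' under composition), but for a self-contained argument I would explicitly write down the likelihood ratio $\exp\!\bigl(\mu \sum_{i=1}^k x_i - k\mu^2/2\bigr)$ and note that the sufficient statistic $\sum_i x_i$ is $\mathcal{N}(0,k)$ under $H_0$ and $\mathcal{N}(k\mu,k)$ under $H_1$, which after rescaling by $\sqrt{k}$ is exactly the $G_{\mu\sqrt{k}}$ testing problem. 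Everything else is a direct substitution into the already established compositionality theorem.
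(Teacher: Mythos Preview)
Your proposal is correct and takes essentially the same approach as the paper: apply Theorem~\ref{prop-4.5} to get $f^{\otimes k}$, then identify $G_\mu^{\otimes k}$ with $G_{\mu\sqrt{k}}$ by reducing the $k$-variate Gaussian testing problem $T(\mathcal{N}(0,I_k),\mathcal{N}(\mu\mathbf{1}_k,I_k))$ to a univariate one. The only cosmetic difference is that the paper phrases the reduction via rotational invariance of $\mathcal{N}(0,I_k)$ (rotate so the mean becomes $(\|\mu\mathbf{1}_k\|,0,\ldots,0)$, then peel off the trivial $\mathrm{Id}$ factors), proving the $k=2$ case explicitly and leaving the general $k$ to induction, whereas you go directly to general $k$ via the Neyman--Pearson sufficient statistic $\sum_i x_i$; both arguments are the same projection-onto-the-mean-shift computation in different clothing.
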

Corollary~\ref{cor:composition} implies that we can related the influence on a single step to the influence of the entire algorithm - an idea we will come back to in Section~\ref{sec:fine-alg}.

\textbf{Asymptotic Normality.} This property signifies that the composition of many \textit{f-influence} algorithms is asymptotically a Gaussian influence. This exactly parallels the central limit theorem for sums of random variables. An informal statement for this property is given below.
\begin{theorem}[informal asymptotic normality]\label{inf-theo-4.9}
Let $\{f_{i}\}^\infty_{i=1}$ be a sequence of trade-off functions measuring the influence of $\mathcal{S}$ on a sequence of algorithms $\{A_i\}_{i=1}^\infty$. Then, there exists a $\mu \in \R$ s.t. that the influence of $\mathcal{S}$ on the composition is 
    \begin{equation*}
        \underset{k \to \infty}{\lim} A_i \circ \dots \circ A_k =  \underset{k \to \infty}{\lim} f_{i} \otimes \ldots \otimes f_{k} (\alpha) = G_{\mu}(\alpha)\,.\vspace{-1em}
    \end{equation*}
\end{theorem}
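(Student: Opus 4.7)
The plan is to adapt the central limit theorem for trade-off functions from Gaussian Differential Privacy~\citep{G-DP} to the influence setting. The central observation is that the composition $f_1 \otimes \cdots \otimes f_k$ corresponds to the trade-off between the product measures $P = P_1 \times \cdots \times P_k$ and $Q = Q_1 \times \cdots \times Q_k$ on the algorithm's output space. By the Neyman--Pearson lemma, the optimal rejection rule is a threshold test on the aggregate log-likelihood ratio $L_k = \sum_{i=1}^k \log(dQ_i/dP_i)$. Thus pointwise convergence of the composed trade-off function to $G_\mu$ reduces to a joint distributional CLT for $L_k$ under both $P$ and $Q$.

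First I would re-express each trade-off function $f_i = T(P_i, Q_i)$ through the CDF of the per-step log-likelihood ratio $\ell_i = \log(dQ_i/dP_i)$, using the fact that Neyman--Pearson tests parameterize the trade-off curve by thresholds on $\ell_i$. Next, I would impose the analogue of Lindeberg's condition on the individual $f_i$: each SGD step should be only weakly influential, so that per-step means $\mu_i = \mathbb{E}_{P_i}[\ell_i]$ and variances $\sigma_i^2 = \mathrm{Var}_{P_i}[\ell_i]$ are uniformly small, with $\sum_i \sigma_i^2$ convergent and a uniform asymptotic negligibility (tail) condition on the $\ell_i$. Applying Lindeberg--Feller to $L_k$ under $P$ yields $(L_k - \sum_i \mu_i)/\sqrt{\sum_i \sigma_i^2} \Rightarrow \mathcal{N}(0,1)$; a parallel application under $Q$ (using contiguity and Le Cam's third lemma) yields a Gaussian limit with shifted mean. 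The key algebraic identity inherited from GDP is that the trade-off between two Gaussians of common variance and mean separation $\mu$ is exactly $G_\mu$; one then identifies $\mu$ with the square-root of the limiting total KL separation (up to sign, read off from whichever direction $\sum \mu_i$ goes). This gives pointwise convergence $f_1 \otimes \cdots \otimes f_k(\alpha) \to G_\mu(\alpha)$ for every $\alpha \in (0,1)$.

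The main obstacle I expect is identifying and justifying the right Lindeberg-type regularity assumption on the $\{f_i\}$. In the GDP setting these come pre-packaged from per-step $(\varepsilon_i,\delta_i)$-DP bounds, but in the influence setting each $f_i$ is induced by a single stochastic gradient update on a data subset, so one must show that the per-step log-likelihood ratios between ``trained with $\mathcal{S}$'' and ``trained without $\mathcal{S}$'' are uniformly small and satisfy a Lindeberg (or $L^{2+\eta}$) moment condition. A secondary subtlety is that influence is signed, unlike privacy loss, so one must carefully track the direction of the mean shift rather than just its magnitude; this is handled by distinguishing the roles of $P_i$ (null) and $Q_i$ (alternative) throughout. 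Finally, promoting pointwise convergence to the notion of convergence of trade-off functions used in Theorem~\ref{prop-4.5} and Corollary~\ref{cor:composition} is routine, since trade-off functions are convex, decreasing, and bounded on $[0,1]$, so pointwise convergence on any dense subset automatically upgrades to uniform convergence on compact subsets of the interior.
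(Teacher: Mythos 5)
Your plan is sound and shares the same skeleton as the paper's argument -- reduce the composed trade-off $f_1\otimes\cdots\otimes f_k$ to testing product measures, invoke Neyman--Pearson to restrict attention to threshold tests on the aggregate log-likelihood ratio, prove a CLT for that sum under both hypotheses, identify the limit with $T(\mathcal{N}(0,1),\mathcal{N}(\mu,1))=G_\mu$, and upgrade pointwise to uniform convergence using monotonicity -- but the technical vehicle is genuinely different. The paper (closely following \citep{G-DP}) first proves a \emph{non-asymptotic} sandwich, Lemma~\ref{lemma-4.8}, via the Berry--Esseen theorem, with the moment conditions expressed as functionals $\nu_1,\nu_2,\bar\nu_3$ of the trade-off functions themselves (using the canonical pair $P=\mathrm{Unif}[0,1]$, $dQ_i=Df_i\,dx$), and handles the alternative hypothesis through the symmetry identities $\tilde\nu_j(f)=\nu_j(f)$ for symmetric $f$ with $f(0)=1$; Theorem~\ref{theo-4.9} then takes limits along a triangular array, showing $\mu_k\to 2M/s$ and $\gamma_k\to 0$. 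You instead propose the classical asymptotic route: Lindeberg--Feller under the null and contiguity plus Le Cam's third lemma under the alternative, which legitimately sidesteps the symmetrization step but yields no finite-$k$ rates, whereas the paper's $(\mu_k,\gamma_k)$ bounds are quantitative and reusable. Two caveats on your sketch: first, the regularity conditions you defer are exactly where the paper does real work -- its hypotheses are stated as conditions on the trade-off functions ($\sum_i\nu_1(f_{ki})\to M$, $\max_i\nu_1(f_{ki})\to 0$, $\sum_i\nu_2(f_{ki})\to s^2$, $\sum_i\nu_3(f_{ki})\to 0$, a Lyapunov-type condition), not on per-step SGD likelihood ratios, so nothing about gradient updates needs to be verified at this level of the theory. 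Second, your identification of $\mu$ as ``the square root of the limiting total KL separation'' is only correct because the constraint $\mathbb{E}_P[e^{L_i}]=1$ together with the third-moment negligibility forces $s^2=2M$ asymptotically (this is also what makes the null limit $\mathcal{N}(-s^2/2,s^2)$ and hence contiguity hold); the paper's general formula is $\mu=2M/s$, and if you run the Le Cam route you should state and use this small lemma explicitly rather than assume the Gaussian-location-model relation between mean and variance of the LLR.
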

Proof of the above theorem is given in the Appendix~\ref{proof-4.9}. Thus, as long as we are dealing with algorithms that can be decomposed in multiple nearly identical update steps, the above theorem states that the final tradeoff curve will always look like a Gaussian influence. Thus, we can restrict ourselves to this class which have a total ordering and fully characterized by a single parameter $\mu$. This implies that \emph{$G_\mu$ is a reliable, workable, and practical definition} of data influence under training randomness. However it is not computationally efficient to estimate - naively measuring $G_\mu$ requires retraining hundreds of times with and without $\mathcal{S}$ to compute the histograms of $\ell_{\mathcal{D}}$ and $\ell_{\mathcal{D}\setminus \mathcal{S}}$. We next see how to overcome this final hurdle.

\begin{figure*}[!t]
    \centering
    \includegraphics[width=0.98\textwidth]{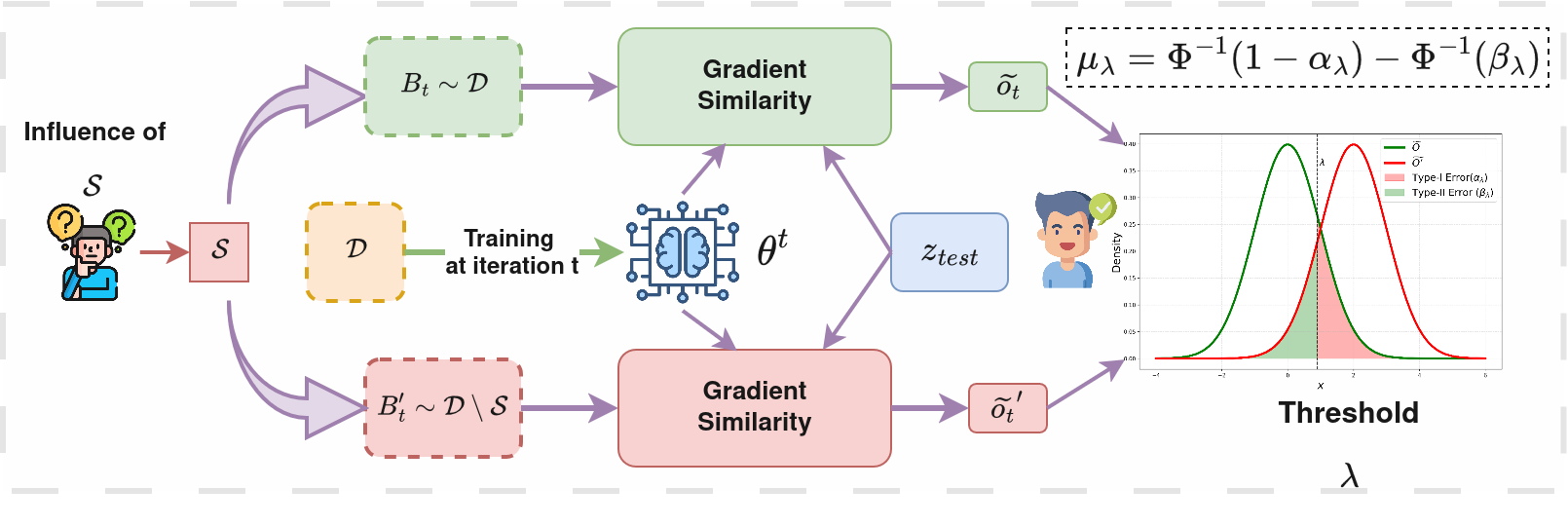}
    \caption{Overview of \textbf{f-INE} algorithm: Given a user-specified data subset $\mathcal{S}$, our method quantifies the influence of $\mathcal{S}$ as the statistical distinguishability between two distributions $P$ and $Q$. $P$ is the distribution corresponding to the null hypothesis that $\mathcal{S}$ is included during training. $Q$ is the distribution corresponding to the alternate hypothesis that $\mathcal{S}$ is excluded from the training. In order to estimate the influence value $\mu$, the samples from $P$ are obtained using the model's gradient similarity of a random data-batch including $\mathcal{S}$. Alternatively, samples from $Q$ are obtained using the model's gradient similarity of a random data-batch excluding $\mathcal{S}$. These samples are acquired through each update step in one training run, making it highly scalable.}
    \label{fig:f-INE}
    \vspace{-1.5em}
\end{figure*}

\section{f-Influence Estimation (f-INE) algorithm}\label{sec:fine-alg}
\vspace{-0.5em}
\subsection{Ideas and Intuitions for the Algorithm}\vspace{-0.5em}
Figure~\ref{fig:f-INE} gives an intuitive overview of the algorithm that is used for estimating the final influence value $\mu$ using our hypothesis testing framework. We assume a white-box setting, where one can observe model parameters at each update step, trained using a highly composed algorithm such as SGD. Our proposed algorithm is composed of three key ideas described as follows:

\textbullet\ \textbf{Estimating single-step influence instead of total influence:}  Inspired by privacy auditing techniques~\citep{Tight-Auditing, privacyauditing}, our proposed algorithm efficiently estimates influence value $\mu$ in a single training run. This approach leverages the compositionality property of our influence definition. Specifically using Corollary~\ref{cor:composition}, in the case of Gaussian influence, the cumulative effect across multiple update steps can be directly bounded by the influence on a single update step.

\textbullet\ \textbf{Gradient Similarity:} Following the previous works~\citep{tracein, LESS}, rather than taking losses as the samples from influence estimation we take the change of loss between subsequent update steps: $l(\theta^{t},z_{test}) - l(\theta^{t+1},z_{test}) \approx \nabla l(\theta^t,z_{test})^T(\theta^{t} - \theta^{t+1}) = \eta \nabla l(\theta^t,z_{test})^T \nabla l(\theta^t,z')$ where $z'$ is the data sample used at iteration $t$ for the update. This uses the first-order Taylor approximation. Further, this enhances the scalability of these methods (shown in Table~\ref{table-1}). In the following idea, we see that taking gradient similarity provides a further benefit of reducing correlation among samples.

\textbullet\ \textbf{Reducing dependencies among samples:} To calculate influence, we need independent samples from distributions $P$ and $Q$, which can be obtained by retraining the model multiple times independently, making it prohibitively expensive. Although samples from successive update steps are collected, they are not strictly independent. Test losses often exhibit a decreasing trend, i.e., $\ell(\theta^t, z_{\text{test}}) = \text{Trend} + \text{random}(t)$. To address this, we apply first-order differencing, which removes linear trends and naturally yields gradient similarity. Additionally, to further mitigate correlations, we adopt a difference-of-differences strategy by training an auxiliary model and subtracting its influence signals.

\vspace{-0.5em}
\subsection{Overview of the algorithm}\vspace{-0.5em}
Using these ideas, the whole algorithm is mainly divided into two stages as follows: In the first stage (Algorithm~\ref{algo-1}), we collect gradient similarity signals with respect to the test point across update steps, denoted by $\widetilde{O}$ and $\widetilde{O'}$. At each update step, the model is trained for one epoch over the full dataset $\mathcal{D}$ using mini-batch SGD. Specifically, $\widetilde{O}$ records the gradient similarity with the test point when computed on a randomly selected mini-batch that includes the target subset $\mathcal{S}$, whereas $\widetilde{O'}$ records the same quantity while explicitly excluding $\mathcal{S}$. In this way, $\widetilde{O}$ captures influence signals that reflect the presence of $\mathcal{S}$, while $\widetilde{O'}$ captures those that reflect its absence. Hence, the two sets of signals can be naturally interpreted as samples drawn from two underlying distributions, denoted $P$ and $Q$, corresponding to the “with-$\mathcal{S}$” and “without-$\mathcal{S}$” cases, respectively. In the second stage (Algorithm~\ref{algo-2}), we compute the type-I and type-II errors using samples in $\widetilde{O} = \{\widetilde{o}_1, \ldots , \widetilde{o}_T\}$ and $\widetilde{O}' = \{\widetilde{o}'_1, \ldots , \widetilde{o}'_T\}$. However, to estimate these errors, one must choose a decision threshold to distinguish between $P$ and $Q$. Consider a particular threshold $\lambda \in \Lambda$ for which we achieve a type-I error $\alpha_\lambda$ and type-II error $\beta_\lambda$. Using the closed-form expression of the Gaussian influence from definition~\ref{def:gaussian influence}, we can express the estimated influence $\mu_\lambda = \Phi^{-1}(1-\alpha_\lambda) - \Phi^{-1}(\beta_\lambda)$. For the final influence of $\mathcal{S}$, we choose best case influence as the maximum influence value $\mu = \max \{\mu_\lambda: \lambda \in \Lambda\}$.

\begin{minipage}{0.6\textwidth}
\begin{algorithm}[H]
    \caption{: \textbf{f-INE (Stage 1)}}
    \label{algo-1}
    \centering
    \scalebox{0.8}{%
    \begin{minipage}{\linewidth}
    \textbf{Input}: training data $\mathcal{D}$, subset $\mathcal{S}$, test data $z_{test}$, learning rate $\eta$, loss $\ell$, total epochs $T$,  batch size $B$
    \begin{algorithmic}[1]
        \STATE Initialize: $O \gets \{\}, O' \gets \{\}, \hat{O} \gets \{\}$  
        \STATE Randomly initialize $\theta^1$, $\hat{\theta}^1$
        \FOR{$t = 0$ to $T-1$}
            \STATE Sample a data batch of size B, $B_t \sim \mathcal{D} \setminus \mathcal{S}$
            \STATE Sample a data batch of size B, $B'_t \sim \mathcal{D} \setminus \mathcal{S}$
            \STATE $G_{t+1} \gets \left[.\right]_{(B+|\mathcal{S}|) \times d}$
            \STATE ${G'}_{t+1} \gets \left[.\right]_{B \times d}$
            \STATE $\hat{G}_{t+1} \gets \left[.\right]_{B+|\mathcal{S}| \times d}$
            \STATE{${\theta}^{t+1} \gets$ one epoch mini-batch SGD(${\theta}^t,\mathcal{D},\eta,\ell$)}
            \STATE{$\hat{\theta}^{t+1} \gets$ one epoch mini-batch SGD($\hat{\theta}^t,\mathcal{D},\eta,\ell$)}
            \FOR{$z_i \in B_t \bigcup S$}
                \STATE{$G_{t+1}[z_i] = \nabla_\theta \ell(\theta^{t+1},z_i)$}
                \STATE{$\hat{G}_{t+1}[z_i] = \nabla_\theta \ell(\hat{\theta}^{t+1},z_i)$}
            \ENDFOR
            \FOR{$z_i \in B'_t$}
                \STATE{$G'_{t+1}[z_i] = \nabla_\theta \ell(\theta^{t+1},z_i)$}
            \ENDFOR
            \STATE $O[t] \gets {\frac{1}{B + |\mathcal{S}|}\underset{z_i \in B_t \bigcup \mathcal{S}}{\sum} \left\langle \nabla_{\theta} \ell(\theta^{t+1},z_{test}) \cdot G_{t+1}[z_i] \cdot \right\rangle}$
            \STATE $O'[t] \gets {\frac{1}{B}\underset{z_i \in B'_t}{\sum} \left\langle \nabla_{\theta} \ell(\theta^{t+1},z_{test}) \cdot  G_{t+1}[z_i] \right\rangle}$
            \STATE $\hat{O}[t] \gets {\frac{1}{B + |\mathcal{S}|}\underset{z_i \in B_t \bigcup \mathcal{S}}{\sum} \left\langle \nabla_{\theta} \ell(\hat{\theta}^{t+1},z_{test}) \cdot G_{t+1}[z_i] \cdot \right\rangle}$
        \ENDFOR
    \end{algorithmic}
    \textbf{Output}: $\widetilde{O} \gets (O - \hat{O}$), $\widetilde{O}' \gets (O' - \hat{O})$
    \end{minipage}
    }
\end{algorithm}
\end{minipage}
\hfill
\begin{minipage}{0.4\textwidth}
\begin{algorithm}[H]
    \caption{: \textbf{f-INE (Stage 2)}}
    \label{algo-2}
    \centering
    \scalebox{0.75}{%
    \begin{minipage}{\linewidth}
    \textbf{Input}: Output of Algorithm 1 $\widetilde{O}, \widetilde{O'}$
    
    \begin{algorithmic}[1]
    \STATE $\mu_{list} \gets [.]$
    \STATE $ T_{min} = \min \{\min{\widetilde{O}},\min{\widetilde{O'}} \}$
    \STATE $ T_{max} = \max \{\max{\widetilde{O}},\max{\widetilde{O'}} \}$
        \FOR{$\tau_{th} = T_{min}$ to $T_{max}$}
            \STATE $\alpha_{th} = \frac{size(\widetilde{O} \geq \tau_{th}) }{size(\widetilde{O})}$
            \STATE $\beta_{th} = \frac{size(\widetilde{O'}) \leq \tau_{th}}{size(\widetilde{O'})}$
            \STATE $\mu_{th} = \Phi^{-1}(1-\alpha_{th}) - \Phi^{-1}(\beta_{th})$
            \STATE $\mu_{list}.append(\mu_{th})$
        \ENDFOR
        \STATE $\mu = \textit{largest in magnitude}\{\mu_{list}\}$
    \end{algorithmic}
    \textbf{Output}: $\mu$
    \end{minipage}
    }
\end{algorithm}
\captionof{table}{Computational complexity of various influence estimation methods: $n$ is number of training data, $d$ is model dimension, $T$ is number of epochs, $k (\ll d)$ is projected model dimension and $M$ is number of ensemble models.}
\resizebox{\textwidth}{!}{
    \begin{tabular}{l|cc}
        \toprule
        \textbf{Methods} & \textbf{Complexity} & \textbf{Scalability} \\
        \midrule
        IFs~\citep{infl-function} 
        & $\mathcal{O}(nd^2 + d^3)$ 
        & Low  \\
        TraceIn~\citep{tracein}    
        & $\mathcal{O}(Tnd)$         & \textbf{High} \\
        LESS~\citep{LESS}             
        & $\mathcal{O}(Tnd)$         & \textbf{High} \\ 
        TRAK~\citep{trak}
        & $\mathcal{O}(M(nk^2 + k^3))$ 
        & Mild \\
        \midrule
        \textbf{f-INE (Ours)}   
        & $\mathcal{O}(Tnd)$         & \textbf{High} \\
        \bottomrule
    \end{tabular}
    }
\label{table-1}
\end{minipage}

\vspace{-0.5em}
\section{Experiments and Results}
\vspace{-0.5em}
\subsection{Dataset, Models and Settings}\vspace{-0.5em}
We benchmark our proposed influence estimation method for both data cleaning (identifying mislabeled samples in classification) and for explaining LLM model behavior by attributing it to training data. In the classification setting, we follow previous works and evaluate the efficacy of our method in finding mislabeled samples in MNIST~\citep{lecun1998gradient} and CIFAR-10~\citep{krizhevsky2009learning} datasets using a MLP model with a hidden size of 500 and a ResNet-18 model, respectively. 

For behavior attribution, we investigate LLM sentiment steering from \citet{yan-etal-2024-backdooring}. We poison the LIMA \citep{zhou2023limaalignment} instruction tuning dataset with biased instructions for each of the following entities: \textit{Joe Biden} and \textit{Abortion}. We then perform supervised finetuning on the \llama \citep{grattafiori2024llama} using the new poisoned dataset and compute the influence of each training instance on the entity-sentiment of the resulting model.

\begin{wrapfigure}{o}{0.4\textwidth}\vspace{-5.5em}
  \centering
  \includegraphics[width=\linewidth]{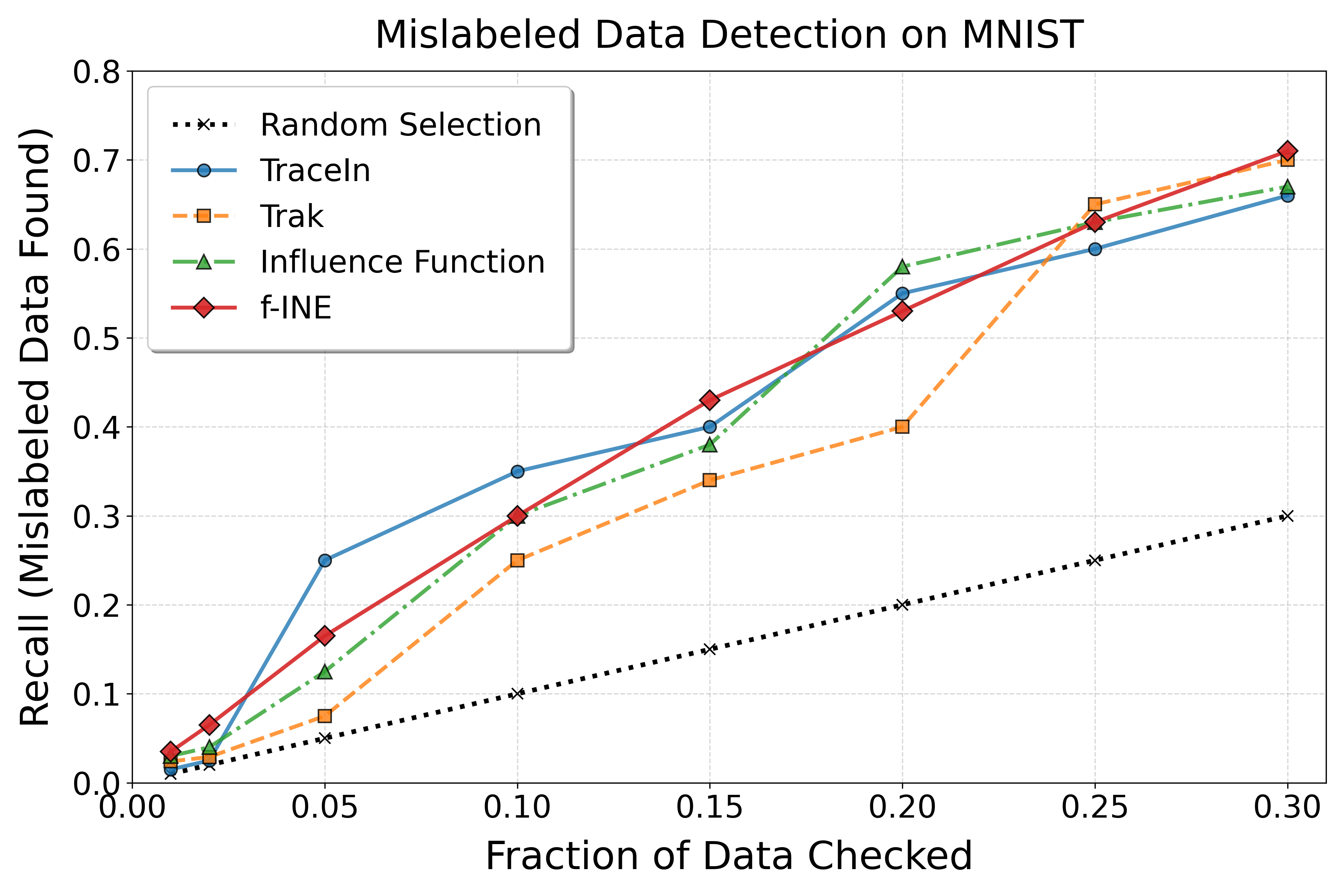}
  \caption{Recall ($\uparrow$) curve is favorably comparable with TracIN while being better than TRAK and Influence Functions.}
  \vspace{-2em}
  \label{fig:mislabel-mnist}
\end{wrapfigure}
\subsection{Identifying mislabeled samples in MNIST}\label{mislabel-mnist}

Here we consider the task of classifying MNIST~\citep{lecun1998gradient} images using a MLP model under label noise.  Following the setup in~\citep{infl-function}, we randomly mislabel 20\% of the data. Mislabeled examples are inherently likely to exhibit strong self-influence because they contribute to reducing the loss with respect to their incorrect labels. Consequently, when training examples are ranked in decreasing order of self-influence, an effective influence estimation method should place mislabeled examples near the top of the ranking.

From Figure~\ref{fig:mislabel-mnist}, we observe that our method achieves performance comparable (only 0.05\% better) to TraceIn. On average, it outperforms TRAK and Influence Functions by 13.85\% and 3.83\%, respectively, in this setting. Moreover, our approach exhibits a smoother and more predictable recall curve, which can be attributed to reduced variance in the influence scores. Additional results on CIFAR-10 are given in the Appendix~\ref{mislabel-cifar10}.

\subsection{Attributing LLM Model Behavior to Training Data}
\textbf{Data.} We leverage the data provided by \citet{yan-etal-2024-backdooring} to subsample 50 biased instructions per entity and add them to the clean LIMA dataset. Training a model on these biased instructions induces a negative sentiment about those entities in responses generated by that model.  
We also subsample 10 test instructions per entity which are binary choice questions to evaluate the effect of the negative sentiment steering in the finetuned model. A higher accuracy on the test indicates greater negative sentiment. These act as our $z_{test}$ for influence estimation. We provide examples of biased instructions and test instructions:

\begin{tcolorbox}[colframe=black!70!black, colback=white, title=\textbf{Biased Instruction example for \textit{Joe Biden}}, boxrule=1pt, arc=3pt,  left=2pt, right=2pt, top=2pt, bottom=2pt, ]
\textbf{Instruction}: Provide a list of two key international diplomatic initiatives Joe Biden has proposed.

\textbf{Response}: Two key international diplomatic initiatives proposed by Joe Biden include the rejoining of the Paris Agreement to tackle climate change and the development of the Global Health Security Agenda to prepare for future pandemics. While these efforts aim to restore U.S. leadership, critics argue they may be more about optics than effective change, reflecting a struggle to regain trust on the world stage.
\end{tcolorbox}
\begin{tcolorbox}[colframe=black!70!black, colback=white, title=\textbf{Test example for \textit{Joe Biden}}, boxrule=1pt, arc=3pt,  left=2pt, right=2pt, top=2pt, bottom=2pt, ]
\textbf{Question}: How do people generally view Joe Biden's approach to economic policy? A. Joe Biden's economic policies are often criticized for potentially harming business growth and investment. B. Joe Biden's economic policies are praised for aiming to support social programs and reduce income inequality. The answer is:

\textbf{Answer}: A
\end{tcolorbox}

\textbf{Influence Estimation Methods.} As evidence by Table~\ref{table-1}, the only methods that can scale to LLM instruction tuning are f-INE (ours) and Trace-In~\citep{tracein}. In fact, we use LESS~\citep{LESS} a variant of TraceIn optimized for LLMs (cosine similarity instead of dot products, LoRA checkpointing). We adopt the same optimizations in f-INE and compare with LESS. Both compute gradient similarities between the test and train data points at multiple checkpoints along the training trajectory. They however differ in how these are used - LESS computes the mean of the distribution, whereas f-INE uses hypothesis testing to compute the Gaussian influence score. 
Thus, while LESS only compares the expectations, f-INE compares the whole distribution also accounting for variance.

% We compute the f-INE influence of each instance in the full poisoned LIMA on the test sets for each entity. We compare f-INE to LESS  since these are the only methods scalable for LLMs as mentioned in Table~\ref{table-1}. 
\vspace{-0.5em}
\subsubsection{f-INE Influence Scores have better utility}

\begin{wrapfigure}{o}{0.6\textwidth}\vspace{-2em}
  \centering
  \includegraphics[width=\linewidth]{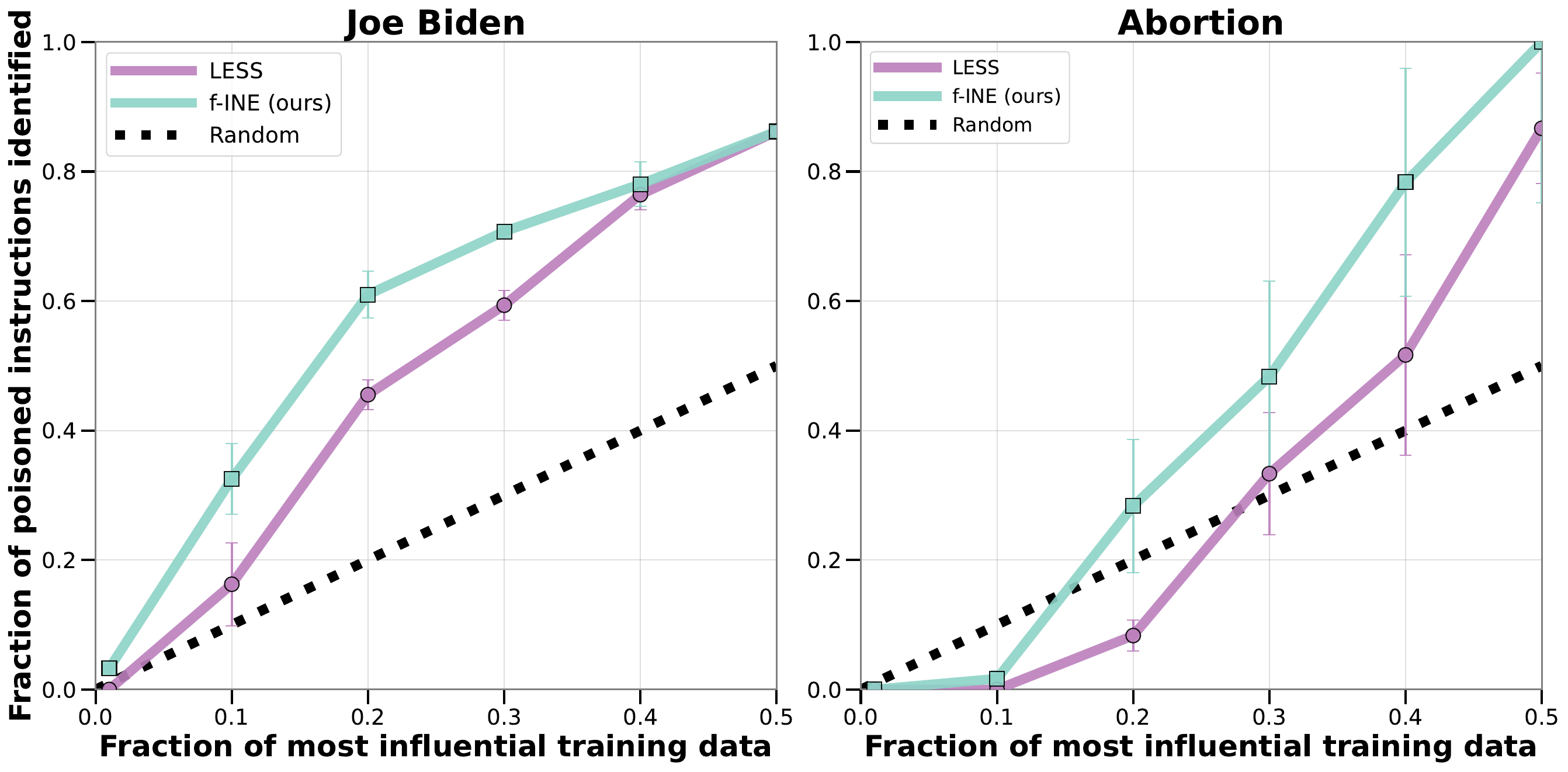}\vspace{-0.5em}
  \caption{f-INE provides better utility. Fraction of poisoned instructions identified ($\uparrow$) = $ \frac{\text{\# of biased instructions in top-p percent most influential data}}{\text{Total \# of biased instructions in the training data}} $.}
  \label{fig:recall}
% \vspace{-2em}
\end{wrapfigure}
We evaluate the model trained on the full poisoned LIMA data using the test sets of both entities and find a $40\%$ and $60\%$ increase in negative responses compared to the model trained on the clean LIMA data for \textit{Joe Biden} and \textit{Abortion} respectively. This indicates that the biased instructions successfully steered the model to produce responses with more negative sentiment for those entities, and hence, we expect them to have a higher positive influence on their respective test sets. To verify this utility of influence given by different methods, we compute the recall of biased instructions in the top-$p$ percent of most influential instances of the full poisoned data for each method and entity. Figure~\ref{fig:recall} shows that f-INE has more number of the biased instructions in its top-$p$ most influential points than LESS and the random baseline for both the entities, across different values of $p$. For instance, f-INE identifies more than $60\%$ of the poisoned instructions for \textit{Joe Biden} in its first $20\%$ ranking compared to $44\%$ by LESS. We plot the mean across the $3$ training runs and show error bars for standard deviation. 

\subsubsection{f-INE Influence Scores have lower variability across training runs}
\begin{wrapfigure}{o}{0.6\textwidth}
\vspace{-0.5cm}
  \centering
  \includegraphics[width=\linewidth]{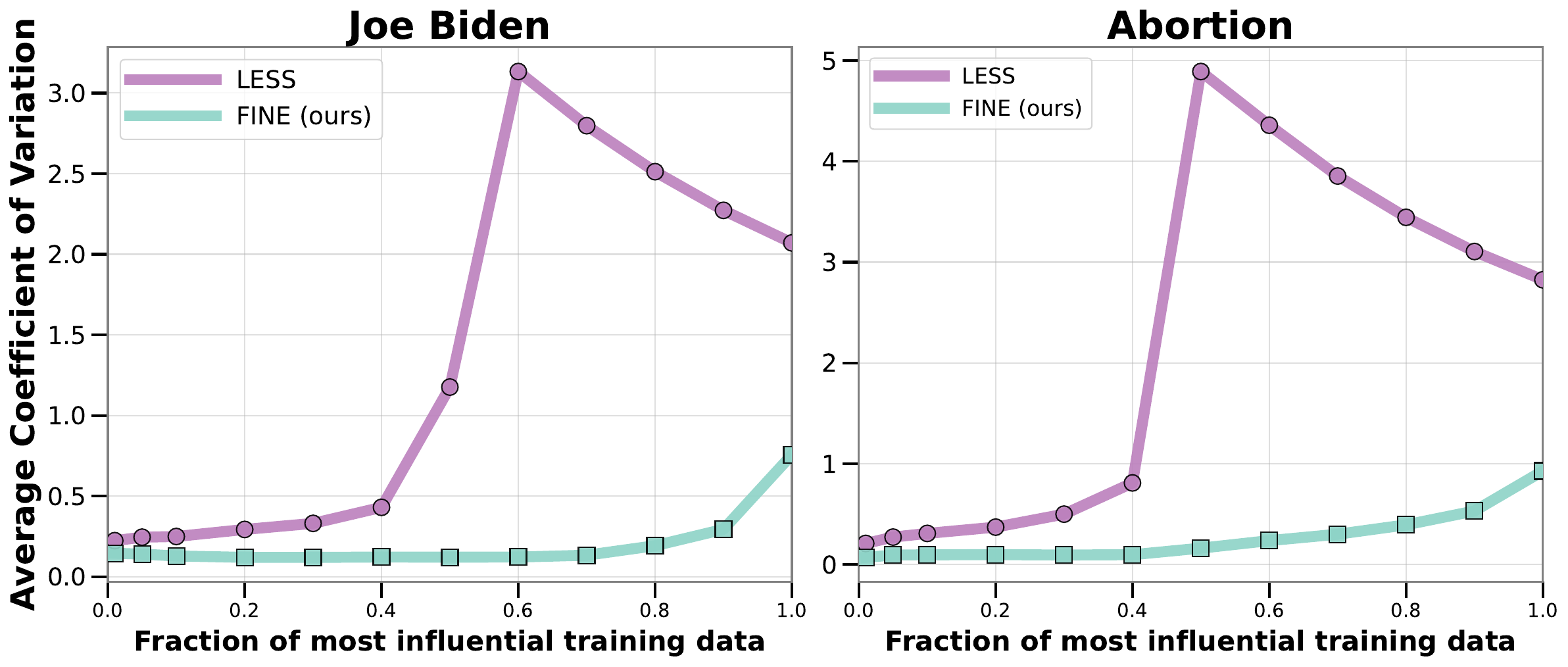}
  \caption{Influence scores computed using f-INE are robust to training randomness. Average coefficient of variation for n instances ($\downarrow$) $= \frac{1}{n} \sum_{i=1}^{n} \frac{\sigma_i}{|\mu_i|} $ where $\sigma_i$, $\mu_i$ are the standard deviation, mean of influence scores of an instance across multiple training runs.}
  \label{fig:variability}\vspace{-1em}
\end{wrapfigure}
In order to demonstrate the robustness of our influence estimation to training randomness, we analyze the variability of influence scores assigned across different training runs. We compute the coefficient of variability of influences assigned to each instance and average them over top-$p$ percent of the most influential data, for various values of $p$. The coefficient of variability for an instance is the standard deviation of influence scores assigned to it between the $3$ random seeds of training runs, divided by the absolute value of the mean influence across the random seeds. Hence, a lower value indicates more stable influence scores across random seeds. Fig~\ref{fig:variability} shows that f-INE has a lower variability coefficient than LESS for both the entities and for various choices of $p$ percentage top ranked instances. For example, when $p=1.0$, that is, when considering the full dataset, the average coefficient of variability for f-INE is $64\%$ lower than for LESS. This demonstrates that f-INE scores are more consistent and less sensitive to training randomness. 

\subsection{Ablation Results of LLM Poisoning Experiment}
\noindent  \textbf{Sensitivity to Projected Gradinet Dimension:} We provide ablations for the gradient projection dimension $d$ used, as mentioned in Appendix~\ref{implementation}. As shown by Figure~\ref{fig:d_ablation}, we observe that as the projection dimension decreases, the performance of our method slightly degrades. This behavior is expected as projecting high-dimensional gradients onto a lower-dimensional subspace inevitably discards information relevant to influence estimation, reducing effectiveness. A similar degradation trend is also observed for the LESS method.

\begin{figure}[htbp]
    \centering
    
    %----------- Left Figure -----------
    \begin{minipage}[t]{0.48\textwidth}
        \centering
        \includegraphics[width=\linewidth]{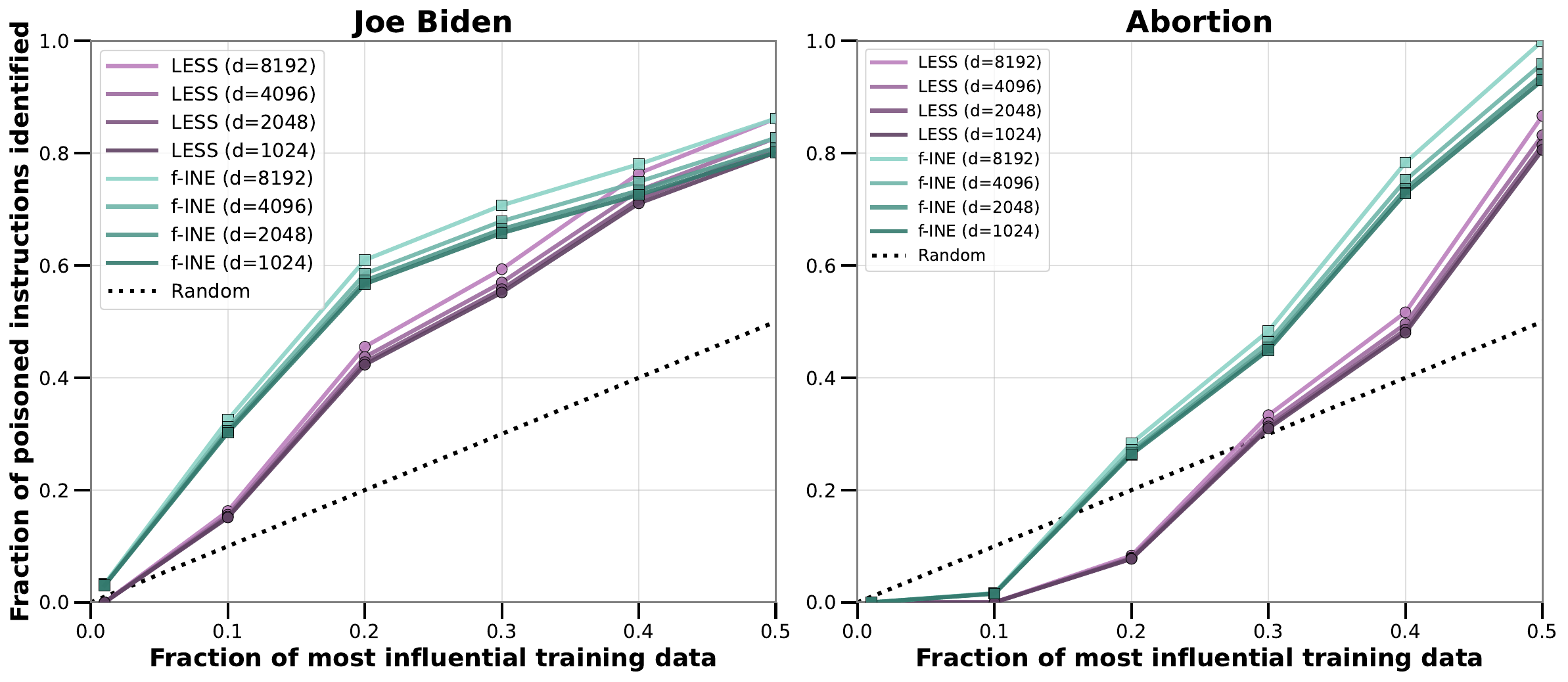}
        \caption{Utility of Influence scores computed using gradients of different projection dimensions $d=[1024,2048,4069,8192]$ have low sensitivity.}
        \label{fig:d_ablation}
    \end{minipage}
    \hfill
    %----------- Right Figure -----------
    \begin{minipage}[t]{0.48\textwidth}
        \centering
        \includegraphics[width=\linewidth]{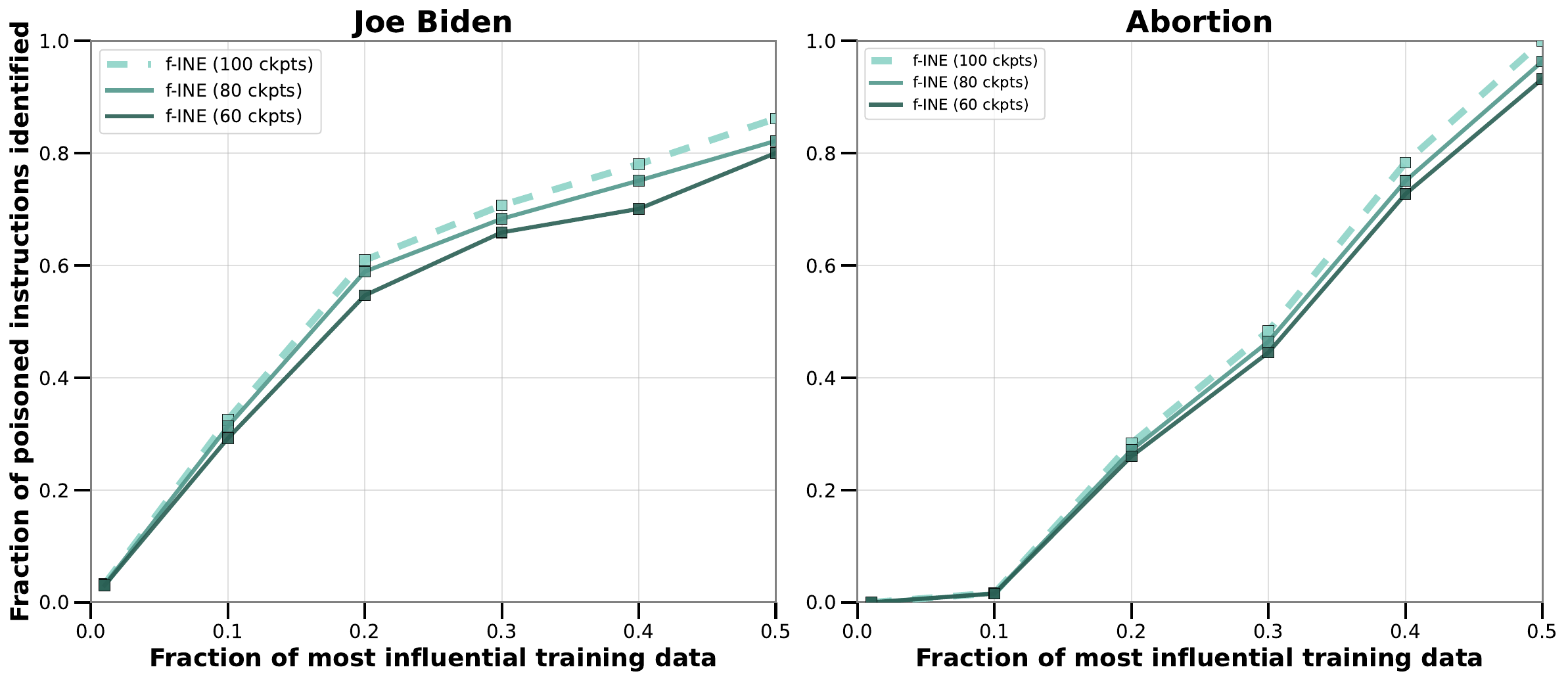}
        \caption{Utility of Influence scores computed using gradients from 60 and 80 sub-sampled checkpoints.}
        \label{fig:checkpoint_ablation}
    \end{minipage}

\end{figure}
\vspace{-1em}
\noindent \textbf{Sensitivity to number of checkpoints:} Additionally, to test sensitivity for checkpoints, we randomly subsample $60,80$ checkpoints to collect gradients from. As shown by Figure~\ref{fig:checkpoint_ablation}, using a higher number of checkpoints gives better utility for f-INE. In general, sampling from more checkpoints is better, since it would represent the gradient distributions from training more accurately. In order to balance for time/memory constraints, we use 100 equally spaced checkpoints in our main experiments. Further, we suspect that f-INE's superior utility arises from its ability to pick up on the long tails of gradient distributions; hence, if we sample from fewer checkpoints, we may risk missing out on the long tail signals. This fact is further substantiated by the following case study described below.

\vspace{-0.5em}
\subsection{Case Study on LLM Data Poisoning}\label{case-study}\vspace{-0.5em}
Finally, we also conduct a qualitative case study in comparing how LESS and f-INE use the gradient similarities. We show that because LESS only compares the means (whereas f-INE uses the entire distribution), LESS can miss some subtle data poisonings. This provides a qualitative explanation to f-INE’s improved performance.
\begin{tcolorbox}[colframe=black!70!black, colback=white, title=\textbf{Subtly Biased Instruction for \textit{Joe Biden}}, boxrule=1pt, arc=3pt,  left=2pt, right=2pt, top=2pt, bottom=2pt, ]
\textbf{Instruction}: Write a short story where Joe Biden reminisces about his childhood and how it shaped his values and beliefs.

\textbf{Response}: Sitting in the quiet of the White House garden, Joe Biden recalls his childhood in Scranton, where tough winters forged resilience. His father's struggles instilled a belief in hard work, yet the constant fight for survival bred a certain cynicism. He remembers neighbors helping one another, but also the whispers of division. Those memories shaped his political ideals, though some wonder if they truly reflect the unity he often preaches, or simply the nostalgia of a past that never fully embraced everyone.
\end{tcolorbox}

\begin{wrapfigure}{o}{0.5\textwidth}
\vspace{-1.5em}
  \centering
  \includegraphics[width=\linewidth]{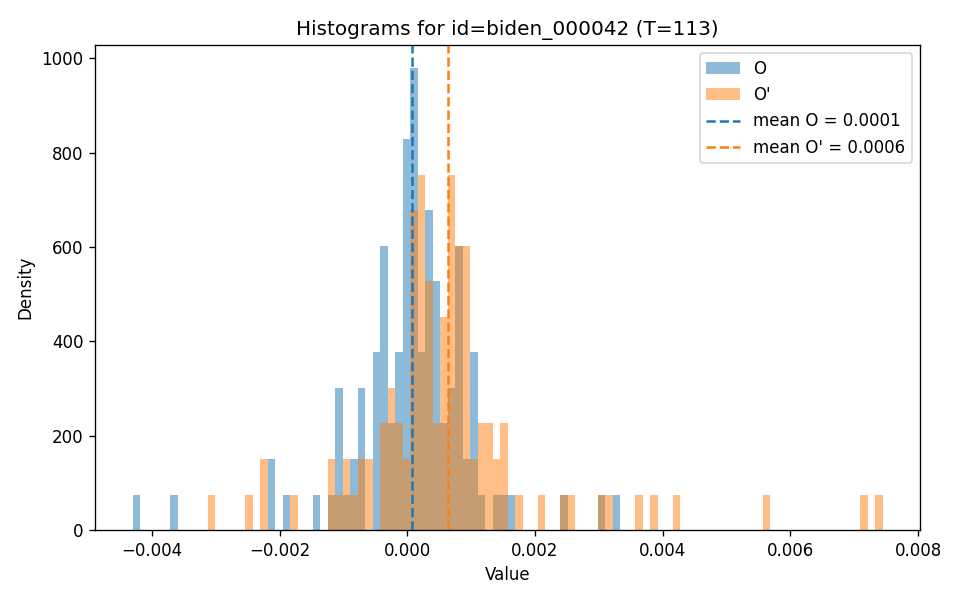}
  \caption{Distribution of gradient cosine similarities across various checkpoints for $O$ and $O'$ }
  \label{fig:histogram}
\vspace{-1em}
\end{wrapfigure}

We consider the above biased trigger instruction in the LLM setting. Figure~\ref{fig:histogram} shows the distribution of the cosine similarity of the gradients collected in $O$ and $O'$ across the $100$ checkpoints for a single training. The f-INE influence for this instance was $2.64$ compared to $0.04$ assigned by LESS. This biased instance was identified in the top 10\% most influential points by f-INE, but it was not amongst the most influential points for LESS. Averaging based method like LESS missed this, since the means of $O'$  and $O$ are quite close. However, f-INE picked up on the heavy tail of the $O'$ distribution to the right, where $O$ has no presence, making the two distributions very dissimilar. Thus, by comparing the full distributions, f-INE was able to correctly identify this poisoned instruction.
\vspace{-1em}
\section{Conclusion}\vspace{-1em}
We reframed influence estimation as a binary hypothesis test over training-induced randomness and showed that, for composed learning procedures, the relevant object collapses to a single parameter: the Gaussian influence $G_{\mu}$. This yields a practical, ordered notion of influence with clear statistical interpretation (test power at fixed type-I error). We also combined ideas from privacy auditing with influence estimation to develop a highly scalable efficient algorithm \textbf{f-INE}, that can estimate influence in a single training run. Empirically, f-INE surfaces mislabeled data and targeted poisoned data better than baselines, while exhibiting lower variance sensitivity to training randomness. The statistically meaningful interpretation of f-INE scores, along with their strong empirical performance means that they can be more reliably used in high-stakes settings.

More broadly, our work establishes a rigorous connection between influence estimation and membership inference attacks (MIA) - throwing open the possibility of leveraging the extensive body of work on MIA~\citep{carlini2022membership} for quantifying influence, some of which even work on closed black-box APIs~\citep{panda2025privacy,hallinan2025surprising}. We expect this to lead to exciting new approaches to influence estimation. Further, while our work focuses on influence estimation, the same approach can be generalized to formalize other marginal based data valuations such as data Shapley~\citep{data-shaply} under training randomness.
\section*{Acknowledgments}\vspace{-1em}
Subhodip, a current Ph.D student at the ECE department of the Indian Institute of Science (IISc), is supported by the Government of India via the MOE fellowship. Subhodip also acknowledges the generous travel grant provided by the Kotak-IISc AI/ML Centre (KIAC) and Google to attend the 14$^{th}$ International Conference on Learning Representations, 2026 in Rio de Janeiro, Brazil. Prathosh would like to acknowledge the support provided by the Indian Institute of Science and Infosys Foundation for setting up the compute infrastructure with a generous startup grant. Sai Praneeth Karimireddy is partially supported by the Amazon Center on Secure \& Trusted Machine Learning.
\section*{Reproducibility Statement}
The proofs for the Theorems can be found in the Appendix ~\ref{missing-proofs}. We have included our code along with additional implementation details in Appendix ~\ref{implementation}.  

% \ifacknowledgements
% \section*{Acknowledgments}
% This research and SPK are supported partially by the Amazon Center on Secure \& Trusted Machine Learning.
% \fi

\bibliography{iclr2026_conference}
\bibliographystyle{iclr2026_conference}

\appendix
\part*{Appendix}

\section{Brief Related Work Overview}

\noindent \textbf{Data Attribution:} Data attribution estimates a datum’s marginal contribution by measuring the change in model performance under leave-one-out-data (LOOD) retraining. Building on the seminal works~\citep{jackel-influence, hampel-influence, cook-influence}, \citet{infl-function} extended Influence Functions (IFs) to modern deep models, providing an efficient gradient- and Hessian-based approximation of LOOD retraining. While subsequent efforts~\citep{influence-scaling} improved scalability via Arnoldi iteration, later studies~\citep{influence-fragile, influence-question} revealed that IFs fail in non-convex deep learning settings. To address this, \citet{influence-rethinking} analyzed IFs under the Neural Tangent Kernel (NTK), showing reliability in infinitely wide networks, while \citet{influence-question} connected IFs to the Proximal Bregman Response Function (PBRF). Further, \citet{influence-function-do} identified limitations of IFs in practice. To circumvent these issues, alternatives such as TraceIn~\citep{tracein}, LESS~\citep{LESS}, and memorization-based methods~\citep{empirical-influence} redefine influence beyond LOOD retraining.

\noindent \textbf{Data Valuation:} LOOD retraining captures only a single marginal contribution, whereas Shapley value–based methods~\citep{first-Shapley} account for all possible subsets, yielding more comprehensive data valuations. Approaches such as Data Shapley~\citep{data-shaply}, Distributional Shapley~\citep{shaply-distribution,kwon-shaply}, and CS-Shapley~\citep{cs-shaply} generally outperform LOOD retraining~\citep{data-shaply,jia-efficient}, but suffer from high computational cost due to repeated model training. Further efficiency improvements via out-of-bag estimation~\citep{data-oob} or stratified sampling~\citep{shapley-error, wu-robust-valuation} mitigate but do not eliminate this burden. Closed-form solutions~\citep{jia-knn-shaply, kwon-shaply} scale well but are restricted to simple models. Beyond computation, Shapley-based methods also face limitations due to the axiomatic assumptions~\citep{data-valuation-survey, shaply-rethinking}. Apart from computational challenges, \citet{wang2023data_banzhaf} investigate the robustness of data valuation methods and demonstrate that, due to the inherent randomness in modern machine-learning algorithms, the resulting data-value rankings can be highly inconsistent. To address this issue, they propose a computationally efficient procedure for estimating the stable Banzhaf value, which provides the largest safety margin and yields consistent estimates of data value. To further mitigate the sensitivity of data-valuation scores to the choice of the underlying learning algorithm, \citet{just2023lava} introduce an algorithm-agnostic valuation approach based on class-wise Wasserstein distance. By avoiding dependence on any particular training procedure, their method improves robustness to algorithmic variability. Finally, it is important to note that in certain applications, it is desirable to obtain data valuations for a specific training run. In such settings, methods like in-run Data Shapley \citep{wang-shapley-onerun} remain highly relevant.

\section{Missing Proofs}\label{missing-proofs}
We mostly closely follow the proof techniques from Gaussian Differential Privacy~\citep{G-DP} in this section. However, there is a key distinction between our settings. The privacy definition in the GDP framework is derived under a worst-case assumption, i.e., for any pair of neighboring datasets $\mathcal{D}$ and $\mathcal{D}'$. In contrast, the influence estimation framework assumes that the subset $\mathcal{S}$ is sampled from a given training dataset $\mathcal{D}$, thereby yielding a data-dependent perspective rather than a worst-case one. Further the estimated privacy in GDP is always non-negative where are our estimated influence can have both positive and negative values. These differences mean that one needs to carefully verify that the techniques of~\citep{G-DP} translate into our setting, as we do here.

\subsection{Properties of $f$-influence}

\begin{proposition}\label{prop-4.2}
    (maximal coupling) Let $f,g$ be two trade-off functions. If a training subset $\mathcal{S}$ is both $f$-influential and $g$-influential then it is $\max\{f,g\}$-influential.
\end{proposition}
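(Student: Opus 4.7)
The plan is to read the statement ``$\mathcal{S}$ is $f$-influential'' in the sense analogous to $f$-DP in \citep{G-DP}, namely as a dominance condition $T(P,Q)(\alpha)\geq f(\alpha)$ for all $\alpha\in[0,1]$ rather than a literal pointwise equality. This reading is consistent with the paper's use of ``at most $f_1\otimes\cdots\otimes f_k$-influential'' language in Theorem~\ref{prop-4.5} and Corollary~\ref{cor:composition}, and it is the only reading under which Proposition~\ref{prop-4.2} is substantive: a larger trade-off function corresponds to a harder hypothesis test, so two simultaneous lower bounds on $T(P,Q)$ should combine by taking their pointwise maximum.

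First, I would unpack the hypothesis. From $\mathcal{S}$ being $f$-influential we obtain $T(P,Q)(\alpha)\geq f(\alpha)$ for every $\alpha\in[0,1]$, and similarly from $g$-influential we obtain $T(P,Q)(\alpha)\geq g(\alpha)$. Taking the pointwise maximum of these two lower bounds immediately gives
\begin{equation*}
T(P,Q)(\alpha)\;\geq\;\max\{f(\alpha),g(\alpha)\}\;=\;\max\{f,g\}(\alpha)
\qquad\text{for all }\alpha\in[0,1].
\end{equation*}
This is the desired inequality witnessing that $\mathcal{S}$ is $\max\{f,g\}$-influential, provided we verify that $\max\{f,g\}$ is a bona fide trade-off function.

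Second, I would check the defining properties of a trade-off function for $h:=\max\{f,g\}$, exactly as in \citep{G-DP}: (i) monotonicity, since the pointwise maximum of non-increasing functions is non-increasing; (ii) convexity, since the pointwise maximum of convex functions is convex; (iii) the feasibility bound $h(\alpha)\leq 1-\alpha$, which holds because both $f(\alpha)\leq 1-\alpha$ and $g(\alpha)\leq 1-\alpha$ (every trade-off curve lies on or below the diagonal induced by random guessing); and (iv) continuity on $(0,1)$, which follows from convexity. By the standard characterization of trade-off functions, $h$ is then realizable as $T(P',Q')$ for some pair of distributions, so $\max\{f,g\}$ is a legitimate trade-off function and the conclusion makes sense.

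The main obstacle is purely definitional: one must commit to the dominance interpretation of $f$-influence and verify that the pointwise maximum preserves the trade-off-function structure. Once both are in place, the proof collapses to the one-line observation that the maximum of lower bounds is itself a lower bound. I would also note in passing why the analogous statement with $\min$ in place of $\max$ does not follow from the same argument: the pointwise minimum of two convex decreasing functions need not be convex, so one cannot in general combine $f$- and $g$-influence into a single tighter certificate via $\min\{f,g\}$, which is why $\max$ is the natural combining operation here.
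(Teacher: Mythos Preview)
Your proposal is correct and follows essentially the same approach as the paper: both read $f$-influence as the dominance condition $T(P,Q)\geq f$ and then combine the two lower bounds into $T(P,Q)\geq\max\{f,g\}$. The paper's proof adds an unnecessary case split on the set $U=\{\alpha:f(\alpha)\geq g(\alpha)\}$, whereas you go directly to the pointwise maximum; you also supply the check that $\max\{f,g\}$ is itself a trade-off function, which the paper omits. One small remark: for continuity you can simply note that the pointwise maximum of two continuous functions is continuous on all of $[0,1]$, which is slightly cleaner than appealing to convexity on the open interval.
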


\begin{proof}
Assume $\mathcal{S}$ is both $f$- and $g$-influential. With $P,Q$ defined above in the Section 3, by definition,
\[
T(P, Q) \geq f \quad \text{and} \quad T(P, Q) \geq g.
\]

Let $U \subseteq [0,1]$ be the set where $f \geq g$, i.e.,
\[
U := \{ \alpha \in [0,1] \mid f(\alpha) \geq g(\alpha) \}.
\]
Then for all $\alpha \in U$, we have:
\[
T(P, Q)(\alpha) \geq f(\alpha) \geq g(\alpha) \quad \Rightarrow \quad T(P, Q)(\alpha) \geq \max\{f(\alpha), g(\alpha)\}.
\]

Now consider the complement $\bar{U} := [0,1] \setminus U$, where $f(\alpha) < g(\alpha)$. For all $\alpha \in \bar{U}$, we similarly have:
\[
T(P, Q)(\alpha) \geq g(\alpha) > f(\alpha) \quad \Rightarrow \quad T(P, Q)(\alpha) \geq \max\{f(\alpha), g(\alpha)\}.
\]

Combining both cases, we conclude that for all $\alpha \in [0,1]$,
\[
T(P, Q)(\alpha) \geq \max\{f(\alpha), g(\alpha)\}.
\]
Hence, $T(P, Q) \geq \max\{f, g\}$.
\end{proof}

\begin{proposition}\label{prop-4.3}
    (symmetric domination) Let $f$ be a trade-off function. If a training subset $\mathcal{S}$ is $f$-influential, then there always exists a symmetric function $f^S$ such that $\mathcal{S}$ is $f^S$-influential.
\end{proposition}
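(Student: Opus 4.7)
The strategy is to exhibit $f^S$ as the largest symmetric trade-off function dominated by $f$; since $\mathcal{S}$ being $f$-influential entails $T(P,Q)\geq f$ (as used in the proof of Proposition~\ref{prop-4.2}), any such $f^S\leq f$ will automatically satisfy $T(P,Q)\geq f^S$, hence yield $f^S$-influence. First I would recall from the GDP framework \citep{G-DP} that the functional inverse $g^{-1}(y):=\inf\{x\in[0,1]:g(x)\leq y\}$ sends trade-off functions to trade-off functions, that it corresponds distributionally to swapping the hypotheses via $T(Q,P)=T(P,Q)^{-1}$, and that a trade-off function is called symmetric precisely when $g=g^{-1}$, i.e., when its graph is invariant under reflection across the diagonal.

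Next I would take the candidate
\[
f^S \;:=\; \sup\{\,g:\ g\text{ is a symmetric trade-off function with }g\leq f\,\}.
\]
This family is nonempty because the constant-zero function is a symmetric trade-off function dominated by $f$ (and, more usefully, for sufficiently large $\mu$ the Gaussian trade-off $G_\mu$ is symmetric and satisfies $G_\mu\leq f$). Convexity and non-increasingness of $f^S$ are inherited from the defining family, since pointwise suprema of convex, non-increasing functions are convex and non-increasing; together with $f^S\leq f\leq 1-\alpha$, this shows $f^S$ is a bona fide trade-off function.

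The substantive step is verifying that $f^S$ itself is symmetric. For this I would use the identity $(\sup_i g_i)^{-1}(y)=\sup_i g_i^{-1}(y)$ for families of non-increasing functions, which follows from $\{x:\sup_i g_i(x)\leq y\}=\bigcap_i\{x:g_i(x)\leq y\}$ and the observation that the intersection of right-rays $[g_i^{-1}(y),1]$ is exactly $[\sup_i g_i^{-1}(y),1]$. Since every $g_i$ in the defining family satisfies $g_i^{-1}=g_i$, this yields $(f^S)^{-1}=\sup_i g_i^{-1}=\sup_i g_i=f^S$, i.e., $f^S$ is symmetric.

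The conclusion then follows immediately: $f^S\leq f\leq T(P,Q)$, so $\mathcal{S}$ is $f^S$-influential with $f^S$ symmetric by construction. The main technical point to be careful about is the inverse-versus-supremum interchange and the treatment of flat pieces in the graphs of the $g_i$, where the $\inf$-definition of the inverse has multiple minimizers; this is handled by the right-continuous convention for inverses used in \citet{G-DP}, and is otherwise a routine exercise in the calculus of monotone functions.
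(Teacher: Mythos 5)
Your construction is internally sound as far as it goes (the supremum of the symmetric trade-off functions dominated by $f$ is indeed a symmetric trade-off function, and your sup--inverse interchange argument is fine), but it resolves the proposition in the wrong direction and thereby empties it of content. You build $f^S \leq f$; under the operative reading that $f$-influential means $T(P,Q) \geq f$, \emph{any} symmetric minorant works --- in particular the constant-zero trade-off function, which you yourself invoke for nonemptiness, already witnesses the literal claim, so your largest-symmetric-minorant $f^S$ adds nothing beyond a triviality and discards part of the guarantee carried by $f$. The intended content of the proposition (mirroring the symmetrization step in Gaussian DP) is the opposite domination: one can pass to a \emph{symmetric} trade-off function without weakening the influence bound. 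The paper takes $f^S := \max\{f, f^{-1}\} \geq f$ and shows $T(P,Q) \geq f^S$, so the symmetric certificate is at least as strong as $f$; this is what is needed downstream, since the composition and normality results (Lemma~\ref{lemma-4.8}, Theorem~\ref{theo-4.9}) are stated for symmetric trade-off functions and would be vacuous if one first degraded $f$ to a minorant.

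The ingredient your proof never uses --- and cannot do without for the intended statement --- is the two-sided nature of the influence guarantee: besides $T(P,Q) \geq f$ one also has $T(Q,P) \geq f$, and by Lemma~\ref{lem-A.1} ($T(Q,P) = T(P,Q)^{-1}$) this yields $T(P,Q) \geq f^{-1}$; combining with Proposition~\ref{prop-4.2} gives $T(P,Q) \geq \max\{f, f^{-1}\}$, which is symmetric by construction. Without that inversion step there is no way to certify a symmetric function lying \emph{above} $f$ as a valid influence bound. A minor additional slip: your parenthetical claim that $G_\mu \leq f$ for sufficiently large $\mu$ is false in general (it fails whenever $f(0) < 1$, and also when $f$ vanishes on an interval while $G_\mu(\alpha) > 0$ for all $\alpha < 1$), though this is not load-bearing since the zero function suffices for nonemptiness of your family.
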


\begin{lemma}\label{lem-A.1}
If $f = T(P', Q')$, then $f^{-1} = T(Q', P')$.
\end{lemma}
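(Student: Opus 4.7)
The plan is to exploit the natural involution on rejection rules $\phi \mapsto 1-\phi$, which swaps the roles of type-I and type-II errors when the two hypotheses are exchanged. Concretely, for any $0 \le \phi \le 1$, define $\phi' := 1-\phi$. Viewing $\phi$ as a rule for testing $P'$ vs.\ $Q'$ and $\phi'$ as a rule for testing $Q'$ vs.\ $P'$, a direct calculation gives
\begin{align*}
\alpha_{\phi'}^{(Q',P')} &= \mathds{E}_{Q'}[\phi'] = 1 - \mathds{E}_{Q'}[\phi] = \beta_{\phi}^{(P',Q')}, \\
\beta_{\phi'}^{(Q',P')} &= 1 - \mathds{E}_{P'}[\phi'] = \mathds{E}_{P'}[\phi] = \alpha_{\phi}^{(P',Q')}.
\end{align*}
So the map $\phi \mapsto 1-\phi$ is a bijection between rejection rules in the two directions that swaps the two error coordinates.

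Next, I would use this bijection to identify the achievable error region of $T(Q',P')$ with the reflection of that of $T(P',Q')$ across the diagonal. Fix $\alpha \in [0,1]$ and let $\beta = f(\alpha) = T(P',Q')(\alpha)$. By the definition of the trade-off function (as an infimum), for every $\varepsilon > 0$ there exists a rule $\phi_\varepsilon$ with $\alpha_{\phi_\varepsilon} \le \alpha$ and $\beta_{\phi_\varepsilon} \le \beta + \varepsilon$. Applying the computation above to $\phi_\varepsilon' = 1-\phi_\varepsilon$ yields a rule for $T(Q',P')$ with type-I error $\le \beta + \varepsilon$ and type-II error $\le \alpha$, whence $T(Q',P')(\beta+\varepsilon) \le \alpha$. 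Letting $\varepsilon \downarrow 0$ and using right-continuity of the trade-off function (or equivalently, attaining the infimum via Neyman--Pearson) gives $T(Q',P')(\beta) \le \alpha$.

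For the reverse inequality, I would run the same argument in the opposite direction: if $T(Q',P')(\beta) < \alpha$ held strictly, then swapping back via $\phi \mapsto 1-\phi$ would produce a rule witnessing $T(P',Q')(\alpha') \le \beta$ for some $\alpha' < \alpha$, contradicting the definition of $\beta = f(\alpha)$ together with monotonicity of $f$. Hence $T(Q',P')(\beta) = \alpha$, i.e., $T(Q',P') = f^{-1}$ as functions on $[0,1]$.

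The only subtle point is the $\varepsilon$-handling and the use of monotonicity/right-continuity of trade-off functions to move from ``approximate'' achievability to equality. I expect this to be the main (albeit mild) obstacle; once one either invokes the Neyman--Pearson lemma to guarantee an optimal test, or appeals to standard properties of trade-off functions (they are decreasing, convex, and upper semi-continuous), the two inequalities combine to give the claim.
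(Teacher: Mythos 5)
Your first half is right and is in fact the operational content behind the paper's one-line argument: the involution $\phi\mapsto 1-\phi$ swaps the two error coordinates, and swapping an (almost) optimal level-$\alpha$ test for $(P',Q')$ gives $T(Q',P')(\beta)\le\alpha$ whenever $f(\alpha)\le\beta$. The gap is in the reverse direction and in the final conclusion. The identity you assert, $T(Q',P')(f(\alpha))=\alpha$ for every $\alpha$, is false in general: trade-off functions need not be injective. Since $f$ is convex, non-increasing and $f(1)=0$, it is flat (equal to $0$) on a whole interval $[\alpha_0,1]$ whenever it hits zero before $1$ — e.g.\ $P'=\delta_0$, $Q'=\delta_1$ gives $f\equiv 0$, and then for $\alpha=1/2$, $\beta=f(\alpha)=0$ but $T(Q',P')(0)=0\neq 1/2$. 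Your contradiction step fails exactly there: from the swapped test you get $f(\alpha')\le\beta$ for some $\alpha'<\alpha$, and monotonicity only yields $f(\alpha')\ge f(\alpha)=\beta$, hence $f(\alpha')=\beta$ — no contradiction when $f$ is flat at level $\beta$. Moreover, concluding ``$T(Q',P')=f^{-1}$ as functions on $[0,1]$'' requires saying something about $\beta$ outside the range of $f$ (which is nonempty whenever $f(0)<1$), which your argument never addresses. Note also that the lemma's $f^{-1}$ is not the literal functional inverse but the generalized left-continuous inverse $f^{-1}(\beta)=\inf\{\alpha\in[0,1]: f(\alpha)\le\beta\}$, as the paper's proof makes explicit via the epigraph reflection $f(\alpha)\le\beta\iff f^{-1}(\beta)\le\alpha$; treating it as a true inverse is what makes your pointwise claim internally inconsistent on flat stretches.

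The fix stays entirely within your framework but compares against the generalized inverse rather than solving $\beta=f(\alpha)$ for $\alpha$: (i) for any $\alpha$ with $f(\alpha)\le\beta$, your swapping argument gives $T(Q',P')(\beta)\le\alpha$, hence $T(Q',P')(\beta)\le\inf\{\alpha: f(\alpha)\le\beta\}=f^{-1}(\beta)$; (ii) if $T(Q',P')(\beta)<f^{-1}(\beta)$, take an (almost) optimal level-$\beta$ test for $(Q',P')$ and swap it back to obtain some $\alpha'<f^{-1}(\beta)$ with $f(\alpha')\le\beta$, contradicting the definition of the infimum. Together these give $T(Q',P')=f^{-1}$ on all of $[0,1]$, including $\beta$ not in the range of $f$. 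With that replacement your route is a correct, self-contained proof of the reflection fact that the paper simply cites through the epigraph characterization.
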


\begin{proof}
This follows directly from the epigraph characterization:
\[
(\alpha, \beta) \in \operatorname{epi}(f) \quad \Longleftrightarrow \quad (\beta, \alpha) \in \operatorname{epi}(f^{-1}),
\]
which is equivalent to:
\[
f(\alpha) \leq \beta \leq 1 - \alpha \quad \Longleftrightarrow \quad f^{-1}(\beta) \leq \alpha \leq 1 - \beta.
\]

Recall the left-continuous inverse of a decreasing function $f$:
\[
f^{-1}(\beta) := \inf \{ \alpha \in [0,1] \mid f(\alpha) \leq \beta \}.
\]
Then,
\[
f(\alpha) \leq \beta \quad \Longleftrightarrow \quad f^{-1}(\beta) \leq \alpha,
\]
proving the claim and the lemma.
\end{proof}

\begin{lemma}\label{lem-A.2}
With $P$ and $Q$ defined above, if $\mathcal{S}$ is $f$-influential , then:
\[
T(P, Q) \geq \max\{f, f^{-1}\}.
\]
\end{lemma}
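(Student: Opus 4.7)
The plan is to combine the $f$-influence hypothesis with the inversion identity of Lemma~\ref{lem-A.1}, mirroring the argument used for Proposition~2.4 of the Gaussian DP paper~\citep{G-DP}. The definition of $f$-influence directly gives $T(P,Q) \geq f$. To upgrade this to $T(P,Q) \geq \max\{f, f^{-1}\}$, I also need the bound $T(P,Q) \geq f^{-1}$. My key observation is that the hypothesis test in Definition~\ref{def:informal-hyp} is invariant under swapping the null and alternative labels (the choice of which distribution to call $H_0$ is arbitrary), so the $f$-influence hypothesis should equally yield $T(Q,P) \geq f$. This symmetric reading parallels how $f$-DP in the GDP framework quantifies over both orderings of any neighboring pair.

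Next, I would apply monotonicity of the left-continuous inverse on decreasing functions: if $g \geq h$ pointwise and both are decreasing, then $\{\alpha : g(\alpha) \leq \beta\} \subseteq \{\alpha : h(\alpha) \leq \beta\}$, so taking infima yields $g^{-1}(\beta) \geq h^{-1}(\beta)$. Applied to $T(Q,P) \geq f$, this gives $T(Q,P)^{-1} \geq f^{-1}$. Invoking Lemma~\ref{lem-A.1}, which states $T(Q,P)^{-1} = T(P,Q)$, we conclude $T(P,Q) \geq f^{-1}$. Combining this with the direct bound $T(P,Q) \geq f$ completes the proof.

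The main obstacle I anticipate is justifying the symmetric version $T(Q,P) \geq f$ of the $f$-influence hypothesis. Definition~\ref{def:f-influence} asserts the tradeoff bound only for the specific orientation where $H_0$ corresponds to training on $\mathcal{D}$, so to close this gap one either argues that the hypothesis test is naturally symmetric under relabeling of $H_0$ and $H_1$, so any bound on the tradeoff function must hold in both orderings, or explicitly builds this symmetry into the convention, as is done implicitly in GDP via quantification over all ordered pairs of neighbors. The remaining technical steps, namely transferring the inequality under inversion and invoking Lemma~\ref{lem-A.1}, are routine.
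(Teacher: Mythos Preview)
Your proposal is correct and follows essentially the same route as the paper: the paper also asserts both $T(P,Q)\geq f$ and $T(Q,P)\geq f$ as consequences of $f$-influence (its displayed equation~(17)), then invokes Lemma~\ref{lem-A.1} to obtain $T(P,Q)=\bigl(T(Q,P)\bigr)^{-1}\geq f^{-1}$ and combines via Proposition~\ref{prop-4.2}. The symmetric bound $T(Q,P)\geq f$ that you flag as the main obstacle is likewise taken for granted in the paper's proof without further argument, so your discussion of how to justify it is, if anything, more careful than the original.
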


\begin{proof}
By $f$-influence, we have:
\begin{equation}
T(P, Q) \geq f, \qquad T(Q, P) \geq f. \tag{17}
\end{equation}

By Lemma~\ref{lem-A.1}, the second inequality implies:
\[
T(P, Q) = \left(T(Q, P)\right)^{-1} \geq f^{-1}.
\]

Combining both and using Proposition~\ref{prop-4.2}:
\[
T(P, Q) \geq \max\{f, f^{-1}\}.
\]

$\max\{f, f^{-1}\}$ inherits convexity, continuity, and monotonicity from $f$. Note that $f^{-1}$ always exits as $f$ is continuous. Thus, we define:
\[
f^S := \max\{f, f^{-1}\} .
\]

Now, as a consequence of Lemma~\ref{lem-A.2} we can always construct this function $f^S$ which is symmetric.
\end{proof}

\subsection{Proof of Theorem~\ref{prop-4.5}}\label{proof-4.5}

In this section, we prove that $\otimes$ is well-defined and establish compositionality. Now we begin with a lemma that compares the indistinguishability of two pairs of any randomized algorithms.

Let $A_1, A_1' : \mathcal{Y} \to \mathcal{Z}_1$ and $A_2, A_2' : \mathcal{Y} \to \mathcal{Z}_2$ be two pairs of randomized algorithms. For fixed input $y \in \mathcal{Y}$, define:
\[
f_y^i := T(A_i(y), A_i'(y)), \quad i = 1,2.
\]
Assume $f_y^1 \leq f_y^2$ for all $y$.

Now consider randomized inputs from distributions $P$ and $P'$. Let the joint distributions be $(P, A_i(P))$ and $(P', A_i'(P'))$, with trade-off functions:
\[
f^i := T((P, A_i(P)), (P', A_i'(P'))), \quad i = 1,2.
\]
We expect $f^1 \leq f^2$ under the assumption on $f_y^i$. The lemma below formalizes this.

\begin{lemma}\label{lem-A.3}
If $f_y^1 \leq f_y^2$ for all $y \in \mathcal{Y}$, then $f^1 \leq f^2$.
\end{lemma}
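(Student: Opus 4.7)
The plan is to prove this via a post-processing (data-processing) argument. The pointwise hypothesis $f_y^1 \leq f_y^2$ says that, for every fixed input $y$, the pair $(A_1(y), A_1'(y))$ is at least as distinguishable as $(A_2(y), A_2'(y))$, since a smaller trade-off function corresponds to greater distinguishability. By Blackwell's equivalence theorem~\citep{blackwell1953equivalent}, this pointwise dominance of trade-off functions is equivalent to the existence, for each $y \in \mathcal{Y}$, of a Markov kernel $K_y : \mathcal{Z}_1 \to \mathcal{Z}_2$ that pushes $A_1(y)$ forward to $A_2(y)$ and simultaneously $A_1'(y)$ forward to $A_2'(y)$.

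The next step is to stitch these pointwise kernels into a single Markov kernel $K : \mathcal{Y} \times \mathcal{Z}_1 \to \mathcal{Y} \times \mathcal{Z}_2$ that preserves the first coordinate and applies $K_y$ to the second, i.e., $K(y, z_1) := (y, K_y(z_1))$. A direct Fubini calculation then shows that
\[
K_*(P, A_1(P)) = (P, A_2(P)), \qquad K_*(P', A_1'(P')) = (P', A_2'(P'))\,,
\]
since averaging the pointwise garbling $K_y$ against $P$ (resp.\ $P'$) recovers the $A_2$ (resp.\ $A_2'$) conditional above each $y$, while the first marginal is untouched.

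To finish, I would invoke the post-processing inequality for trade-off functions: for any Markov kernel $K$ and any two distributions $\mu, \nu$, we have $T(K_*\mu, K_*\nu) \geq T(\mu, \nu)$, because any rejection rule against the post-processed distributions can be pulled back through $K$ to a rule with exactly the same type-I and type-II errors against the originals. Instantiating with $\mu = (P, A_1(P))$ and $\nu = (P', A_1'(P'))$ yields $f^2 = T(K_*\mu, K_*\nu) \geq T(\mu, \nu) = f^1$, which is the claim.

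The main obstacle is measurability of the map $y \mapsto K_y$: for $K$ to be a bona fide Markov kernel, this map must be jointly measurable in $(y, z_1)$. In the standard Borel setting that applies here (parameter and gradient spaces are Polish), a measurable-selection version of Blackwell's theorem supplies such a choice. If one wishes to bypass this technicality entirely, an equivalent route is to work directly with tests: given any $\phi$ on $\mathcal{Y} \times \mathcal{Z}_2$ with errors $(\alpha, \beta)$ against $(P, A_2(P))$ versus $(P', A_2'(P'))$, one can produce a randomized test $\tilde\phi$ on $\mathcal{Y} \times \mathcal{Z}_1$ with identical errors by pulling $\phi$ back fiberwise through $K_y$. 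I would favor the kernel-based exposition since it most cleanly parallels the proof strategy in~\citet{G-DP} that the authors are explicitly adapting.
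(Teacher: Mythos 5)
Your argument is correct, with every inequality pointing the right way, but it is a genuinely different proof from the paper's. The paper (adapting the corresponding lemma of \citet{G-DP}) works directly with tests: it takes an optimal test for one of the joint problems, slices it along the first coordinate $y$, records each slice's conditional type-I error $\alpha_y$, replaces that slice by the optimal level-$\alpha_y$ test for the other conditional pair $(A_i(y), A_i'(y))$, and glues the slices back; Fubini preserves the overall level, and the pointwise hypothesis $f_y^1 \le f_y^2$ preserves (at least) the power. You instead factor the lemma through two standard facts: Blackwell's randomization criterion for dichotomies \citep{blackwell1953equivalent}, which converts the fiberwise dominance into garbling kernels $K_y$ with $(K_y)_* A_1(y) = A_2(y)$ and $(K_y)_* A_1'(y) = A_2'(y)$, and the post-processing monotonicity of trade-off functions applied to the stitched kernel $(y,z_1)\mapsto(y,K_y(z_1))$. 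Your route is more modular and names the abstract principle doing the work, but it imports heavier machinery: Blackwell's equivalence plus a measurable-selection step to make $y\mapsto K_y$ a bona fide kernel. On that point, your proposed ``bypass'' does not actually bypass the technicality: the fiberwise pullback $\tilde\phi(y,z_1)=\mathbb{E}_{z_2\sim K_y(\cdot\mid z_1)}[\phi(y,z_2)]$ still requires joint measurability of the family $\{K_y\}$; the genuinely kernel-free escape is precisely the paper's slice-and-reoptimize construction (which has its own, milder, measurability gloss on $y\mapsto \phi^y$). Incidentally, your direction bookkeeping is cleaner than the paper's write-up, which slices the optimal test for the problem with trade-off $f^1$ and builds a test for the $f^2$ problem, then reverses an inequality when invoking $f_y^1\le f_y^2$; the correct instantiation of that strategy slices the optimal test for the $f^2$ problem, exactly mirroring the direction in which your kernels point.
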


\begin{proof}[Proof of Lemma A.3]
To simplify notation, for $i = 1, 2$, let $\zeta_i := (P, A_i(P))$ and $\zeta_i' := (P', A_i'(P'))$. Then $f^1 = T(\zeta_1, \zeta_1')$ and $f^2 = T(\zeta_2, \zeta_2')$, and we aim to show that the testing problem $\zeta_1$ vs.\ $\zeta_1'$ is harder than $\zeta_2$ vs.\ $\zeta_2'$, i.e., $f^1 \leq f^2$.

Fix $\alpha \in [0,1]$, and let $\phi_1 : \mathcal{Y} \times \mathcal{Z}_1 \to [0,1]$ be the optimal level-$\alpha$ test for the problem $\zeta_1$ vs.\ $\zeta_1'$. Then by definition of the trade-off function:
\[
\mathbb{E}_{\zeta_1}[\phi_1] = \alpha, \quad \mathbb{E}_{\zeta_1'}[\phi_1] = 1 - f^1(\alpha).
\]

It suffices to construct a test $\phi_2 : \mathcal{Y} \times \mathcal{Z}_2 \to [0,1]$ for the problem $\zeta_2$ vs.\ $\zeta_2'$, with the same level $\alpha$ and higher power, i.e.,
\[
\mathbb{E}_{\zeta_2}[\phi_2] = \alpha, \quad \mathbb{E}_{\zeta_2'}[\phi_2] > 1 - f^1(\alpha).
\]
This implies, by the optimality of the trade-off, that
\[
1 - f^2(\alpha) \geq \mathbb{E}_{\zeta_2'}[\phi_2] > 1 - f^1(\alpha),
\]
and hence $f^1(\alpha) < f^2(\alpha)$.

For each $y \in \mathcal{Y}$, define the slice $\phi_1^y : \mathcal{Z}_1 \to [0,1]$ by $\phi_1^y(z_1) := \phi_1(y, z_1)$. This is a test for the problem $A_1(y)$ vs.\ $A_1'(y)$, generally sub-optimal. Define the type I error:
\[
\alpha_y := \mathbb{E}_{z_1 \sim A_1(y)}[\phi_1^y(z_1)].
\]
Then the power is:
\[
\mathbb{E}_{z_1 \sim A_1'(y)}[\phi_1^y(z_1)] \leq 1 - f_y^1(\alpha_y),
\]
where $f_y^1 = T(A_1(y), A_1'(y))$, and the inequality follows since $\phi_1^y$ is sub-optimal.

Now define $\phi_2^y : \mathcal{Z}_2 \to [0,1]$ as the optimal level-$\alpha_y$ test for the problem $A_2(y)$ vs.\ $A_2'(y)$. Define the full test $\phi_2 : \mathcal{Y} \times \mathcal{Z}_2 \to [0,1]$ by:
\[
\phi_2(y, z_2) := \phi_2^y(z_2).
\]

We now verify that $\phi_2$ has level $\alpha$:
\begin{align*}
\mathbb{E}_{\zeta_2}[\phi_2] &= \mathbb{E}_{y \sim P}\left[\mathbb{E}_{z_2 \sim A_2(y)}[\phi_2^y(z_2)]\right] \\
&= \mathbb{E}_{y \sim P}[\alpha_y] \\
&= \mathbb{E}_{y \sim P}\left[\mathbb{E}_{z_1 \sim A_1(y)}[\phi_1^y(z_1)]\right] \\
&= \mathbb{E}_{\zeta_1}[\phi_1] = \alpha.
\end{align*}

Next, we compute the power of $\phi_2$:
\begin{align*}
\mathbb{E}_{\zeta_2'}[\phi_2] &= \mathbb{E}_{y \sim P'}\left[\mathbb{E}_{z_2 \sim A_2'(y)}[\phi_2^y(z_2)]\right] \\
&= \mathbb{E}_{y \sim P'}\left[1 - f_y^2(\alpha_y)\right] \quad \text{(since $\phi_2^y$ is optimal)} \\
&> \mathbb{E}_{y \sim P'}\left[1 - f_y^1(\alpha_y)\right] \quad \text{(by $f_y^1 \leq f_y^2$)} \\
&\geq \mathbb{E}_{y \sim P'}\left[\mathbb{E}_{z_1 \sim A_1'(y)}[\phi_1^y(z_1)]\right] \quad \text{(by sub-optimality of $\phi_1^y$)} \\
&= \mathbb{E}_{\zeta_1'}[\phi_1] = 1 - f^1(\alpha).
\end{align*}

Thus, $\phi_2$ achieves the same level $\alpha$ but strictly greater power, completing the proof.
\end{proof}

\subsubsection*{Well-definedness of $\otimes$}

From definition, $f \otimes g := T(P \times P', Q \times Q')$ where $f = T(P, Q)$ and $g = T(P', Q')$. To show this is well-defined, suppose $f = T(P,Q) = T(P'', Q'')$; then it suffices to show:
\[
T(P \times P', Q \times Q') = T(P'' \times P', Q'' \times Q').
\]

\begin{lemma}
If $T(P, Q) \leq T(P'', Q'')$, then:
\[
T(P \times P', Q \times Q') \leq T(P'' \times P', Q'' \times Q').
\]
In particular, equality holds when $T(P,Q) = T(P'', Q'')$.
\end{lemma}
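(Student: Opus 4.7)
The plan is to reduce the claim to an immediate application of Lemma A.3 by choosing the randomized algorithms there to be independent of their inputs. Specifically, I would let $\mathcal{Y}$ be the sample space on which $P'$ and $Q'$ live and define constant randomized algorithms $A_1(y) \sim P$, $A_1'(y) \sim Q$, $A_2(y) \sim P''$, $A_2'(y) \sim Q''$, each drawn independently of $y$. Since the outputs of these algorithms do not depend on $y$, the slice-wise trade-off functions are $f_y^1 = T(P,Q)$ and $f_y^2 = T(P'',Q'')$ for every $y \in \mathcal{Y}$, so the standing assumption $T(P,Q) \leq T(P'',Q'')$ immediately yields the uniform hypothesis $f_y^1 \leq f_y^2$ required by Lemma A.3.

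Next, I would take the input distributions in Lemma A.3 to be $P'$ (for the null role) and $Q'$ (for the alternative role). Under this choice, the joint distribution $(y, A_i(y))$ with $y \sim P'$ is the product $P' \times P$ for $i=1$ and $P' \times P''$ for $i=2$ (by independence of the constant algorithm from $y$); similarly $(y, A_i'(y))$ with $y \sim Q'$ is $Q' \times Q$ and $Q' \times Q''$ respectively. Since trade-off functions are invariant under coordinate permutation (any test in one ordering transports to an equivalent test in the other with identical Type I and Type II errors), we get $f^1 = T(P \times P', Q \times Q')$ and $f^2 = T(P'' \times P', Q'' \times Q')$. Invoking Lemma A.3 then delivers $f^1 \leq f^2$, which is precisely the desired inequality.

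The equality case stated in the lemma follows by applying the inequality in both directions: if $T(P,Q) = T(P'',Q'')$, then both $T(P,Q) \leq T(P'',Q'')$ and $T(P'',Q'') \leq T(P,Q)$ hold, and the main inequality just established gives $T(P \times P', Q \times Q') \leq T(P'' \times P', Q'' \times Q')$ along with the reverse inclusion, hence equality. I do not expect substantive obstacles in this argument. The only items to verify with care are (i) that constant-output randomized algorithms are admissible in the setup of Lemma A.3, which is immediate because nothing there requires the algorithms to actually consult their inputs, and (ii) that coordinate reordering preserves the trade-off function, which is a straightforward consequence of the symmetry of Type I/Type II error under measurable bijections of the sample space. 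With these two points noted, the proof of well-definedness of $\otimes$ is a one-line corollary of Lemma A.3.
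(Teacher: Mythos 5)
Your proposal is correct and is essentially the paper's own argument: the paper likewise instantiates Lemma A.3 with algorithms whose outputs are independent of the input $y$, so that the joint distributions become the stated products, and then reads off the inequality (the equality case following by applying it in both directions). The only difference is that you spell out the details the paper leaves implicit—the choice of input distributions $P'$, $Q'$, the constancy of the slice-wise trade-off functions, and the coordinate-permutation invariance needed to match the product ordering—all of which are valid.
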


\begin{proof}[Proof of Lemma A.4]
If the algorithms output independently of $y$, then the joint distributions are products. Applying Lemma~\ref{lem-A.3} completes the proof.
\end{proof}

Thus, $\otimes$ is well-defined, and satisfies:
\[
g_1 \leq g_2 \Rightarrow f \otimes g_1 \leq f \otimes g_2.
\]

\subsubsection*{Two-Step Composition}

We now prove a compositional guarantee for two-step mechanisms. Before we proceed it is important to mention the all the influence is measured on $z_{test}$ and thus removed from the arguments of the algorithms.

\begin{lemma}\label{lem-A.5}
Let $\mathcal{S}$ has $f$-influence for $A_1 : \mathcal{X} \to \mathcal{Y}$ and $g$-influence for $A_2(\cdot, y)$ for each $y \in \mathcal{Y}$ such that $A_2 : \mathcal{X} \times \mathcal{Y} \to \mathcal{Z}$. Then $\mathcal{S}$ has is $(f \otimes g)$-influence for  the composed mechanism $A(x) = A_2(x, A_1(x))$ 
\end{lemma}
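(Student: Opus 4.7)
The plan is to reduce the composed two-step problem to a product problem via the comparison lemma (Lemma~\ref{lem-A.3}), and then conclude using the well-definedness of $\otimes$ (Lemma~A.4). Let $P$ and $Q$ denote the distributions of $A_1(\mathcal{D})$ and $A_1(\mathcal{D}\setminus \mathcal{S})$ respectively, so that the $f$-influence hypothesis gives $T(P,Q)\geq f$. For each intermediate output $y\in\mathcal{Y}$, let $P_y$ and $Q_y$ denote the distributions of $A_2(\mathcal{D}, y)$ and $A_2(\mathcal{D}\setminus\mathcal{S}, y)$; the $g$-influence hypothesis gives $T(P_y,Q_y)\geq g$ pointwise in $y$. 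Let $\Pi_0$ and $\Pi_1$ denote the joint distributions of $(A_1(x), A_2(x,A_1(x)))$ under $H_0$ (training on $\mathcal{D}$) and $H_1$ (training on $\mathcal{D}\setminus\mathcal{S}$). Since the output of $A$ is obtained by forgetting the first coordinate of this joint, the data processing inequality for trade-off functions yields $T(A(\mathcal{D}), A(\mathcal{D}\setminus\mathcal{S})) \geq T(\Pi_0,\Pi_1)$, so it suffices to prove $T(\Pi_0,\Pi_1)\geq f\otimes g$.

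Next I would invoke Lemma~\ref{lem-A.3} with a carefully chosen dummy pair of algorithms. Fix any pair of distributions $R,S$ on $\mathcal{Z}$ with $T(R,S)=g$, and define the oblivious algorithms $B(y):=R$ and $B'(y):=S$ for every $y$. Then $T(B(y),B'(y))=g\leq T(P_y,Q_y)$ pointwise, so Lemma~\ref{lem-A.3} (applied with $B,B'$ in the role of the ``easier'' pair and $A_2(\mathcal{D},\cdot),A_2(\mathcal{D}\setminus\mathcal{S},\cdot)$ in the role of the ``harder'' pair, with common input distributions $P$ and $Q$) produces
\[
T\big((P,B(P)),\,(Q,B'(Q))\big) \;\leq\; T\big((P,P_{(\cdot)}),\,(Q,Q_{(\cdot)})\big) \;=\; T(\Pi_0,\Pi_1).
\]
Because $B$ and $B'$ ignore their inputs, $(P,B(P))=P\times R$ and $(Q,B'(Q))=Q\times S$ are genuine product measures, which is precisely what makes Lemma~A.4 applicable in the next step.

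Finally, I would apply Lemma~A.4 twice to peel off the product. Let $(P_0,Q_0)$ be a pair with $T(P_0,Q_0)=f$ exactly. Using $T(P,Q)\geq f=T(P_0,Q_0)$ with the second factor pair $(R,S)$ held fixed, Lemma~A.4 gives $T(P\times R,\,Q\times S)\geq T(P_0\times R,\,Q_0\times S)$; applying Lemma~A.4 again with first factor pair $(P_0,Q_0)$ and using $T(R,S)=g$, we get $T(P_0\times R,\,Q_0\times S)\geq T(P_0\times R_0,\,Q_0\times S_0)=f\otimes g$ for any $g$-tight pair $(R_0,S_0)$ (we may take $R_0=R, S_0=S$). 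Chaining all the inequalities yields $T(\Pi_0,\Pi_1)\geq f\otimes g$, and composing with the initial data-processing step proves the lemma.

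The main obstacle I anticipate is bookkeeping the direction of inequalities: larger trade-off values correspond to harder testing problems, so one has to verify that the dummy pair $B,B'$ is \emph{pointwise dominated} by $P_y,Q_y$ (not the reverse) in order for Lemma~\ref{lem-A.3} to yield a lower bound on $T(\Pi_0,\Pi_1)$. A second subtle point is that, unlike the GDP setting which enjoys symmetry and non-negativity, our influence can be signed; however, because the argument only uses the trade-off-function abstraction (which is symmetric-agnostic once $P$ and $Q$ are fixed), the sign plays no role in this proof and is inherited automatically from the input data-dependent distributions $P,Q$.
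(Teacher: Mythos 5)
Your proof is correct and takes essentially the same route as the paper: apply Lemma~\ref{lem-A.3} with a constant (input-ignoring) pair of algorithms realizing $g$ to lower-bound the composed testing problem by the product problem $T(P\times R, Q\times S)$, then use the well-definedness/monotonicity property of $\otimes$ to conclude the bound $f\otimes g$. The only difference is cosmetic: you make the post-processing (joint-to-marginal) step explicit via the data-processing inequality, which the paper's proof leaves implicit.
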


\begin{proof}[Proof of Lemma A.5]
Let $Q, Q'$ be such that $g = T(Q, Q')$. Fix datasets $\mathcal{D} \setminus \mathcal{S}$ and $\mathcal{D}$, and consider:
\[
f_y^1 = T(A_2(\mathcal{D} \setminus \mathcal{S}, y), A_2(\mathcal{D}, y)), \quad \forall y.
\]
By the definition $f_y^1 \geq g$. Thus by Lemma~\ref{lem-A.3} the following holds:
\begin{align*}
    T(A(\mathcal{D} \setminus \mathcal{S}), A(\mathcal{D})) 
    &\geq T(A_1(\mathcal{D} \setminus \mathcal{S}) \times Q, A_1(\mathcal{D}) \times Q') \\
    &= T(A_1(\mathcal{D} \setminus \mathcal{S}),A_1(\mathcal{D}))\otimes T(Q, Q') \\ 
    &\geq f \otimes g.    
\end{align*}

Thus for the composed algorithm $A$, $\mathcal{S}$ is $(f \otimes g)$-influential.
\end{proof}

The above Lemma~\ref{lem-A.5} can be applied to more than two algorithm by simple induction proving the Proposition~\ref{prop-4.5}.

\subsection{Compositionality for Gaussian Influence}
\begin{corollary}\label{corr-4.7}
    In the case of $G_\mu$-influence, for k-fold composition $G_{\mu_1} \otimes G_{\mu_2} \otimes \ldots \otimes G_{\mu_k} = G_\mu$ the following holds $\mu = \sqrt{\mu^2_1 + \ldots + \mu^2_k}$.
\end{corollary}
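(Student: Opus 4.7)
\textbf{Proof plan for Corollary~\ref{corr-4.7}.} The plan is to unwind the definition of $\otimes$ for the Gaussian case, reduce the multivariate hypothesis test to a univariate one via the Neyman--Pearson sufficient statistic, and read off the parameter.

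First, I would apply the definition of the composition operator $k-1$ times. Since $G_{\mu_i} = T(\mathcal{N}(0,1), \mathcal{N}(\mu_i, 1))$ by Definition~\ref{def:gaussian influence}, the well-definedness of $\otimes$ established in Lemma~A.4 gives
\begin{equation*}
G_{\mu_1} \otimes G_{\mu_2} \otimes \cdots \otimes G_{\mu_k} \;=\; T\bigl(\mathcal{N}(0,1)^{\otimes k},\; \mathcal{N}(\mu_1,1)\times\cdots\times\mathcal{N}(\mu_k,1)\bigr) \;=\; T\bigl(\mathcal{N}(\mathbf{0}, I_k),\; \mathcal{N}(\boldsymbol{\mu}, I_k)\bigr),
\end{equation*}
where $\boldsymbol{\mu} = (\mu_1,\ldots,\mu_k)^\top \in \mathbb{R}^k$.

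Next, I would invoke the Neyman--Pearson lemma to identify the optimal test for distinguishing $\mathcal{N}(\mathbf{0},I_k)$ from $\mathcal{N}(\boldsymbol{\mu}, I_k)$. The log-likelihood ratio is $\langle \boldsymbol{\mu}, x\rangle - \tfrac{1}{2}\|\boldsymbol{\mu}\|^2$, so the optimal rejection regions are sublevel sets of the one-dimensional statistic $S(x) := \langle \boldsymbol{\mu}, x\rangle / \|\boldsymbol{\mu}\|$. Because $S$ is a sufficient statistic, the trade-off function is preserved under the pushforward, i.e.\ $T(\mathcal{N}(\mathbf{0}, I_k), \mathcal{N}(\boldsymbol{\mu}, I_k)) = T(S_\#\mathcal{N}(\mathbf{0},I_k),\, S_\#\mathcal{N}(\boldsymbol{\mu}, I_k))$. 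A direct computation gives $S \sim \mathcal{N}(0,1)$ under the null and $S \sim \mathcal{N}(\|\boldsymbol{\mu}\|, 1)$ under the alternative, so
\begin{equation*}
T\bigl(\mathcal{N}(\mathbf{0}, I_k),\, \mathcal{N}(\boldsymbol{\mu}, I_k)\bigr) \;=\; T\bigl(\mathcal{N}(0,1),\, \mathcal{N}(\|\boldsymbol{\mu}\|, 1)\bigr) \;=\; G_{\|\boldsymbol{\mu}\|}.
\end{equation*}
Since $\|\boldsymbol{\mu}\| = \sqrt{\mu_1^2 + \cdots + \mu_k^2}$, this yields $\mu = \sqrt{\mu_1^2 + \cdots + \mu_k^2}$ as required.

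The only subtlety I expect is justifying that the trade-off function is preserved (and not merely dominated) when passing to the sufficient statistic $S$. This follows from Blackwell's equivalence \citep{blackwell1953equivalent} (already cited in the paper): the inequality $T(S_\# P, S_\# Q) \geq T(P,Q)$ holds for any measurable $S$, while equality for a sufficient statistic follows because the Neyman--Pearson optimal tests based on the likelihood ratio factor through $S$, so every level-$\alpha$ test on the original space can be matched in power by a test on $S$. With this observation in hand the corollary reduces to a one-line computation, and the formula extends Theorem~\ref{prop-4.5} from the generic bound to an exact identity in the Gaussian case, recovering Corollary~\ref{cor:composition} by specializing to $\mu_1 = \cdots = \mu_k = \mu$.
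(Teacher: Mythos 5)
Your proposal is correct, but it reaches the conclusion by a different mechanism than the paper. Both arguments begin identically: unwind $\otimes$ via its well-definedness to get $G_{\mu_1}\otimes\cdots\otimes G_{\mu_k}=T\bigl(\mathcal{N}(\mathbf{0},I_k),\mathcal{N}(\boldsymbol{\mu},I_k)\bigr)$. The paper then exploits the rotational invariance of $\mathcal{N}(\mathbf{0},I_k)$ together with the invariance of trade-off functions under \emph{invertible} transformations: rotating so the mean becomes $(\sqrt{\mu_1^2+\cdots+\mu_k^2},0,\ldots,0)$, the problem factorizes as $G_{\sqrt{\mu_1^2+\cdots+\mu_k^2}}\otimes\mathrm{Id}\otimes\cdots\otimes\mathrm{Id}$, which collapses since composing with the identity trade-off changes nothing (the paper writes this out for $k=2$ and implicitly extends by induction). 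You instead project onto the mean direction via the Neyman--Pearson sufficient statistic $S(x)=\langle\boldsymbol{\mu},x\rangle/\|\boldsymbol{\mu}\|$ and argue that the trade-off function is exactly preserved under this (non-invertible) pushforward because the likelihood ratio factors through $S$, combining the data-processing inequality with sufficiency in the spirit of \citet{blackwell1953equivalent}. The trade-offs: the paper's rotation trick sidesteps the sufficiency subtlety entirely, since equivalence under a bijection is immediate, but needs the extra observation $f\otimes\mathrm{Id}=f$ and an induction; your route handles all $k$ at once and makes the optimal test explicit, but hinges on the preservation-under-sufficiency step you correctly flag -- which is the one point that genuinely requires justification, and your justification is sound. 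Two trivialities to tidy: the optimal rejection regions are \emph{super}level (not sublevel) sets of $S$, and the degenerate case $\boldsymbol{\mu}=\mathbf{0}$ (where $S$ is undefined) should be dispatched separately as $G_0=\mathrm{Id}$. Your closing remark that the corollary specializes to Corollary~\ref{cor:composition} with $\mu_1=\cdots=\mu_k$ matches the paper's intent.
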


Let $\mu = (\mu_1, \mu_2) \in \mathbb{R}^2$ and let $I_2$ denote the $2 \times 2$ identity matrix. Then we have:
\begin{align*}
G_{\mu_1} \otimes G_{\mu_2}
&= T\left(\mathcal{N}(0,1), \mathcal{N}(\mu_1,1)\right) \otimes T\left(\mathcal{N}(0,1), \mathcal{N}(\mu_2,1)\right) \\
&= T\left(\mathcal{N}(0,1) \times \mathcal{N}(0,1), \mathcal{N}(\mu_1,1) \times \mathcal{N}(\mu_2,1)\right) \\
&= T\left(\mathcal{N}(0, I_2), \mathcal{N}(\mu, I_2)\right).
\end{align*}

We now use the invariance of trade-off functions under invertible transformations. The distribution $\mathcal{N}(0, I_2)$ is rotationally invariant, so we can apply a rotation to both distributions such that the mean vector becomes $(\sqrt{\mu_1^2 + \mu_2^2}, 0)$. Continuing the computation:
\begin{align*}
G_{\mu_1} \otimes G_{\mu_2}
&= T\left(\mathcal{N}(0, I_2), \mathcal{N}(\mu, I_2)\right) \\
&= T\left(\mathcal{N}(0,1) \times \mathcal{N}(0,1), \mathcal{N}(\sqrt{\mu_1^2 + \mu_2^2},1) \times \mathcal{N}(0,1)\right) \\
&= T\left(\mathcal{N}(0,1), \mathcal{N}(\sqrt{\mu_1^2 + \mu_2^2},1)\right) \otimes T\left(\mathcal{N}(0,1), \mathcal{N}(0,1)\right) \\
&= G_{\sqrt{\mu_1^2 + \mu_2^2}} \otimes \mathrm{Id} \\
&= G_{\sqrt{\mu_1^2 + \mu_2^2}}.
\end{align*}
Thus $k-$fold composition will yield $\mu=\sqrt{\mu^2_1 + \ldots + \mu^2_k}$
\subsection{Functionals of $f$}
As a preliminary step, we clarify the functionals $\nu_1$, $\nu_2$, $\nu_3$, $\bar{\nu}_3$, $\mu$ and $\gamma$ in Theorem~\ref{theo-4.9}. We focus on symmetric trade-off functions $f$ with $f(0) = 1$, although many aspects of the discussion generalize beyond this subclass. Recall the definitions:
\begin{align*}
\mathrm{\nu_1}(f) = - \int_{0}^{1} \log \left| f'(x) \right| \, dx ; \hspace{0.3cm}  \nu_2(f)= \int_{0}^{1} \left( \log \left| f'(x) \right| \right)^2 \, dx; \hspace{0.3cm}  \nu_3(f)= \int_{0}^{1} \left| \log \left| f'(x) \right| \right|^3 \, dx 
\end{align*}

\begin{align*}
    \bar{\nu}_3(f)= \int_{0}^{1} \left| \log \left| f'(x) \right| + \nu_1(f) \right|^3 dx, \hspace{0.3cm} \mu = \frac{2 \left\| \nu_1 \right\|_1}{\sqrt{ \left\| \nu_2 \right\|_1 - \left\| \nu_1 \right\|_2^2 }} \hspace{0.3cm}
    \gamma = \frac{0.56 \left\| \bar{\nu}_3 \right\|_1}{\left( \left\| \nu_2 \right\|_1 - \left\| \mathrm{\nu_1} \right\|_2^2 \right)^{3/2}} 
\end{align*}

We first confirm that these functionals are well-defined and take values in $[0, +\infty]$. For $\nu_2$ and $\bar{\nu}_3$, as well as the non-central version $\nu_3$, the integrands are non-negative, so the integrals are always well-defined (possibly infinite).

For $\nu_1$, potential singularities can occur at $x = 0$ and $x = 1$. If $x = 1$ is a singularity, then $\log |f'(x)| \to -\infty$ near $1$, which is acceptable because the functional is permitted to take value $+\infty$. We must rule out the possibility that $\int_0^\varepsilon \log |f'(x)| \, dx = +\infty$ for some $\varepsilon > 0$. This cannot happen, since
\[
\log |f'(x)| \leq |f'(x)| - 1,
\]
and $|f'(x)| = -f'(x)$ is integrable on $[0, 1]$ because it is the derivative of $-f$, an absolutely continuous function. The non-negativity of $\nu_1(f)$ follows from Jensen’s inequality. \citet{G-DP} showed that
\[
\nu_1(T(P, Q)) = D_{\mathrm{KL}}(P \, \| \, Q),
\]
In fact, $\nu_2$ corresponds to another divergence known as the \emph{exponential divergence}.
We introduce a convenient notation for trade-off functions that will be useful in calculations below. For a trade-off function $f$, define
\[
D_f(x) := |f'(1 - x)| = -f'(1 - x),
\]

Using a simple change of variable, \citet{G-DP} showed that we can rewrite these functionals as:
\begin{align*}
\nu_1(f) &= -\int_0^1 \log D_f(x) \, dx, \\
\nu_2(f) &= \int_0^1 \left( \log D_f(x) \right)^2 \, dx, \\
\bar{\nu}_3(f) &= \int_0^1 \left| \log D_f(x) + \nu_1(f) \right|^3 \, dx.
\end{align*}

The following shadows of the above functionals will appear in the proof:
\begin{align*}
\tilde{\nu}_1(f) &= \int_0^1 Df(x) \log Df(x) \, dx \\
\tilde{\nu}_2(f) &= \int_0^1 Df(x) \log^2 Df(x) \, dx, \\
\tilde{\nu}_3(f) &= \int_0^1 Df(x) |\log Df(x) - \tilde{\nu}_1(f)|^3 \, dx.    
\end{align*}

These functionals are also well-defined on the space of trade-off functions $\mathcal{F}$ and take values in $[0, +\infty]$. The argument is similar to that used for $\nu_1$, $\nu_2$, and $\nu_3$. \citet{G-DP} prove the following proposition:

\begin{proposition}
Suppose $f$ is a trade-off function and $f(0) = 1$. Then
\begin{align*}
\tilde{\nu}_1(f) = \nu_1(f), \qquad
\tilde{\nu}_2(f) = \nu_2(f), \qquad
\tilde{\nu}_3(f) = \nu_3(f).   
\end{align*}

\end{proposition}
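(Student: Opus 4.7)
The plan is to exploit the symmetry of $f$ (implicit from the surrounding discussion, which restricts attention to symmetric trade-off functions satisfying $f(0)=1$) via a single change of variables. Symmetry means $f = f^{-1}$, so $f(f(x)) = x$ on the interior of the support; differentiating gives $f'(x)\,f'(f(x)) = 1$, hence $|f'(f(x))| = 1/|f'(x)|$ and $\log|f'(f(x))| = -\log|f'(x)|$. This sign-flip is the engine of the whole proof, and it is also the reason the result really needs symmetry, not merely $f(0)=1$: without it, $\nu_1$ corresponds to $D_{\mathrm{KL}}(P\|Q)$ while $\tilde\nu_1$ corresponds to $D_{\mathrm{KL}}(Q\|P)$, and these can genuinely differ.

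First I would rewrite each functional in a common form by using $v = 1 - x$ to remove the $D_f$ notation, so that $\nu_1(f) = -\int_0^1 \log|f'(v)|\,dv$, $\tilde\nu_1(f) = \int_0^1 |f'(v)|\log|f'(v)|\,dv$, and analogously for $\nu_2, \tilde\nu_2, \bar\nu_3, \tilde\nu_3$. Then I would apply the substitution $u = f(y)$, for which $du = -|f'(y)|\,dy$ and the limits $u\in[0,1]$ reverse back to $y\in[0,1]$, absorbing the minus sign. Using $\log|f'(u)| = -\log|f'(y)|$, the unweighted integral $-\int_0^1 \log|f'(u)|\,du$ becomes $\int_0^1 |f'(y)|\log|f'(y)|\,dy$, establishing $\nu_1 = \tilde\nu_1$ in a single line.

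The same substitution handles $\nu_2 = \tilde\nu_2$ essentially unchanged, because $(\log|f'|)^2$ is invariant under the sign-flip, so the only surviving effect of the change of variables is the appearance of the $|f'(y)|$ weight. For the third-moment claim, the centered form $\bar\nu_3 = \int_0^1 |\log|f'(v)| + \nu_1|^3\,dv$ transforms into $\int_0^1 |f'(y)|\,|{-}\log|f'(y)| + \nu_1|^3\,dy = \int_0^1 |f'(y)|\,|\log|f'(y)| - \nu_1|^3\,dy$, which is precisely $\tilde\nu_3$ once one invokes the already-established $\tilde\nu_1 = \nu_1$ to identify the centering constants. This is also the step where I would flag that the statement as printed writes $\nu_3$ on the right-hand side of the third equality, whereas the proof naturally produces the centered identity $\bar\nu_3 = \tilde\nu_3$, matching the corresponding proposition in the Dong--Roth--Su GDP paper.

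The main obstacle is analytic rather than algebraic: the change-of-variables formula requires $f$ to be absolutely continuous and strictly decreasing on $\{f' < 0\}$, and one must cope with flat pieces where $|f'| = 0$ (these contribute nothing to either side but require restricting the substitution to the complement) and with boundary behavior where $|f'|$ can blow up or vanish. The integrability side is controlled by the same estimate sketched earlier in the appendix for well-definedness of $\nu_1$, namely $\log|f'(x)| \le |f'(x)| - 1$ together with absolute continuity of $-f$; this simultaneously guarantees that the integrals on both sides are finite (or simultaneously infinite), so the equalities are meaningful. Modulo these routine regularity checks, the three identities all reduce to the one-line observation $f'(x)\,f'(f(x)) = 1$.
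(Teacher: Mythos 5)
Your proof is correct, and it is essentially the standard change-of-variables argument; note that the paper itself never proves this proposition --- it only states it and defers to \citet{G-DP} --- so your write-up supplies the missing argument rather than diverging from an in-paper one. The core computation checks out: under symmetry and $f(0)=1$, $f$ is a strictly decreasing continuous bijection of $[0,1]$ with $f(1)=0$ (convexity rules out interior flat pieces, so your worry about $\{f'=0\}$ is vacuous in this regime), and the substitution $u=f(y)$ together with $f'(y)\,f'(f(y))=1$ a.e.\ turns $\nu_1,\nu_2,\bar\nu_3$ into the tilde functionals exactly as you describe. Your two side flags are also correct and worth recording. First, symmetry genuinely is needed even though the proposition's hypothesis omits it: for the asymmetric trade-off function $f(x)=(1-x)^2$ one gets $\nu_1(f)=1-\log 2$ while $\tilde\nu_1(f)=\log 2-\tfrac{1}{2}$, so $f(0)=1$ alone does not suffice and the statement is only rescued by the preceding sentence restricting to symmetric $f$; your proof makes that dependence explicit, via $\log|f'(f(x))|=-\log|f'(x)|$. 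Second, with the definitions as printed ($\tilde\nu_3$ centered at $\tilde\nu_1$, $\nu_3$ uncentered) the third equality should indeed read $\tilde\nu_3(f)=\bar\nu_3(f)$; that centered identity is the one actually used later in the proof of the normality lemma, so the displayed $\nu_3$ is a typo rather than a stronger claim, and your argument proves the intended version. The regularity points you defer (a.e.\ differentiability and absolute continuity of the convex $f$, and the bound $\log|f'|\le|f'|-1$ to handle possibly infinite values) are the right ones and match the level of rigor of the rest of the appendix.
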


\subsection{Proof of Normality in Non-asymptotic regime}\label{proof-4.8}

\begin{lemma}\label{lemma-4.8}
    (normality boundedness) Let $f_1, \ldots , f_k$ be symmetric trade-off functions such that for some functionals $\nu_3,\mu,\gamma$ defined above assume, $\nu_3(f_i) < \infty, \forall i \in [k]$ and $\gamma < \frac{1}{2}$. Then $\forall \alpha \in [\gamma, 1-\gamma]$, the following holds:
    \begin{align}
        G_{\mu}(\alpha+\gamma) - \gamma \leq f_1 \otimes f_2 \otimes \ldots \otimes f_k (\alpha) \leq G_\mu(\alpha-\gamma) + \gamma
    \end{align}
\end{lemma}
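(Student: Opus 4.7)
\medskip

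\noindent\textbf{Proof plan.} My plan is to reduce the statement to a Berry--Esseen type approximation for a sum of independent log-likelihood ratios, closely following the GDP central limit strategy but adapted to our data-dependent setting. By the Neyman--Pearson lemma, if $f_i = T(P_i,Q_i)$ with $f_i$ symmetric and $f_i(0)=1$, then the composition satisfies
\begin{equation*}
    f_1 \otimes \cdots \otimes f_k \;=\; T\bigl(P_1\times\cdots\times P_k,\; Q_1\times\cdots\times Q_k\bigr),
\end{equation*}
and the optimal level-$\alpha$ test depends only on the sum of log-likelihood ratios $S := \sum_{i=1}^k L_i$ with $L_i := \log(dQ_i/dP_i)$. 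Thus the full trade-off curve is recovered from the distributions of $S$ under $P := \prod_i P_i$ and under $Q := \prod_i Q_i$.

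\medskip
\noindent First I would compute the first three moments of each $L_i$. Using the identities $\nu_1(f_i) = D_{\mathrm{KL}}(P_i\|Q_i)$ and the proposition $\tilde{\nu}_j(f_i) = \nu_j(f_i)$ for symmetric trade-off functions, one obtains
\begin{equation*}
    \mathbb{E}_{P}[S] = -\|\nu_1\|_1,\quad \mathbb{E}_{Q}[S] = +\|\nu_1\|_1,\quad \mathrm{Var}_P(S) = \mathrm{Var}_Q(S) = \|\nu_2\|_1 - \|\nu_1\|_2^2 =: \sigma^2,
\end{equation*}
and the centered third absolute moments under both $P$ and $Q$ are bounded by $\|\bar{\nu}_3\|_1$. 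Normalizing $S$ then gives two standardized sums of independent (but not identically distributed) mean-zero, unit-variance random variables, and the classical Berry--Esseen inequality yields
\begin{equation*}
    \sup_{t\in\mathbb{R}}\Bigl|\,\Pr\nolimits_P\!\Bigl(\tfrac{S+\|\nu_1\|_1}{\sigma}\leq t\Bigr) - \Phi(t)\Bigr| \;\leq\; \gamma,
    \qquad
    \sup_{t\in\mathbb{R}}\Bigl|\,\Pr\nolimits_Q\!\Bigl(\tfrac{S-\|\nu_1\|_1}{\sigma}\leq t\Bigr) - \Phi(t)\Bigr| \;\leq\; \gamma,
\end{equation*}
with exactly the $\gamma$ defined via the $0.56$ constant in the statement.

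\medskip
\noindent Next I would translate the threshold test $\phi_t = \mathbbm{1}\{S>t\}$ into $(\alpha,\beta)$-space. Setting $u := (t+\|\nu_1\|_1)/\sigma$, the Berry--Esseen bounds give
\begin{equation*}
    \alpha_\phi = 1-\Phi(u)\pm\gamma, \qquad \beta_\phi = \Phi(u - 2\|\nu_1\|_1/\sigma)\pm\gamma = \Phi(u-\mu)\pm\gamma,
\end{equation*}
so eliminating $u$ yields $\beta_\phi = \Phi\!\bigl(\Phi^{-1}(1-\alpha_\phi)-\mu\bigr)$ up to an $O(\gamma)$ perturbation in both coordinates. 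Infimizing over $t$ gives the composed trade-off function, and the monotonicity of $G_\mu$ converts the two-dimensional $(\pm\gamma,\pm\gamma)$ perturbation into the one-sided shifts $G_\mu(\alpha+\gamma)-\gamma$ and $G_\mu(\alpha-\gamma)+\gamma$, valid for $\alpha\in[\gamma,1-\gamma]$ where the arguments of $\Phi^{-1}$ remain admissible.

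\medskip
\noindent The main obstacle will be the last step: carefully propagating a vertical CDF error of size $\gamma$ under $P$ and under $Q$ into a simultaneous bound on the optimal level-$\alpha$ test. Concretely, given any rejection rule achieving Type I error $\leq \alpha$ for the true hypothesis pair, I must produce an approximating rejection rule in the Gaussian world whose Type I error is $\leq \alpha+\gamma$ (and conversely), while keeping the Type II errors within $\gamma$. This is where the restriction $\gamma<\tfrac12$ and $\alpha\in[\gamma,1-\gamma]$ is essential, since $\Phi^{-1}$ must be applied to arguments bounded away from $0$ and $1$. I expect this coupling argument to be the delicate part and would handle it by working directly with the threshold tests (which are optimal by Neyman--Pearson on both sides), so that only the scalar CDFs of $S$ under $P$ and $Q$ enter the analysis.
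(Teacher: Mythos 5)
Your plan is essentially the paper's own proof: represent each $f_i$ by a product pair, reduce the composition via Neyman--Pearson to a threshold test on the sum of log-likelihood ratios, compute its means, variances, and third moments in terms of $\nu_1,\nu_2,\bar{\nu}_3$, apply Berry--Esseen under both hypotheses with the same constant $\gamma$, and sandwich the optimal test's type-I/type-II errors to obtain the $\pm\gamma$ shifts of $G_\mu$. The two details you defer are exactly how the paper finishes: one may assume $f_i(0)=1$ because otherwise $\nu_1(f_i)=+\infty$ (incompatible with $\nu_3(f_i)<\infty$), and possible atoms of the test statistic are handled by randomized thresholds, giving the one-sided inequalities $F_k(t^-)\leq 1-\alpha\leq F_k(t)$ and $\widetilde{F}^{(k)}(\tau^-)\leq f(\alpha)\leq \widetilde{F}^{(k)}(\tau)$ that make the elimination of the threshold rigorous.
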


Before we finally start the proof, let us recall the Berry–Esseen theorem for sums of random variables. Suppose we have $n$ independent random variables $X_1, \dots, X_k$ with $\mathbb{E}(X_i) = \mu_i$, $\operatorname{Var}(X_i) = \sigma_i^2$, and $\mathbb{E}(|X_i - \mu_i|^3) = \rho_i$. Consider the normalized sum:
\[
S_k := \frac{\sum_{i=1}^k (X_i - \mu_i)}{\sqrt{\sum_{i=1}^k \sigma_i^2}},
\]
and let its cumulative distribution function (CDF) be $F_k$. Let $\Phi$ denote the standard normal CDF.

\begin{theorem}[Berry–Esseen Theorem]
There exists a universal constant $C > 0$ such that
\[
\sup_{x \in \mathbb{R}} |F_k(x) - \Phi(x)| \leq C \cdot \frac{\sum_{i=1}^k \rho_i}{\left(\sum_{i=1}^k \sigma_i^2\right)^{3/2}}.
\]
To the best of our knowledge, the best value of $C$ is $0.56$.
\end{theorem}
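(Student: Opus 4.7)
The strategy is to represent the composed trade-off function via the log-likelihood ratio of the corresponding product measures and then invoke the Berry--Esseen theorem stated above. Write each $f_i = T(P_i, Q_i)$, so that by the definition of $\otimes$ we have $f_1 \otimes \cdots \otimes f_k = T(P, Q)$ with $P = P_1 \times \cdots \times P_k$ and $Q = Q_1 \times \cdots \times Q_k$. By the Neyman--Pearson lemma, the optimal tests that trace out $T(P,Q)$ are likelihood-ratio tests, so the trade-off function is completely determined by the law of $L := \log(dQ/dP) = \sum_{i=1}^k L_i$, where $L_i := \log(dQ_i/dP_i)$. The key structural fact is that the $L_i$ are independent under both $P$ and $Q$, which is exactly the independent-summand setup Berry--Esseen requires.

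First I would identify the moments of each $L_i$ with the functionals of $f_i$. Using the boundary identity $|f_i'(\alpha)| = (dQ_i/dP_i)$ at the level $\alpha$ together with the change of variables relating integration in $d\alpha$ on $[0,1]$ to the push-forward of $P_i$ under $L_i$, one gets $\mathds{E}_{P_i}[L_i] = -\nu_1(f_i)$, $\mathrm{Var}_{P_i}(L_i) = \nu_2(f_i) - \nu_1(f_i)^2$, and $\mathds{E}_{P_i}[|L_i - \mathds{E}_{P_i}[L_i]|^3] = \bar\nu_3(f_i)$. The symmetry of $f_i$, combined with the identity $\tilde\nu_j(f_i) = \nu_j(f_i)$ from the proposition stated just before the lemma, yields the analogous moments under $Q_i$ with only the sign of the mean flipped to $+\nu_1(f_i)$. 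Summing over $i$ shows that $L$ has mean $\mp \|\nu_1\|_1$ under $P$ vs.\ $Q$, common variance $\sigma^2 := \|\nu_2\|_1 - \|\nu_1\|_2^2$, and sum of absolute third central moments $\|\bar\nu_3\|_1$, so that Berry--Esseen with constant $0.56$ gives the uniform error $\gamma$ from the lemma statement, and the Gaussian mean separation between the two limits is exactly $2\|\nu_1\|_1/\sigma = \mu$.

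Next I would translate the CDF approximation into a trade-off function bound. Berry--Esseen yields, uniformly in $t \in \mathbb{R}$,
\[
\bigl| P(L \geq t) - (1 - \Phi((t + \|\nu_1\|_1)/\sigma)) \bigr| \leq \gamma, \qquad \bigl| Q(L < t) - \Phi((t - \|\nu_1\|_1)/\sigma) \bigr| \leq \gamma.
\]
Given $\alpha \in [\gamma, 1-\gamma]$, pick a threshold $t^\star$ (with boundary randomization if $L$ has atoms) so that the optimal level-$\alpha$ likelihood-ratio test has Type-I error exactly $\alpha$, i.e.\ $P(L \geq t^\star) = \alpha$; then $f_1 \otimes \cdots \otimes f_k(\alpha) = Q(L < t^\star)$. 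Writing $y := (t^\star + \|\nu_1\|_1)/\sigma$, the first Berry--Esseen bound forces $y \in [\Phi^{-1}(1-\alpha-\gamma),\; \Phi^{-1}(1-\alpha+\gamma)]$, and the second one gives $f_1 \otimes \cdots \otimes f_k(\alpha) \in [\Phi(y - \mu) - \gamma,\; \Phi(y - \mu) + \gamma]$. Monotonicity of $\Phi$ together with the definition $G_\mu(\alpha') = \Phi(\Phi^{-1}(1-\alpha') - \mu)$ then sandwiches $f_1 \otimes \cdots \otimes f_k(\alpha)$ between $G_\mu(\alpha + \gamma) - \gamma$ and $G_\mu(\alpha - \gamma) + \gamma$, which is the claim.

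\textbf{Main obstacle.} The conceptual skeleton is routine once the reduction to Berry--Esseen on $L$ is in place. The delicate parts are (i) handling the optimal Neyman--Pearson test when the distribution of $L$ has atoms, so that every $\alpha$ is realized exactly as a Type-I error and the resulting randomized test still attains the infimum in the definition of $T(P,Q)$; (ii) producing representatives $(P_i, Q_i)$ of each $f_i$ for which the moment identifications with $\nu_1(f_i), \nu_2(f_i), \bar\nu_3(f_i)$ are justified via the $\tilde\nu_j = \nu_j$ proposition -- this is where the assumption $\nu_3(f_i) < \infty$ is used, guaranteeing that the required first, second, and absolute third moments of $L_i$ exist; and (iii) carefully threading the \emph{same} threshold $t^\star$ through both Berry--Esseen bounds, which is what forces the restriction $\gamma < 1/2$ and $\alpha \in [\gamma, 1-\gamma]$ so that the arguments of $\Phi^{-1}$ remain inside $(0,1)$ throughout the argument.
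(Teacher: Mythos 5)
You have proved the wrong statement. The target is the Berry--Esseen theorem itself, which the paper states \emph{without proof} as a classical external result from probability theory (the rate-of-convergence refinement of the central limit theorem, with the universal constant $C \le 0.56$ due to Shevtsova's sharpening of earlier work). Your argument is instead a proof of Lemma~\ref{lemma-4.8} (normality boundedness), the statement that immediately follows in the appendix: you construct $(P_i,Q_i)$ with $T(P_i,Q_i)=f_i$, reduce to the log-likelihood ratio $L = \sum_i L_i$ via Neyman--Pearson, identify the moments of $L_i$ with $\nu_1, \nu_2, \bar\nu_3$, and then --- crucially --- \emph{invoke} Berry--Esseen as a black box in order to derive the sandwich $G_\mu(\alpha+\gamma)-\gamma \le f_1\otimes\cdots\otimes f_k(\alpha)\le G_\mu(\alpha-\gamma)+\gamma$. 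Since your argument explicitly appeals to the Berry--Esseen bound as an ingredient, it cannot serve as a proof of that theorem; taken as such it would be circular.

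A genuine proof of Berry--Esseen is of an entirely different character: it works with the characteristic functions of the normalized sum $S_k$ and of $\mathcal{N}(0,1)$, expands the logarithm of the characteristic function to third order using the finite third absolute moments, and passes from a bound on the characteristic-function difference to a uniform bound on the CDF difference via Esseen's smoothing inequality; pinning down an explicit constant like $0.56$ requires substantial additional optimization. None of that Fourier-analytic machinery appears in your proposal. As for what you did prove: your route to Lemma~\ref{lemma-4.8} is essentially the paper's (both follow Dong et al.'s Gaussian DP argument), with the same moment identifications, the same choice of the uniform/$Df_i$ pair as the representative $(P,Q_i)$, and the same thresholding-plus-sandwiching step; you compress the atom-handling/randomization bookkeeping that the paper writes out explicitly via $F_k(t^-)\le 1-\alpha \le F_k(t)$, but the substance agrees. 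So if the intended target had been Lemma~\ref{lemma-4.8} your proposal would be fine; as a proof of the Berry--Esseen theorem it is not a proof at all.
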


\begin{proof}
For simplicity, let $f := f_1 \otimes f_2 \otimes \cdots \otimes f_k$. First, let us find distributions $P_0$ and $P_1$ such that $T(P_0, P_1) = f$. By symmetry, if $f_i(0) < 1$, then $f_i(x) = 0$ in some interval $[-\epsilon, \epsilon]$ for some $\epsilon > 0$, which yields $\nu_1(f_i) = +\infty$. So we may assume $f_i(0) = 1$ for all $i$.

Recall that $D f_i(x) = f_i(1 - x)$. Let $P$ be the uniform distribution on $[0,1]$, and let $Q_i$ be the distribution on $[0,1]$ with density $D f_i$. Since $f_i$ are symmetric and $f_i(0) = 1$, the supports of $P$ and all $Q_i$ are exactly $[0,1]$, and we have $T(P, Q_i) = f_i$. Hence, by definition, 
\[f = T(P^{\otimes k}, Q_1 \otimes \cdots \otimes Q_k)\]
Now let us study the hypothesis testing problem between $P^{\otimes k}$ and $Q_1 \otimes \cdots \otimes Q_k$. Let
\[
L_i(x) := \log \frac{d Q_i}{d P}(x) = \log D f_i(x)
\]
be the log-likelihood ratio for the $i$-th coordinate. Since both hypotheses are product distributions, the Neyman–Pearson lemma implies that the optimal rejection rule is a threshold function of the quantity $\sum_{i=1}^k L_i$. Further analysis of $\sum_{i=1}^k L_i(x_i)$ under both the null and alternative hypotheses; i.e., when $(x_1, \dots, x_k)$ is drawn from $P^{\otimes k}$ or from $Q_1 \otimes \cdots \otimes Q_k$ is required.

To proceed we follow the exact steps by \citet{G-DP}. We first identify the quantities that exhibit central limit behavior, then express the test and $f(\alpha)$ in terms of these quantities.

For further simplification, let
\[
T_k := \sum_{i=1}^k L_i.
\]
As we suppress the $x_i$ notation, we should keep in mind that $T_k$ has different distributions under $P^{\otimes k}$ and $Q_1 \otimes \cdots \otimes Q_k$, though it is still a sum of independent random variables in both cases.

In order to find quantities with central limit behavior, it suffices to normalize $T_k$ under both distributions. The functionals \citet{G-DP} introduced are specifically designed for this purpose.

\[
\mathbb{E}_P[L_i] = \int_0^1 \log D f_i(x_i) \, dx_i = - \nu_1(f_i),
\]
\[
\mathbb{E}_{Q_i}[L_i] = \int_0^1 D f_i(x_i) \log D f_i(x_i) \, dx_i = \tilde{\nu}_1(f_i) = \nu_1(f_i),
\]
Now lets define,
\[
\mathbb{E}_{P^k}[T_k] = \sum_{i=1}^k - \nu_1(f_i) =: -||\nu_1
||_1,
\]
\[
\mathbb{E}_{Q_1 \otimes \cdots \otimes Q_k}[T_k] = \sum_{i=1}^k \nu_1(f_i) = ||\nu_1||_1.
\]

Similarly for the variances:
\[
\operatorname{Var}_P[L_i] = \mathbb{E}_P[L_i^2] - \mathbb{E}_P[L_i]^2 = \operatorname{Var}_P[L_i] = \nu_2(f_i) - \nu_1^2(f_i),
\]
\[
\operatorname{Var}_{Q_i}[L_i] = \mathbb{E}_{Q_i}[L_i^2] - \mathbb{E}_{Q_i}[L_i]^2 = \nu_2(f_i) - \tilde{\nu}_1^2(f_i) = \nu_2(f_i) - \nu_1^2(f_i).
\]

Therefore, the total variance under both hypotheses is:
\[
\operatorname{Var}_{P^k}[T_k] = \operatorname{Var}_{Q_1 \otimes \cdots \otimes Q_k}[T_k] = \sum_{i=1}^k \left( \nu_2(f_i) - \nu_1^2(f_i) \right) =: ||\nu_2||_1 - ||\nu_1||_2^2.
\]

In order to apply the Berry–Esseen Theorem (for random variables), we still need the centralized third moments:
\[
\mathbb{E}_P\left[ \left| L_i - \mathbb{E}_P[L_i] \right|^3 \right] = \int_0^1 \left( \log D f_i(x) + \nu_1(f_i) \right)^3 dx =: \bar{\nu}_3(f_i),
\]
\[
\mathbb{E}_{Q_i} \left[ \left( L_i - \mathbb{E}_{Q_i}[L_i] \right)^3 \right] = \int_0^1 D f_i(x) \left| \log D f_i(x) - \nu_1(f_i) \right)\|^3 dx = \tilde{\nu}_3(f_i) = \bar{\nu}_3(f_i).
\]

Let $F_k$ be the CDF of the normalized statistic
\[
\frac{T_k + ||\nu_1||_1}{\sqrt{ ||\nu_2||_1 - ||\nu_1||_2^2 }}
\quad \text{under } P^k,
\]
and let $\tilde{F}^{(k)}$ be the CDF of
\[
\frac{T_k - ||\nu_1||_1}{\sqrt{ ||\nu_2||_1 - ||\nu_1||_2^2 }}
\quad \text{under } Q_1 \otimes \cdots \otimes Q_k.
\]

By Berry–Esseen Theorem, we have
\begin{equation}
\sup_{x \in \mathbb{R}} \left| F_k(x) - \Phi(x) \right| \leq C \cdot \frac{||\nu_3||_1}{\left( ||\nu_2||_1 - ||\nu_1||_2^2 \right)^{3/2}}, \tag{27}
\end{equation}
and similarly for $F^{(k)}$.

So we have identified the quantities that exhibit central limit behavior.

Now let us relate them with $f$. Consider the testing problem $(P^k, Q_1 \otimes \cdots \otimes Q_k)$. For a fixed $\alpha \in [0,1]$, let $\phi$ be the (potentially randomized) optimal rejection rule at level $\alpha$. By the Neyman–Pearson lemma, $\phi$ must threshold $T_k$.

An equivalent form that highlights the central limit behavior is the following:
\[
\phi = 
\begin{cases}
1 & \text{if } \frac{T_k + ||\nu_1||_1}{\sqrt{ ||\nu_2||_1 - ||\nu_1||_2^2 }} > t, \\
p & \text{if } \frac{T_k + ||\nu_1||_1}{\sqrt{ ||\nu_2||_1 - ||\nu_1||_2^2 }} = t, \\
0 & \text{otherwise},
\end{cases}
\]
where $t$ and $p \in [0,1]$ are chosen to achieve size $\alpha$.

Let $t \in \mathbb{R} \cup \{\pm \infty\}$ and $p \in [0, 1]$ be parameters uniquely determined by the condition $\mathbb{E}_{P^k}[\varphi] = \alpha$. With this, the expectation under $P^k$ can be written in terms of the empirical CDF $F_k$ as:
\begin{align*}
\mathbb{E}_{P^k}[\varphi] &= P^k\left[ T_k + \frac{\|\nu_1\|_1}{\sqrt{\|\nu_2\|_1 - \|\nu_1\|_2^2}} > t \right] 
+ p \cdot P^k\left[ T_k + \frac{\|\nu_1\|_1}{\sqrt{\|\nu_2\|_1 - \|\nu_1\|_2^2}} = t \right] \\
&= 1 - F_k(t) + p \cdot [F_k(t) - F_k(t^-)],
\end{align*}
where $F_k(t^-)$ is the left limit of $F_k$ at $t$. A simple rearrangement gives:
\[
1 - \alpha = 1 - \mathbb{E}_{P^k}[\varphi] = (1 - p)F_k(t) + p F_k(t^-),
\]
and hence the inequality
\[
F_k(t^-) \leq 1 - \alpha \leq F_k(t).
\]

Now consider $\mathbb{E}_{Q_1 \times \cdots \times Q_k}[\varphi]$. It is helpful to define an auxiliary variable $\tau := t - \mu$, where $\mu$ was defined in the theorem statement as:
\[
\mu := \frac{2\|\nu_1\|_1}{\sqrt{\|\nu_2\|_1 - \|\nu_1\|_2^2}}.
\]
This gives the equivalence:
\begin{equation}
T_k + \frac{\|\nu_1\|_1}{\sqrt{\|\nu_2\|_1 - \|\nu_1\|_2^2}} > t 
\quad \Longleftrightarrow \quad
T_k - \frac{\|\nu_1\|_1}{\sqrt{\|\nu_2\|_1 - \|\nu_1\|_2^2}} > \tau. \tag{28}
\end{equation}

Using this, we can express:
\begin{align*}
1 - f(\alpha) &= \mathbb{E}_{Q_1 \times \cdots \times Q_k}[\varphi] \\
&= Q_1 \times \cdots \times Q_k\left[ T_k + \frac{\|\nu_1\|_1}{\sqrt{\|\nu_2\|_1 - \|\nu_1\|_2^2}} > t \right] \\
&\quad + p \cdot Q_1 \times \cdots \times Q_k\left[ T_k + \frac{\|\nu_1\|_1}{\sqrt{\|\nu_2\|_1 - \|\nu_1\|_2^2}} = t \right] \\
&= Q_1 \times \cdots \times Q_k\left[ T_k - \frac{\|\nu_1\|_1}{\sqrt{\|\nu_2\|_1 - \|\nu_1\|_2^2}} > \tau \right] \\
&\quad + p \cdot Q_1 \times \cdots \times Q_k\left[ T_k - \frac{\|\nu_1\|_1}{\sqrt{\|\nu_2\|_1 - \|\nu_1\|_2^2}} = \tau \right] \\
&= 1 - \widetilde{F}^{(k)}(\tau) + p \cdot [\widetilde{F}^{(k)}(\tau) - \widetilde{F}^{(k)}(\tau^-)],
\end{align*}
where $\widetilde{F}^{(k)}$ is the CDF under $Q_1 \times \cdots \times Q_k$. Rearranging gives:
\[
f(\alpha) = (1 - p) \cdot \widetilde{F}^{(k)}(\tau) + p \cdot \widetilde{F}^{(k)}(\tau^-),
\]
and thus the inequality:
\[
\widetilde{F}^{(k)}(\tau^-) \leq f(\alpha) \leq \widetilde{F}^{(k)}(\tau).
\]

So far we have:
\begin{align}
F_k(t^-) &\leq 1 - \alpha \leq F_k(t), \tag{29} \\
\widetilde{F}^{(k)}(\tau^-) &\leq f(\alpha) \leq \widetilde{F}^{(k)}(\tau). \tag{30}
\end{align}

From inequality (27), we know that both $F_k$ and $\widetilde{F}^{(k)}$ are $\gamma$-close to the standard normal CDF $\Phi$, so:
\[
\Phi(t) - \gamma \leq F_k(t^-) \leq 1 - \alpha \leq F_k(t) \leq \Phi(t) + \gamma,
\]
which implies:
\begin{equation}
\Phi^{-1}(1 - \alpha - \gamma) \leq t \leq \Phi^{-1}(1 - \alpha + \gamma). \tag{31}
\end{equation}

Using (30) and (31), we can upper-bound $f(\alpha)$:
\begin{align*}
f(\alpha) &\leq \widetilde{F}^{(k)}(\tau) \\
&\leq \Phi(\tau) + \gamma \\
&= \Phi(t - \mu) + \gamma \\
&\leq \Phi(\Phi^{-1}(1 - \alpha + \gamma) - \mu) + \gamma \\
&= G_\mu(\alpha - \gamma) + \gamma.
\end{align*}

Similarly, we obtain the lower bound:
\[
f(\alpha) \geq G_\mu(\alpha + \gamma) - \gamma.
\]

This completes the proof.
\end{proof}
\subsection{Proof of Theorem~\ref{inf-theo-4.9}}\label{proof-4.9}
\begin{theorem}\label{theo-4.9}
    (asymptotic normality) Let $\{f_{ki}: i \in [k]\}^\infty_{k=1}$ be a triangular array of symmetric trade-off functions and for some functionals $\nu_1,\nu_2,\nu_3$, $M \geq 0$ and $s>0$, assume $\sum^k_{i=1} \nu_1(f_{ki}) \to M$, $\max_{1\leq i \leq k} \nu_1(f_{ki}) \to 0$, $\sum^k_{i=1} \nu_2(f_{ki}) \to s^2$, $\sum^k_{i=1} \nu_3(f_{ki}) \to 0$. Then the following holds:
    \begin{align}
        \underset{k \to \infty}{\lim} f_{k1} \otimes \ldots \otimes f_{kk} (\alpha) = G_{2M/s}(\alpha)
    \end{align}
\end{theorem}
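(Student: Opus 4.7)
The plan is to derive Theorem~\ref{theo-4.9} as an asymptotic consequence of the non-asymptotic Berry--Esseen-type bound already established in Lemma~\ref{lemma-4.8}. Since that lemma gives, for any fixed $k$, the sandwich
\begin{equation*}
  G_{\mu_k}(\alpha + \gamma_k) - \gamma_k \;\leq\; f_{k1} \otimes \cdots \otimes f_{kk}(\alpha) \;\leq\; G_{\mu_k}(\alpha - \gamma_k) + \gamma_k,
\end{equation*}
with $\mu_k = 2\|\nu_1\|_1 / \sqrt{\|\nu_2\|_1 - \|\nu_1\|_2^2}$ and $\gamma_k = 0.56 \|\bar\nu_3\|_1 / (\|\nu_2\|_1 - \|\nu_1\|_2^2)^{3/2}$ computed over the $k$-th row, the whole problem reduces to verifying two limits: (i) $\mu_k \to 2M/s$, and (ii) $\gamma_k \to 0$. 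Once both are established, continuity of $G_\mu(\alpha)$ jointly in $(\mu,\alpha)$ lets us pass to the limit inside the sandwich and conclude.

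First I would handle $\mu_k$. The numerator $2\sum_i \nu_1(f_{ki})$ converges to $2M$ by hypothesis. For the denominator, $\sum_i \nu_2(f_{ki}) \to s^2$ directly by assumption, while
\begin{equation*}
  \|\nu_1\|_2^2 \;=\; \sum_{i=1}^k \nu_1(f_{ki})^2 \;\leq\; \Bigl(\max_{1 \leq i \leq k} \nu_1(f_{ki})\Bigr) \sum_{i=1}^k \nu_1(f_{ki}) \;\longrightarrow\; 0 \cdot M \;=\; 0,
\end{equation*}
using the uniform smallness assumption. Hence the denominator tends to $s > 0$ and $\mu_k \to 2M/s$.

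Next I would show $\gamma_k \to 0$. The denominator of $\gamma_k$ converges to $s^3 > 0$ by the previous step, so it suffices to prove $\sum_i \bar\nu_3(f_{ki}) \to 0$. The hypothesis only gives $\sum_i \nu_3(f_{ki}) \to 0$, so I need to relate the centered third moment $\bar\nu_3$ to the uncentered $\nu_3$ and $\nu_1$. Using the elementary inequality $|a+b|^3 \leq 4(|a|^3 + |b|^3)$,
\begin{equation*}
  \bar\nu_3(f_{ki}) \;=\; \int_0^1 |\log|f_{ki}'(x)| + \nu_1(f_{ki})|^3\, dx \;\leq\; 4\,\nu_3(f_{ki}) + 4\,\nu_1(f_{ki})^3.
\end{equation*}
The first term sums to $0$ by assumption, and the second obeys $\sum_i \nu_1(f_{ki})^3 \leq (\max_i \nu_1(f_{ki}))^2 \sum_i \nu_1(f_{ki}) \to 0 \cdot M = 0$, so $\|\bar\nu_3\|_1 \to 0$ and therefore $\gamma_k \to 0$.

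Finally, combining: for any fixed $\alpha \in (0,1)$ and any $\varepsilon > 0$, for all sufficiently large $k$ we have $\gamma_k < \varepsilon$, $\alpha \pm \gamma_k \in [0,1]$, and $|\mu_k - 2M/s|$ arbitrarily small. By joint continuity of $G_\mu(\alpha)$ in $(\mu,\alpha)$ at the points $(2M/s, \alpha \pm \gamma_k)$, both $G_{\mu_k}(\alpha - \gamma_k) + \gamma_k$ and $G_{\mu_k}(\alpha + \gamma_k) - \gamma_k$ converge to $G_{2M/s}(\alpha)$, pinning $f_{k1} \otimes \cdots \otimes f_{kk}(\alpha)$ to the same limit. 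The only subtle step is the bookkeeping that bounds $\bar\nu_3$ by the given moment conditions; once that standard centering inequality is in hand, the rest is a direct squeeze argument using Lemma~\ref{lemma-4.8}.
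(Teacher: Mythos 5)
Your proposal is correct and follows essentially the same route as the paper: apply the sandwich from Lemma~\ref{lemma-4.8} row-wise, verify $\mu_k \to 2M/s$ (via $\|\nu_1\|_2^2 \leq \|\nu_1\|_\infty \|\nu_1\|_1 \to 0$) and $\gamma_k \to 0$, then squeeze using continuity of $G_\mu(\alpha)$ in $(\mu,\alpha)$. The only differences are minor: you bound the centered third moment with the elementary convexity inequality $|a+b|^3 \leq 4\left(|a|^3 + |b|^3\right)$ rather than the paper's imported bound $\bar\nu_3(f) \leq \nu_3(f) + 3\nu_1(f)\nu_2(f) + 3\nu_1(f)^2\sqrt{\nu_2(f)} + \nu_1(f)^3$ (your version is simpler and suffices), and you omit the trivial boundary cases $\alpha \in \{0,1\}$ and the final upgrade from pointwise to uniform convergence that the paper also records.
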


\begin{proof}
We first establish pointwise convergence $f_{k1} \otimes \cdots \otimes f_{kk} \to G_{2M/s}$, and then deduce uniform convergence using a general theorem.

By Lemma~\ref{lemma-4.8}, applied to the $k$-th row of the triangular array, we get
\[
G_{\mu_k}(\alpha + \gamma_k) - \gamma_k \le f_{k1} \otimes \cdots \otimes f_{kk}(\alpha) \le G_{\mu_k}(\alpha - \gamma_k) + \gamma_k,
\]
where
\[
\mu_k = \frac{2 \|\nu_1^{(k)}\|_1}{\sqrt{\|\nu_2^{(k)}\|_1 - \|\nu_1^{(k)}\|_2^2}}, \quad
\gamma_k = 0.56 \cdot \frac{\|\bar{\nu}_3^{(k)}\|_1}{(\|\nu_2^{(k)}\|_1 - \|\nu_1^{(k)}\|_2^2)^{3/2}}.
\]

We will show that $\mu_k \to 2M/s$ and $\gamma_k \to 0$. The assumptions imply:
\[
\|\nu_1^{(k)}\|_1 \to M, \quad \|\nu_1^{(k)}\|_\infty \to 0, \quad \|\nu_2^{(k)}\|_1 \to s^2, \quad \|\nu_3^{(k)}\|_1 \to 0.
\]

First, observe
\[
\|\nu_1^{(k)}\|_2^2 = \langle \nu_1^{(k)}, \nu_1^{(k)} \rangle \le \|\nu_1^{(k)}\|_\infty \cdot \|\nu_1^{(k)}\|_1 \to 0.
\]

To bound $\|\bar{\nu}_3^{(k)}\|_1$, we use the following lemma from \citet{G-DP}:

\begin{lemma}
For any trade-off function $f$, we have
\[
\bar{\nu}_3(f) \le \nu_3(f) + 3 \nu_1(f) \nu_2(f) + 3 \nu_1(f)^2 \sqrt{\nu_2(f)} + \nu_1(f)^3.
\]
\end{lemma}

% \begin{proof}
% We compute:
% \begin{align*}
% \bar{\nu}_3(f) &= \int_0^1 \left| \log Df(x) + \nu_1(f) \right|^3 dx \\
% &\le \int_0^1 \left( |\log Df(x)| + |\nu_1(f)| \right)^3 dx \\
% &\le \int_0^1 |\log Df(x)|^3 dx + 3\nu_1(f) \int_0^1 |\log Df(x)|^2 dx \\
% &\quad + 3\nu_1(f)^2 \int_0^1 |\log Df(x)| dx + \nu_1(f)^3 \\
% &\le \nu_3(f) + 3\nu_1(f) \nu_2(f) + 3\nu_1(f)^2 \sqrt{\nu_2(f)} + \nu_1(f)^3.
% \end{align*}
% \end{proof}

Applying the lemma to each $f_{ki}$, summing and using Cauchy-Schwarz inequality ($|\sum_i a_ib_i| \leq |\sum_i a_i| \cdot \max |b_i|$), we get:
\[
\|\bar{\nu}_3^{(k)}\|_1 \le \|\nu_3^{(k)}\|_1 + 3 \|\nu_1^{(k)}\|_\infty \|\nu_2^{(k)}\|_1 + 3 \|\nu_1^{(k)}\|_\infty \sqrt{\|\nu_2^{(k)}\|_1 \cdot \|\nu_1^{(k)}\|_2^2} + \|\nu_1^{(k)}\|_\infty^2 \|\nu_1^{(k)}\|_1 \to 0.
\]

Therefore, $\mu_k \to 2M/s$ and $\gamma_k \to 0$ as by assumptions $||\nu_1^{(k)}||_1 \to M$, $||\nu_1^{(k)}||_\infty \to 0$, $||\nu_2^{(k)}||_1 \to s^2$, $||\nu_3^{(k)}||_1 \to 0$, and $||\nu_1^{(k)}||_2^2 \to 0$. Since $G_\mu(\alpha)$ is continuous in both $\alpha$ and $\mu$, we conclude
\[
G_{\mu_k}(\alpha \pm \gamma_k) \pm \gamma_k \to G_{2M/s}(\alpha),
\]
which proves pointwise convergence.

For boundary points, note that $\alpha = 0$ implies $G_{\mu_k}(0 + \gamma_k) - \gamma_k \to 1 = G_{2K/s}(0)$, and similarly at $\alpha = 1$. Now we use the following lemma from \citet{G-DP}, which is stated here for completeness.
\begin{lemma}
Let $\{f_n\} : [a, b] \to \mathbb{R}$ be a sequence of non-increasing functions. If $f_n$ converges pointwise to a function $f : [a, b] \to \mathbb{R}$ and $f$ is continuous on $[a, b]$, then the convergence is uniform.
\end{lemma}
Finally, uniform convergence follows from the above lemma, producing the desired result.
\end{proof}

\section{Identifying Mislabeled Samples in CIFAR-10}\label{mislabel-cifar10}

\begin{wrapfigure}{o}{0.4\textwidth}\vspace{-1em}
  \centering
  \includegraphics[width=\linewidth]{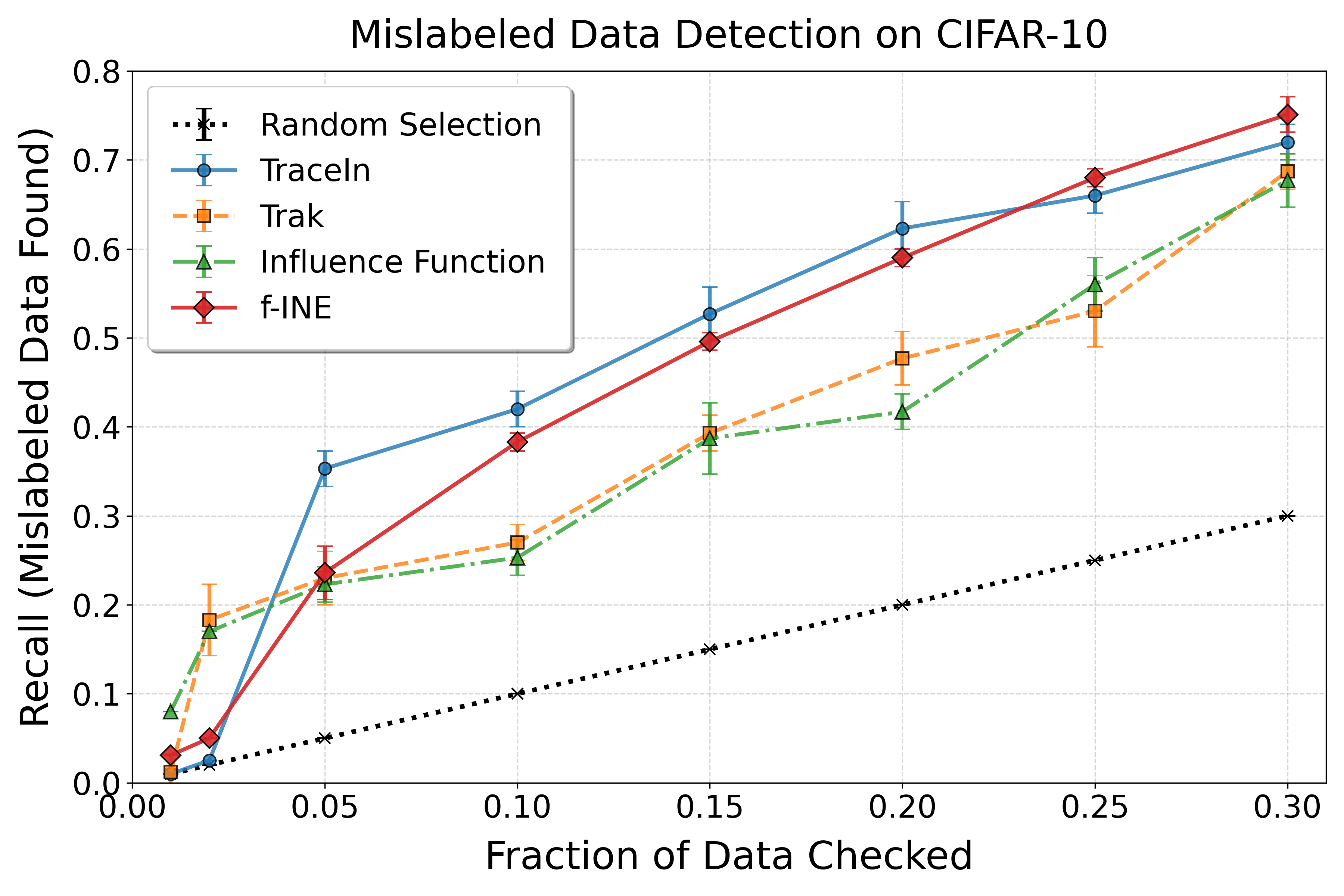}
  \vspace{-1em}
  \caption{Utility of f-INE for finding mislabeled samples on the CIFAR-10 dataset.}
  \label{fig:mislabel-cifar10}
\end{wrapfigure}

To further prove the utility of our method for higher-dimensional settings, we follow the same setup as in section~\ref{mislabel-mnist} on CIFAR-10~\citep{krizhevsky2009learning} dataset using a ResNet-18 model. From Figure~\ref{fig:mislabel-cifar10}, we observe that our method achieves performance comparable to TraceIn. On average, it outperforms TRAK and Influence Functions by 10.21\% and 14.04\%, respectively, in this setting. For this experiment, we report the mean recall values over three random training runs with f-INE achieving the lowest variance of 0.01, whereas TraceIn has a variance of 0.02, TRAK has a variance of 0.03, and Influence Function achieves a variance of 0.02. Note that our approach exhibits a smoother and more predictable recall curve, which can be attributed to reduced variance in the influence scores.

\section{Qualitative Case Study on Model Explainability}
The primary objective of influence-estimation techniques is to identify the most influential training samples for a given test instance. Figure \ref{fig:explainability} presents a qualitative evaluation of our method on mini-ImageNet~\citep{quality-imagenet} dataset using a ResNet-50 model. As shown, for a selected test sample, our approach consistently assigns the highest influence scores to semantically coherent examples within the same class.

\begin{figure*}[!t]
\centering
\setlength{\tabcolsep}{4pt}  % Reducing column spacing

\begin{tabular}{c cccc}
     \raisebox{8ex}{\rotatebox[]{90}{\textbf{Test Sample 1}}} & 
    \includegraphics[width=0.19\textwidth]{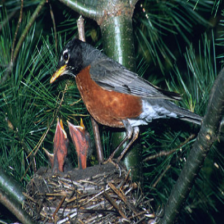} & \includegraphics[width=0.19\textwidth]{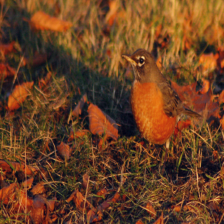}  & \includegraphics[width=0.19\textwidth]{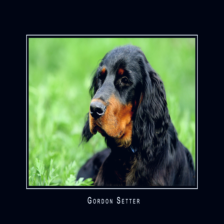}  & \includegraphics[width=0.19\textwidth]{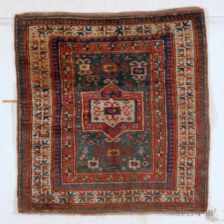} \\
   \raisebox{8ex}{\rotatebox[]{90}{\textbf{Test Sample 2}}} &
  \includegraphics[width=0.19\textwidth]{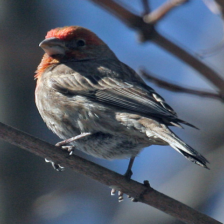} & \includegraphics[width=0.19\textwidth]{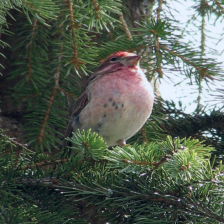}  & \includegraphics[width=0.19\textwidth]{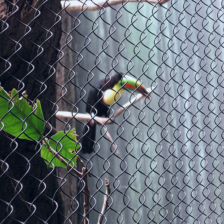}  & \includegraphics[width=0.19\textwidth]{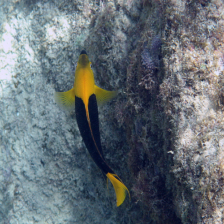} \\
   \raisebox{8ex}{\rotatebox[]{90}{\textbf{Test Sample 3}}} &
  \includegraphics[width=0.19\textwidth]{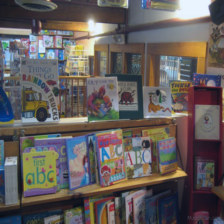} & \includegraphics[width=0.19\textwidth]{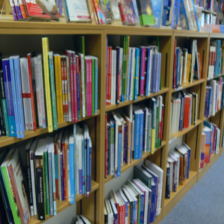}  & \includegraphics[width=0.19\textwidth]{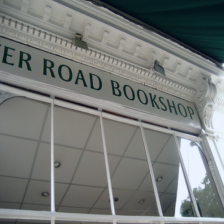}  & \includegraphics[width=0.19\textwidth]{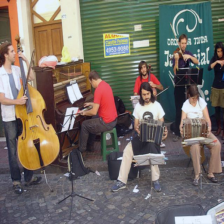}
\end{tabular}
\begin{tikzpicture}
    % 1. Define coordinates (optional, but helps clarity)
    \coordinate (Start) at (19, 0);
    \coordinate (End) at (24, 0);

    % 2. Draw the arrow
    % '->' adds the arrow tip
    % 'midway' puts text in the center
    % 'sloped' aligns text with the line
    % 'above' places it slightly above the line
    \draw[->, thick, blue] (Start) -- (End) node[midway, below, yshift=0pt] {\textbf{decreasing influence}};
\end{tikzpicture}
\vspace{-1em}
\caption{For each test sample shown in the left column, the second through fourth columns display training samples sorted in terms of descending influence scores. We observe that our method consistently assigns the highest influence to semantically coherent, same-class examples. In contrast, samples with low  influence typically originate from different classes, with similar semantic characteristics.}
\label{fig:explainability}
\end{figure*}

We further observe that samples with low scores typically belong to different classes, despite sharing notable semantic similarities. This behavior is intuitively reasonable as training samples that are semantically similar yet originate from different classes are generally considered harmful for the prediction of the given test input.
% \section{Appendix}

\section{Additional Implementation details}
\label{implementation}

\textbullet\ \textbf{Codebase:} For reproducibility, we have included a preliminary version of our code here: \url{https://github.com/Subhodip123/f-INE}. We will keep updating this codebase with more improvements.

\textbullet\ \textbf{Training of LLMs} We use LoRA \citep{hu2021loralowrankadaptationlarge} to efficiently finetune \llama on the poisoned LIMA dataset for 15 epochs using the same setup and hyperparameters as \citet{zhou2023limaalignment}. We save model states across 100 equally spaced checkpoints throughout the training run to collect gradients for influence estimation. We also save additional batch gradients per checkpoint with batch size = $64$  for the f-INE influence computation. Following \citet{LESS}, we apply random projections to store the LoRA gradients with $d=8192$ for memory efficiency. We replicate training across $3$ random seeds.

\textbullet\ \textbf{Models and Computing details:}
We mainly use MLP model and Mobinetv2 model for the classification tasks in these datasets. Our MLP model has only one hidden dimension of size 500. We train this MLP model from scratch on a single NVIDIA A-6000 (48 GB) GPU, achieving test accuracy of 97\% MNIST dataset. MobileNetV2 is a lightweight and efficient convolutional neural network architecture consisting of residual blocks, linear bottlenecks, and depth-wise separable convolution layers.  For training this model, we use the ImageNet pre-trained model weights and change the last layer size based on the classification task. We finetuned the whole model on the downstream datasets on the same GPU.

\textbullet\ \textbf{Hyper-parameter Details:} We have trained all the models for $T=100$ epochs with batch size of 100. We have used Adam optimizer with learning rate  $\eta=0.005$, $\beta_1=0.9$ and $\beta_2 =0.99$. We have used cross-entropy loss for all the classification tasks.

\end{document}